\newcommand{\bw}{\mathbf{w}}
\newcommand{\bv}{\mathbf{v}}
\newcommand{\D}{{\mathcal{D}}}
\newcommand{\Nu}{{\mathcal{N}}}
\newcommand{\W}{\mathcal{W}}
\newcommand{\N}{\mathbb{N}}
\newcommand{\R}{\mathbb{R}}
\newcommand{\Rd}{\mathbb{R}^d}
\theoremstyle{plain}
\newtheorem{theorem}{Theorem}
\newtheorem{lemma}{Lemma}
\newtheorem{definition}{Definition}
\newtheorem{remark}{Remark}
\newcommand{\EXP}{{\mathbf E}}
\newcommand{\PROB}{{\mathbf P}}
\newcommand{\bf}{\normalfont \bfseries}
\newcommand{\it}{\normalfont \itshape}
\begin{document}
	\newcounter{const}\setcounter{const}{0}
	\newcommand{\nconst}{\stepcounter{const}c_{\arabic{const}}}
	\newcommand{\const}{c_{\arabic{const}}}
	\newcommand{\mconst}{\addtocounter{const}{-1}c_{\arabic{const}}\stepcounter{const}}
	\newcommand{\mmconst}[1]{\addtocounter{const}{-#1}c_{\arabic{const}}\addtocounter{const}{#1}}
	
	\renewcommand{\thefootnote}{\fnsymbol{footnote}}
	\newcommand{\F}{{\cal F}}
	\newcommand{\Sp}{{\cal S}}
	\newcommand{\G}{{\cal G}}
	\newcommand{\HH}{{\cal H}}

	\begin{center}
		
		{\LARGE \bf
			Analysis of the expected $L_2$ error of an over-parametrized
			deep neural network estimate learned by gradient descent
			without regularization
		}
		\footnote{
			Running title: {\it Over-parametrized deep neural network estimates}}
		\vspace{0.5cm}

		Selina Drews\footnote{Corresponding author. Tel:
			+49-6151-16-23372, Fax:+49-6151-16-23381}
		and
		Michael Kohler
		
		{\it 
			Fachbereich Mathematik, Technische Universit\"at Darmstadt,
			Schlossgartenstr. 7, 64289 Darmstadt, Germany,
			email: drews@mathematik.tu-darmstadt.de, kohler@mathematik.tu-darmstadt.de}

	\end{center}
	\vspace{0.5cm}
	
	\begin{center}
		\today
	\end{center}
	\vspace{0.5cm}

	\noindent
	{\bf Abstract}\\
	Recent results show that estimates defined by over-parametrized deep neural networks learned by applying gradient descent to a regularized empirical $L_2$ risk are universally consistent and achieve good rates of convergence.
	In this paper, we show that the regularization term is not necessary to obtain similar results.
	In the case of a suitably chosen initialization of the network, a suitable number of gradient descent steps, and a suitable step size we show that an estimate without a regularization term is universally consistent for bounded predictor variables. Additionally, we show that if the regression function is H\"older smooth with H\"older exponent $1/2 \leq p \leq 1$, the $L_2$ error converges to zero with a convergence rate of approximately $n^{-1/(1+d)}$.
	Furthermore, in case of an interaction model, where the regression function consists of a sum of H\"older smooth functions with $d^*$ components, a rate of convergence is derived which does not depend on the input dimension $d$.
	
	\vspace*{0.2cm}
	
	\noindent{\it AMS classification:} Primary 62G08; secondary 62G20.

	\vspace*{0.2cm}
	
	\noindent{\it Key words and phrases:}
	interaction models,
	neural networks,
	nonparametric regression,
	over-parametrization,
	rate of convergence,
	universal consistency.

	\section{Introduction}
	\label{se1}
	
	\subsection{Deep learning}
	In the last decade, deep learning, i.e. the application of deep neural networks to data, has gained a lot of attraction in practical applications as well as in theoretical considerations and achieves impressive results. 
	For instance, in natural science AlphaFold can predict protein structures (c.f. \citet{Bi19})
	or AlphaZero is able to outperform humans in three popular board games (c.f. \citet{Mc22}).
	Furthermore, the chatbot ChatGPT is one of the most capable language models nowadays, consisting of 175 billion parameters (c.f. \citet{ZoKr22}). 
	Another example is the text-to-image model DALL-E-2, which consists of 3.5 billion parameters (c.f. \citet{Hu22}).
	These observations suggest that a large number of parameters is very useful in practical applications of deep learning.\\
	From a theoretical point of view this great success cannot yet be fully explained. However, there exist already some results concerning deep neural networks. For example, results on the rate of convergence of least squares estimates based on deep neural networks have  been derived (cf., e.g., \citet{BaKo19}, \citet{SchHi20}, \citet{KoLa21}). 
	
	Nevertheless, the above results neglect two features which have turned out to be very important for practical applications.
	First, in contrast to the results above, in practice estimates are computed using gradient descent instead of least squares.
	The second property they ignore is that practical applications often use over-parametrized neural networks. A network is called over-parametrized if the number of its parameters is much larger than the sample size.
	
	In this paper we consider a suitable over-parametrized deep neural network estimate computed by gradient descent and analyze its statistical performance in a nonparametric regression setting.

	\subsection{Nonparametric regression}
	We analyze deep neural network estimates in the context of nonparametric regression.
	To do this we consider a random $\R^d\times \R$-valued vector $(X,Y)$ with $\EXP Y^2 < \infty$. Here we are interested in the dependence of the so-called \textit{response variable} $Y$ on the value of the \textit{observation vector} $X$.
	We assume that it is possible to observe data of $(X,Y)$. This dataset is given by
	\[
	\mathcal{D}_n=\{(X_1,Y_1),\dots, (X_n,Y_n)\}
	\]
	where $(X,Y),(X_1,Y_1),\dots, (X_n,Y_n)$ are independent and identically distributed (i.i.d.).
	Our aim is to construct an estimate $m_n(x) := m_n(x,\mathcal{D}_n)$ of the corresponding regression function $m:\R^d\rightarrow\R$ with $m(x)=\EXP\{Y|X=x\}$ such that the $L_2$ error
	\[
	\int |m_n(x)-m(x)|^2 \PROB_X(dx)
	\]
	is small (cf. e.g. \citet{GyKoKrWa02} for a detailed introduction to nonparametric regression).
	
	There exist different modes of convergence. An important property an estimate should satisfy is to be universally consistent.
	Universal consistency means that the $L_2$ error converges to zero for all distributions of $(X,Y)$. But it does not provide any information about a rate of convergence of the $L_2$ error.
	As shown in Theorem 7.2 and Problem 7.2 in \citet{DeGyLu96} a rate of convergence cannot be derived in general since there always exists a distribution such that the $L_2$ error converges to zero arbitrarily slowly.
	Thus, in order to derive non-trivial results about the rate of convergence, we need to restrict the class of regression functions. To do this we use the following definition of $(p,C)$ smoothness.

	\begin{definition}
		\label{se1de1}
		Let $p=q+s$ for some $q \in \N_0$ and $0< s \leq 1$.
		A function $m:\R^d \rightarrow \R$ is called {\bf $(p,C)$-smooth}, if for every $\alpha=(\alpha_1, \dots, \alpha_d) \in
		\N_0^d$ with $\sum_{j=1}^d \alpha_j = q$ the partial derivative
		$\frac{ \partial^q m}{\partial x_1^{\alpha_1} \dots \partial x_d^{\alpha_d}}$
		exists and satisfies
		\[
		\left|\frac{\partial^q m}{\partial x_1^{\alpha_1}\dots\partial x_d^{\alpha_d}}(x)-\frac{\partial^q m}{\partial x_1^{\alpha_1}\dots\partial x_d^{\alpha_d}}(z)\right|\leq C \cdot \| x-z \|^s
		\]
		for all $x,z \in \R^d$, where $\Vert\cdot\Vert$ denotes the Euclidean norm.
		For $p \leq 1$, the function $m$ is called H\"older-smoooth with exponent $p$ and H\"older-constant $C$.
	\end{definition}

	\citet{Stone82} showed that the optimal minimax rate of convergence in nonparametric regression for $(p,C)$-smooth functions is given by $n^{-2p/(2p+d)}$.
	
	\subsection{Main results}

	The goal of this paper is to study over-parametrized deep neural networks trained by gradient descent from a statistical point of view. 
	
	In this context we define an estimate which fits an over-parametrized deep neural network via gradient descent to the data. The output of the network is defined as a linear combination of a large number of fully connected deep neural networks.
	The main feature of this work is that we do not need a regularization term in the empirical $L_2$ risk.
	To derive the results for the empirical $L_2$ risk without a regularization term, we use a new approach to analyze the optimization error. Previous methods used that the gradient of the empirical $L_2$ risk is Lipschitz continuous (\citet{BrKoLaWa21}) or they used the convexity of the empirical $L_2$ risk (\citet{yesh22}) to analyze the optimization error. In our work, we combine these two techniques (cf. Lemma \ref{le1} below).
	
	Firstly, we prove that for a bounded support of the input data the estimate is universally consistent.
	Secondly, we analyze the rate of convergence of the over-parametrized deep neural network estimate.
	In Theorem \ref{th2} below, we show  that if the regression function is $(p,C)$-smooth with $p \in [1/2,1]$, then the expected $L_2$ error of the truncated estimate tends to zero, with a rate of convergence close to
	\[
	n^{-\frac{1}{1+d}}.
	\]
	Furthermore, in case of an interaction model, i.e. if the regression function is a sum of $(p,C)$-smooth functions where each function depends on at most $d^*$ of the $d$ components of $X$, the estimate achieves a rate of convergence which is close to
	\[
	n^{-\frac{1}{1+d^*}}
	\]
	and does not depend on the input dimension $d$.
	
	\subsection{Discussion of related results}
	From a theoretical point of view, there have been many interesting results in the past  years. For instance, in the case of a suitably defined least squares estimate based on a multilayer neural network, \citet{KoKr17} were able to derive a rate of convergence of $n^{-\frac{2p}{2p+d^*}}$ for $(p,C)$-smooth regression functions, which satisfy a model where the functions consist of a composition of functions applied to at most $d^*$ components of its inputs, with $p\leq 1$. This rate does not depend on the input dimension $d$.
	\citet{BaKo19} were able to show the same assertion for $p>1$, provided that the activation function is sufficiently smooth.
	Under the additional condition of the so-called \textit{sparsity} of the networks, \citet{SchHi20} showed the same rate of convergence for neural network estimates with ReLU activation function in case that the regression function satisfies a kind of \- hierarchical composition model.
	The property of sparsity is not necessary to obtain this rate of convergence for a regression function satisfying a hierarchical composition model (\citet{KoLa21}).
	\citet{Su18} and \citet{SuNi19} were able to show that this dimensional reduction can be achieved even under weaker assumptions on the smoothness of the regression function. For this purpose they considered deep learning with ReLU activation function for functions in Besov spaces.

	In classical machine learning theory, the goal was to avoid over-parametrized neural networks. It was thought that these networks lead to an overfitting of the weights to the data and therefore generalize poorly to new data. Therefore, it was desired to obtain a bias-variance trade-off. The model should be complex enough to represent the structure of the underlying data, but simple enough to avoid overfitting.
	However, in practical applications, over-parametrized neural networks are often used very successfully and achieve high accuracy on new test data. This phenomenon has been studied  recently by many different researchers (c.f., e.g. \citet{BeHsMaMa19}, 	\citet{FrChBa22} and the literature cited therein).
	
	\citet{BeHsMaMa19} were able to reconcile classical theory with new findings in practical applications and identified a pattern of performance dependence on unseen model capacity data and the underlying mechanism of emergence. This dependence is represented by the so-called \textit{double descent curve}. This curve shows that if the model capacity is greater than the interpolation threshold, the test risk will decrease again.

	In recent years, much work has focused on the capacity of neural networks. They tried to understand the ability of neural networks to adapt to the training data either on a finite data set (\citet{BuElLeMi20}) or with respect to neural networks trained by gradient methods (\citet{Da19}, \citet{Da20}) or in neural tangent training (\citet{MoZh20}). In addition \citet{BaLoLuTs20} and \citet{BeRaTs18} have already shown that over-parametrized neural networks can achieve good rates of convergence.

	In practical applications the weights of neural network estimates are computed by gradient descent. So from a theoretical point of view it is very interesting to derive theoretical results for such estimates. \citet{NiSu20} showed that the averaged stochastic gradient descent for over-parametrized neural networks with one hidden layer can achieve the  optimal minimax rate of convergence in a neural tangent kernel setting where the smoothness of the regression function is measured by the neural tangent kernel.
	Furthermore, it was shown that the (stochastic) gradient descent can find a global minimum of the empirical $L_2$ risk for suitable over-parametrized deep neural networks (\citet{AlLiSo18}, \citet{KaHu19}, \citet{ArCoGoHu18}, \citet{DuLeTiPoSi17}).

	However, \citet{KoKr21} were able to exhibit that over-parametrized deep neural networks minimizing the empirical $L_2$ risk do not generalize well on new data, in the sense that the networks that minimize the empirical risk do not achieve the optimal minimax rate of convergence.

	\citet{BrKoLaWa21} showed that under the assumption that the Fourier transform of the regression function decays fast enough, a suitably initialized estimate learned by gradient descent achieves (up to a logarithmic factor) a rate of convergence of $n^{-\frac{1}{2}}$ which does not depend on the input dimension $d$.
	This result is related to a classical result of \citet{Ba94}. He was able to derive a similar rate of convergence for a least squares neural network estimate in case that the Fourier transform has a finite first moment, which requires that the function becomes smoother with increasing dimension.
	\citet{KoKr22b} analyzed the same estimates in an over-parametrized setting and were able to derive an improved rate of convergence of $n^{-\frac{2}{3}}$ for $d=1$ using a suitably regularized $L_2$ risk.
	
	The property of over-parametrization allows the gadient descent to find interpolating solutions that implicitly impose a regularization. This over-parametrization leads to benign overfitting (\citet{BaMoRa21}).
	
	\citet{DrKo22} showed the property of universal consistency for over-param\-etrized deep neural network estimates computed by minimizing the $L_2$ risk with gradient descent.
	For these estimates, that are computed minimizing a regularized $L_2$ risk via gradient descent, \citet{KoKr22a} were able to derive a rate of convergence close to $n^{-\frac{1}{1+d}}$ for suitably smooth regression functions. For this, they considered three key ingredients. 
	Firstly, they used that with high probability a subset of the inner initial weights has good properties. For a suitable choice of the outer weights this causes a good approximation property.
	Secondly, for a suitably chosen number of gradient steps, a suitably chosen step size and suitably chosen bounds for the initial weights, they were able to use a metric entropy bound to control the generalizability of the estimate.
	Thirdly, they analyzed the optimization of the empirical $L_2$ risk by optimizing the outer weights during the gradient descent and by using a regularization term.
	
	\citet{KoKr22a} were also able to show a dimension-independent rate of convergence for interaction models close to $n^{-\frac{1}{1+d^*}}$.
	
	Our results are an extension of the results in \citet{DrKo22} and \citet{KoKr22a}. 
	We show that the regularization term is not necessary. The estimate is universally consistent even by minimizing the empirical $L_2$ risk without a regularization term and achieves similar rates of convergence.

	\subsection{Notation}
	Throughout this paper we will use the following notation:
	The sets of natural numbers, real numbers and non-negative real numbers
	are denoted by $\N$, $\R$ and $\R_+$, respectively. For $z \in \R$, we denote by $\lceil z \rceil$ the smallest integer greater than or equal to $z$ and by $\lfloor z \rfloor$ the largest integer less than or equal to $z$.
	The Euclidean norm of $x \in \Rd$
	is denoted by $\|x\|$.
	For a function $f:\R^d \rightarrow \R$,  we refer to 
	\[
	\|f\|_\infty = \sup_{x \in \R^d} |f(x)|
	\]
	as the supremum norm.
	Let $\F$ be a set of functions $f:\Rd \rightarrow \R$,
	let $x_1, \dots, x_n \in \Rd$, set $x_1^n=(x_1,\dots,x_n)$ and let
	$p \geq 1$.
	A finite collection $f_1, \dots, f_N:\Rd \rightarrow \R$
	is called an $L_p$ $\varepsilon$--cover of $\F$ on $x_1^n$
	if for any $f \in \F$ there exists  $i \in \{1, \dots, N\}$
	such that
	\[
	\left(
	\frac{1}{n} \sum_{k=1}^n |f(x_k)-f_i(x_k)|^p
	\right)^{1/p}< \varepsilon.
	\]
	The $L_p$ $\varepsilon$--covering number of $\F$ on $x_1^n$
	is the  size $N$ of the smallest $L_p$ $\varepsilon$--cover
	of $\F$ on $x_1^n$. It is denoted by $\Nu_p(\varepsilon,\F,x_1^n)$.
	
	If $A$ is a subset of $\Rd$ and $x \in \Rd$, then we set
	$1_A(x)=1$ if $x \in A$ and $1_A(x)=0$ otherwise.

	For $z \in \R$ and $\beta>0$ we define 
	$T_\beta z = \max\{-\beta, \min\{\beta,z\}\}$. If $f:\R^d \rightarrow
	\R$
	is a function and $\F$ is a set of such functions, then we set
	$
	(T_{\beta} f)(x)=
	T_{\beta} \left( f(x) \right)$ and
	\[
	T_{\beta} \F = \{ T_\beta f \, : \, f \in \F \}.
	\]

	\subsection{Outline}  
	
	In Section \ref{se2} we define the estimate. The main results concerning
	the universal consistency and the rate of convergence
	of the deep neural network estimate
	learned by gradient descent are presented in Section \ref{se3}. The proofs of the main results are given in Section \ref{se4}.

	\section{Definiton of the estimate}
	\label{se2}
	In order to define the estimate, let $\sigma(x)=1/(1+e^{-x})$ be the logistic squasher.
	For the neural network topology, we choose a linear combination of $K_n$ fully connected neural networks with $L$ layers and $r$ neurons per layer. The output of the network is defined by
	\begin{equation}\label{se2eq1}
		f_\bw(x) = \sum_{j=1}^{K_n} w_{1,1,j}^{(L)} \cdot f_{j,1}^{(L)}(x) 
	\end{equation}
	for some $w_{1,1,1}^{(L)}, \dots, w_{1,1,K_n}^{(L)} \in \mathbb{R}$, where $f_{j,1}^{(L)}$ is recursively defined by
	\begin{equation}
		\label{se2eq2}
		f_{k,i}^{(l)}(x) = \sigma\left(\sum_{j=1}^{r} w_{k,i,j}^{(l-1)} \cdot f_{k,j}^{(l-1)}(x) + w_{k,i,0}^{(l-1)} \right)
	\end{equation}
	for some $w_{k,i,0}^{(l-1)}, \dots, w_{k,i, r}^{(l-1)} \in \mathbb{R}$
	$(l=2, \dots, L)$
	and
	\begin{equation}
		\label{se2eq3}
		f_{k,i}^{(1)}(x) = \sigma \left(\sum_{j=1}^d w_{k,i,j}^{(0)} \cdot x^{(j)} + w_{k,i,0}^{(0)} \right)
	\end{equation}
	for some $w_{k,i,0}^{(0)}, \dots, w_{k,i,d}^{(0)} \in \mathbb{R}$.

	We denote by $f_{k,i}^{(l)}$ the output of neuron $i$ in the $l$-th layer of the $k$-th fully connected network. By 
	$w_{k,i,j}^{(l-1)}$ we denote the weight between neuron $j$ in the $(l-1)$-th layer and neuron
	$i$ in the $l$-th layer of the $k$-th fully connected network.
	
	The number
	of weights of the neural network is given by
	\[
	W_n
	=
	K_n \cdot (1 + (r+1) +(L-2) \cdot r \cdot (r+1) + r \cdot (d+1)).
	\]
	
	To obtain the estimate, we want to learn the weights using gradient descent. For this we initialize the weights
	$\bw^{(0)}=((\bw^{(0)})_{k,i,j}^{(l)})_{k,i,j,l}
	$ by setting 
	\begin{equation}\label{se2eq5}
		(\bw^{(0)})_{1,1,k}^{(L)}=0
		\quad \text{for } k=1, \dots, K_n,
	\end{equation}
	and by choosing all components of $\bw^{(0)}$ such that they are independent.
	We choose the weights $(\bw^{(0)})_{k,i,j}^{(l)}$ with $l \in \{1, \dots, L-1\}$ uniformly distributed on
	$[-20d \cdot (\log n)^2, 20d \cdot (\log n)^2]$ and 
	$(\bw^{(0)})_{k,i,j}^{(0)}$ uniformly distributed on
	$[- 8d \cdot (\log n)^2 \cdot n^\tau, 8d \cdot (\log n)^2 \cdot n^\tau]$ for some fixed $\tau>0$.

		Set 
		\[
		\lambda_n
		=
		\frac{1}{t_n}
		\]
		and compute
		\[
		\bw^{(t+1)}=\bw^{(t)} - \lambda_{n}\cdot (\nabla_\bw
		F_n)(\bw^{(t)})
		\]
		for $t=0, \dots, t_n-1$
		where
		\[
		F_n(\bw) = \frac{1}{n} \sum_{i=1}^n |f_{\bw}(X_i) -Y_i|^2 
		\]
		is the empirical $L_2$ risk of the network $f_{\bw}$ on the training data.
		The number of gradient descent steps $t_n$ will be chosen in Section \ref{se3} below.
		
		The estimate is then defined by
		\[
		m_{n}(x) = T_{\beta_n} f_{\bw^{(t_n)}} (x),
		\]
		where $\beta_n = \nconst \cdot \log n$.
		
		Because of (\ref{se2eq5}) we have
	\[
	F_n(\bw^{(0)}) = \frac{1}{n} \sum_{i=1}^n |Y_i|^2.
	\]

	\section{Main results}
	\label{se3}
	\subsection{Universal consistency}
	Our first result is the following theorem which presents the universal consistency for bounded $X$ of an estimate learned by gradient descent using the empirical $L_2$ risk without a regularization term.
	
	\begin{theorem}
		\label{th1}  
		Let $\sigma(x)=1/(1+e^{-x})$ be the logistic squasher, and let
		$K_n, L, r \in \N$ and $\tau \in \R_+$.
		Furthermore, assume that $L \geq 2$, $r \geq 2d$, $\tau=1/(d+1)$,
		\begin{equation}
			\label{th1eq1}
			\frac{K_n}{n^\kappa} \rightarrow 0 \quad (n \rightarrow \infty)
		\end{equation}
		for some $\kappa>0$,
		\begin{equation}
			\label{th1eq2}
			\frac{K_n}{n^{r+2}}
			\rightarrow \infty \quad (n \rightarrow \infty), 
		\end{equation}
		and $\beta_n= c_1 \cdot \log n$ for some $\const>0$. Let the estimate $m_{n}$ be defined as in Section \ref{se2} with
		\begin{eqnarray}	\label{th1eq3}
			t_n = \lceil \nconst \cdot L_n  \rceil
		\end{eqnarray}
	for some $\const\geq1$ and $L_n > 0$ which satisfies
	\begin{align*}
		 L_n \geq K_n^{3/2} \cdot (\log n)^{6L+5} .
	\end{align*}
		
		Then we have
		\[
		\EXP \int |m_n(x)-m(x)|^2 \PROB_X (dx)
		\rightarrow 0
		\quad
		(n \rightarrow \infty)
		\]
		for every distribution of $(X,Y)$ where $supp(X)$ is bounded and $\EXP Y^2 < \infty$.
	\end{theorem}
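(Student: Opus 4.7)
The approach is the standard decomposition of the $L_2$ error into a generalization gap, an optimization gap, and an approximation error. Writing $\bw^*$ for a comparison weight vector yet to be chosen, one has
\[
\EXP\int |m_n(x)-m(x)|^2\, \PROB_X(dx) \le T_1 + T_2 + T_3,
\]
with $T_1$ a uniform deviation term over the class of networks reachable during training, $T_2 = 2\,\EXP\bigl[F_n(\bw^{(t_n)}) - F_n(\bw^*)\bigr]$, and $T_3 = 2\,\EXP\bigl[F_n(\bw^*) - F_n(m)\bigr]$. The three terms are handled by covering number bounds, a random-features approximation argument, and the new optimization analysis provided by Lemma~\ref{le1}.

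For $T_1$, I would first derive a priori bounds $\|\bw^{(t)}\|_\infty \le \mathrm{poly}(n)$ valid uniformly for $t \le t_n$: the initialization is polynomially bounded in $n$, and because $\sigma$ and $\sigma'$ are bounded by $1$, $(\nabla F_n)(\bw)$ grows only polynomially in the current weight norm and in $\max_i |Y_i|$, so the choice $\lambda_n = 1/t_n$ keeps every iterate inside a polynomial ball around $\bw^{(0)}$. On this ball the class of truncated networks $T_{\beta_n} f_{\bw}$ has $L_1$-covering number polynomial in $n$, so a standard empirical process bound in the spirit of Theorem~11.3 in \citet{GyKoKrWa02} gives
\[
\EXP \int |T_{\beta_n} f_{\bw^{(t_n)}} - m|^2\, d\PROB_X \le 2\,\EXP\bigl[F_n(\bw^{(t_n)}) - F_n(m)\bigr] + o(1),
\]
where the condition $K_n/n^\kappa \to 0$ is what keeps the covering number from overwhelming $n$.

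For $T_3$, I would exploit $K_n/n^{r+2}\to\infty$ together with the large input-layer scale $n^\tau$. Since $X$ has bounded support, any continuous $\tilde m$ approximating $m$ in $L_2(\PROB_X)$ can be approximated uniformly by a bounded linear combination of depth-$L$ logistic networks with prescribed inner weights. Among $K_n$ independent random inner configurations the probability that none falls within a polynomially small neighbourhood of any of the finitely many target configurations is bounded by $(1 - c\, n^{-(r+2)})^{K_n} \to 0$. Setting the outer coordinates of $\bw^*$ on a chosen sub-collection of indices to the matching coefficients and leaving $\bw^*$ equal to $\bw^{(0)}$ elsewhere produces a comparison vector with $\|\bw^* - \bw^{(0)}\|^2 \le \mathrm{poly}(n)$ and $F_n(\bw^*) \to \EXP|Y - m(X)|^2$.

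The main obstacle is $T_2$, where the absence of a regularizer means the usual convex-analysis comparison is not immediately available. Here the plan is to invoke Lemma~\ref{le1}: smoothness of $F_n$ yields the descent inequality
\[
F_n(\bw^{(t+1)}) \le F_n(\bw^{(t)}) - \tfrac{\lambda_n}{2}\,\|(\nabla F_n)(\bw^{(t)})\|^2,
\]
while the fact that $F_n$ is convex in the outer weights with the inner weights held fixed at $\bw^{(t)}$ yields
\[
F_n(\bw^{(t)}) - F_n(\bw^*) \le \langle (\nabla F_n)(\bw^{(t)}),\, \bw^{(t)} - \bw^*\rangle + R_t,
\]
with a remainder $R_t$ encoding the drift of the inner weights along the trajectory, itself $O(\lambda_n\, t \cdot \mathrm{poly}(n))$ by the descent inequality. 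Telescoping the second estimate with the gradient descent identity and using $\lambda_n t_n = 1$ leads to
\[
F_n(\bw^{(t_n)}) - F_n(\bw^*) \le \frac{\|\bw^{(0)} - \bw^*\|^2}{2\, \lambda_n t_n} + o(1) = \frac{\|\bw^{(0)} - \bw^*\|^2}{2\, L_n} + o(1),
\]
and the hypothesis $L_n \ge K_n^{3/2} (\log n)^{6L+5}$ is precisely what is needed to absorb the polynomial size of $\|\bw^{(0)} - \bw^*\|^2$. Combining $T_1, T_2, T_3$ and letting $\tilde m \to m$ in $L_2(\PROB_X)$ concludes the proof.
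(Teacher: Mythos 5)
Your high-level architecture is right: the three ingredients are indeed a covering-number bound for the generalization gap, a random-features/representation-guessing argument for approximation (driven by $K_n/n^{r+2}\to\infty$), and the optimization bound of Lemma~\ref{le1}. The decomposition you propose is coarser than the paper's six-term split (the paper truncates $Y$ at $\beta_n$ in several terms and isolates the bad event $A_n^c$), but that is a matter of bookkeeping, not substance.

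There is, however, a genuine gap in your treatment of the optimization term $T_2$, and it concerns exactly the point that makes the unregularized analysis nontrivial. You write that telescoping with $\lambda_n t_n = 1$ gives $F_n(\bw^{(t_n)}) - F_n(\bw^*) \le \|\bw^{(0)}-\bw^*\|^2/(2\lambda_n t_n) + o(1) = \|\bw^{(0)}-\bw^*\|^2/(2L_n) + o(1)$ and that $L_n \ge K_n^{3/2}(\log n)^{6L+5}$ absorbs a polynomial $\|\bw^{(0)}-\bw^*\|^2$. This is wrong on two counts. First, arithmetically, $\lambda_n = 1/t_n$ means $\lambda_n t_n = 1$, so the term is $\|\bw^{(0)}-\bw^*\|^2/2$, not divided by $L_n$; there is no $1/L_n$ factor to hide behind. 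Second, and more fundamentally, $\|\bw^{(0)}-\bw^*\|^2$ is not merely ``polynomial'' and in need of absorption; it must itself be $o(1)$. In the paper this happens because $\bw^*$ agrees with $\bw^{(0)}$ in all inner weights and differs only in $N_n(K^2+1)^d$ outer weights, each of magnitude at most $\|\bar m\|_\infty/N_n$ (Lemma~\ref{le6}), while the initialization (\ref{se2eq5}) sets all outer weights to zero; hence $\|\bw^{(0)}-\bw^*\|^2 \le (K^2+1)^d\|\bar m\|_\infty^2/N_n \to 0$ as $N_n\to\infty$. The role of the lower bound $L_n \ge K_n^{3/2}(\log n)^{6L+5}$ is entirely different: it guarantees, via Lemmas~\ref{le2} and~\ref{le3}, that conditions (\ref{le1eq1}) and (\ref{le1eq2}) of Lemma~\ref{le1} hold on the relevant ball, i.e.\ that the step size $1/t_n \le 1/L_n$ is small enough for the descent inequality and for the iterates to stay inside the ball where the covering-number bound applies. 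Finally, your ``remainder $R_t$'' for inner-weight drift corresponds to the term $D_n\|u^*\|\sqrt{2F(u_0,v_0)}$ in Lemma~\ref{le1}, and making that small again uses $\|u^*\| = O(\sqrt{(K^2+1)^d/N_n})$ together with the explicit bound $D_n = O((\log n)^{4L+2})$ derived from condition (\ref{le1eq3}); your sketch leaves this unexplained. Without the $1/N_n$ scaling of the outer weights all three of these contributions blow up, so this is not a presentational detail but the central mechanism.

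Two smaller remarks: your a priori bound on $\|\bw^{(t)}\|_\infty$ should come from Lemma~\ref{le1}'s second conclusion, $\|u_t-u_0\|,\|v_t-v_0\|\le\sqrt{2F_n(\bw^{(0)})}$, which on the event $A_n$ is $O((\log n)^{3/2})$; your ``polynomial ball via bounded gradient times small step'' reasoning gives a much looser bound and, more importantly, does not by itself show that the iterates stay inside the set where Lemmas~\ref{le2} and~\ref{le3} apply, which is needed for the induction in Lemma~\ref{le1}. And in your $T_3$ step the failure probability for one network is $\approx n^{-r(d+1)\tau-0.5} = n^{-r-0.5}$ (using $\tau=1/(d+1)$), which is why $K_n \gg n^{r+2} \ge N_n(K^2+1)^d\, n^{r+1}$ suffices; the exponent $r+2$ does not appear directly in the per-network probability as your sketch suggests.
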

	
	\begin{remark}
		The estimate is over-parametrized in the sense that the number of its parameters is much larger than the sample size. This is due to condition (\ref{th1eq2}), which requires that $K_n$ is asymptotically larger than $n^{r+2} \geq n^{2d+2}$.
		Theorem \ref{th1} shows that using the empirical $L_2$ risk without a regularization term, a suitable large number of steps, and a step size which is equal to the reciprocal of the number of steps provides a good generalization of new independent data.
	\end{remark}
	
	\begin{remark}
		The proof uses that the inner weights are chosen with high probability such that for properly chosen outer weights of the neural networks, the corresponding neural network has a small empirical $L_2$ risk.  Hence, this neural network is based on representation guessing instead of representation learning.
	\end{remark}

	\subsection{Rate of convergence for $(p,C)$-smooth regression functions}
	
	Besides the result about the universal consistency of the estimate, it is interesting how fast the expected $L_2$ error converges to 0. Therefore, in the next theorem a rate of convergence for $(p,C)$-smooth functions is derived.
	\begin{theorem}
		\label{th2}  
		Let $n \in \N$ and let $(X,Y),(X_1,Y_1),\dots, (X_n,Y_n)$ be independent and identically distributed $\R^d\times \R$ valued random variables which satisfy $supp(X)\subseteq [0,1]^d$ and 
		\begin{eqnarray}
			\EXP\left\{e^{\nconst\cdot Y^2}\right\} < \infty
		\end{eqnarray}
		for some $\const>0$.
		Assume that the corresponding regression function $m(x) = \EXP\{Y|X=x\}$ is $(p,C)$-smooth for some $1/2\leq p \leq 1$ and some $C>0$.
		
		Let $\sigma(x)=1/(1+e^{-x})$ be the logistic squasher, let $L,r \in \N$ with $L \geq 2$ and $r \geq 2d$. Set $\beta_n= c_1 \cdot \log n$ for some $c_1>0$,
		\begin{align*}
			K_n& = n^{6d+r+2},\\
			\tau& = \frac{1}{1+d}.
		\end{align*}
		Define the estimate $m_{n}$ as in Section \ref{se2} with
			\begin{eqnarray}	\label{th1eq3}
					t_n = \lceil \nconst \cdot L_n  \rceil
		\end{eqnarray}
		for some $\const\geq1$ and $L_n > 0$ which satisfies
		\begin{align*}
			L_n \geq K_n^{3/2} \cdot (\log n)^{6L+2}
		\end{align*}
		and assume that
		\begin{eqnarray}
			c_1\cdot c_3 \geq 2.
		\end{eqnarray}
		Then we have for any $\epsilon > 0$ 
		\begin{eqnarray*}
			\EXP \int |m_n(x)-m(x)|^2\mathbf{P}_X(dx) \leq \nconst \cdot n^{- \frac{1}{1+d}+\epsilon}.
		\end{eqnarray*}
	\end{theorem}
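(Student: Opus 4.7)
The strategy is the classical three-way decomposition of the $L_2$ error into an approximation term, a generalization (estimation) term bounded via metric entropy, and an optimization term controlled by the gradient descent dynamics. Since $Y$ is only assumed to be sub-Gaussian, I would first truncate $Y$ at level $\beta_n = c_1 \log n$. The assumption $c_1 \cdot c_3 \geq 2$ together with the sub-Gaussian bound makes $\PROB(|Y| > \beta_n)$ decay faster than any polynomial in $n$, so the contribution of the untruncated part is negligible and the analysis effectively reduces to bounded response variables, which is consistent with the truncation in $m_n = T_{\beta_n} f_{\bw^{(t_n)}}$.

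For the approximation step I would argue, as in \citet{KoKr22a} and \citet{DrKo22}, that with probability close to one over the random initialization a sufficiently large subset of the $K_n$ subnetworks $f^{(L)}_{j,1}$ produced by $\bw^{(0)}$ already contains enough features to approximate any $(p,C)$-smooth function on $[0,1]^d$. Concretely, I would build a reference weight vector $\bw^*$ that agrees with $\bw^{(0)}$ on all inner weights and differs only in the outer-layer weights $w^{(L)}_{1,1,j}$, choosing them on the order of $1/K_n$ on a selected subset and zero elsewhere. The scale $\tau = 1/(1+d)$ for the input-layer weights is exactly what drives the resulting approximation error down to the order $n^{-2p/(1+d)}$, matching $n^{-1/(1+d)}$ when $p = 1/2$. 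By construction $\|\bw^* - \bw^{(0)}\|^2$ is of order $1/K_n$ up to logarithmic factors, which is essential for the next step.

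The optimization step is the key novel contribution and would use Lemma \ref{le1}, which combines the convexity of $F_n$ in the outer weights with the Lipschitz continuity of $\nabla F_n$ in all weights. Since $(\bw^{(0)})_{1,1,j}^{(L)} = 0$, the inner gradient vanishes at initialization, and together with the small step size $\lambda_n = 1/t_n$ and the very large choice $t_n \geq K_n^{3/2}(\log n)^{6L+2}$ one obtains an a priori bound that confines all inner weights near initialization throughout training. The partial-convexity argument then yields a bound of the form
\[
F_n(\bw^{(t_n)}) - F_n(\bw^*) \leq \frac{\|\bw^{(0)} - \bw^*\|^2}{2 \lambda_n t_n} + (\text{drift terms from inner-weight motion}),
\]
both of which can be made far smaller than the target rate by the scaling choices.

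The same trajectory bound also controls the supremum norm of $f_{\bw^{(t)}}$ and the Lipschitz constant in $\bw$ of the sigmoid network class traversed by the algorithm, giving a polynomial-in-$n$ covering number for the realized class of truncated networks. A standard empirical-process bound (e.g.\ Theorem 11.4 of \citet{GyKoKrWa02}) then produces an estimation error of order $W_n (\log n)^{c} / n = n^{-1 + o(1)}$, which is dominated by the approximation rate. Summing the three contributions yields the claimed bound $n^{-1/(1+d) + \epsilon}$. The main obstacle, in my view, is the optimization step: without a regularizer to keep the weights bounded, the a priori trajectory bound and the partial-convexity analysis must be threaded together delicately over all $t_n = n^{\Omega(d)}$ iterations, which is exactly what Lemma \ref{le1} is designed to make possible.
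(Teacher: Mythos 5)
Your high-level road map matches the paper's strategy: truncate $Y$ at $\beta_n=c_1\log n$ (using $c_1 c_3\geq 2$ to kill the tail), produce a reference vector $\bw^*$ that agrees with $\bw^{(0)}$ on the inner weights and has a sparse choice of outer weights, use Lemma~\ref{le1} to show gradient descent confines the trajectory near $\bw^{(0)}$ and pushes $F_n(\bw^{(t_n)})$ down to roughly $F_n(\bw^*)$, and pair this with a metric-entropy bound for the generalization term. The paper packages all of this as an oracle-type inequality (Theorem~\ref{th4}), then in the proof of Theorem~\ref{th2} simply instantiates it via Lemma~\ref{le7} with $l=\lfloor\frac{1}{1+d}\log_2 n\rfloor$, $\delta=n^{-1/(1+d)}$, and $\tilde K_n\approx(\log n)^3 n^{6d/(1+d)}$. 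So the decomposition and the role you assign to Lemma~\ref{le1} are correct in outline.

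However, there is one genuine and consequential misconception in the generalization step. You claim the estimation error is ``of order $W_n(\log n)^c/n = n^{-1+o(1)}$''. In the over-parametrized regime this parameter-counting heuristic fails completely: here $W_n\approx K_n\cdot r\cdot(r+1)\cdot L = n^{6d+r+2}\cdot\text{const}$, so $W_n/n$ is polynomially \emph{large}, not small. The paper never bounds the covering number in terms of the number of weights. Instead, Lemma~\ref{le4} gives a covering number whose exponent depends only on the $\ell_1$-norm of the outer weights ($C\approx K_n(\log n)^{3/2}$ with exponent $d/k$, made negligible by taking the smoothness order $k$ of the activation large), on the inner-weight magnitude $B\approx(\log n)^2$, and crucially on the input-weight scale $A\approx n^\tau(\log n)^2$. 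The resulting exponent is of order $n^{\tau d+\epsilon/2}=n^{d/(1+d)+\epsilon/2}$, and after Theorem 11.4 of \citet{GyKoKrWa02} the estimation term is $c\cdot n^{\tau d+\epsilon}/n = c\cdot n^{-1/(1+d)+\epsilon}$ --- the \emph{same} order as the approximation term, not dominated by it. Your proposal as written would not recover the stated rate because it relies on an entropy bound that is unavailable here.

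Two smaller inaccuracies worth fixing: (i) $\|\bw^*-\bw^{(0)}\|^2$ is not $O(1/K_n)$ but equals $M_n^2=\sum_k|w^{(L)}_{1,1,k}|^2$, which Lemma~\ref{le7} bounds by $c\cdot 2^{-2dl}\approx n^{-2d/(1+d)}$; for $d\geq 1$ this is still below the target rate, but the magnitude you quote is wrong by many polynomial orders. (ii) Your approximation rate ``$n^{-2p/(1+d)}$'' is the right scaling for the $\delta^{2p}$ term only; the $l^2\delta\approx(\log n)^2 n^{-1/(1+d)}$ term in Lemma~\ref{le7} dominates for $p>1/2$, so the approximation error is $\tilde O(n^{-1/(1+d)})$ for all $p\in[1/2,1]$, which is why the theorem's rate does not improve for $p>1/2$.
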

	
	\begin{remark}
		According to \citet{Stone82} the optimal minimax rate of convergence for $(p,C)$-smooth functions is $n^{-\frac{2p}{2p+d}}$.
		Thus, the rate of convergence derived from Theorem \ref{th2} is almost optimal for $p = \frac{1}{2}$. Unfortunately, if $p > \frac{1}{2}$, the rate of convergence derived above is not almost optimal. We assume that this is not a property of the estimate, but rather a consequence of our proof.
	\end{remark}

	\subsection{Rate of convergence in an interaction model}\label{se3.3}

	The aim of this subsection is to modify the estimate defined above in order to obtain a rate of convergence that does not depend on $d$.
	For this we assume that the regression function satisfies
	\[
	m(x)= \sum_{I \subseteq \{1, \dots, d\} \, : \, |I|=d^* } m_I (x_I),
	\]
	with $1 \leq d^* < d$. The functions $m_I: \R^{d^*} \rightarrow \R$
	$(I \subseteq \{1, \dots, d\}$, $ |I|=d^* )$ are $(p,C)$-smooth
	functions and we use the notation
	\[
	x_I = (x^{(j_1)}, \dots, x^{(j_{d^*})})
	\]
	for $I=\{j_1, \dots, j_{d^*} \}$.

	The neural network is then given by
	\[
	f_{\bw}(x) = \sum_{I \subseteq \{1, \dots, d\} \, : \, |I|=d^* } f_{\bw_I} (x_I)
	\]
	where $f_{\bw_I}$ is defined by (\ref{se2eq1})--(\ref{se2eq3}) with
	$d$ replaced by $d^*$ and weight vector $\bw_I$, where
	\[
	\bw= \left( \bw_I \right)_{I \subseteq \{1, \dots, d\}, |I|=d^* }.
	\]
	We initialize the weights
	$\bw^{(0)}=(((\bw^{(0)}_I)_{k,i,j}^{(l))})_{k,i,j,l})_{I \subseteq \{1, \dots, d\}, |I|=d^* }
	$ by setting 
	\[
	(\bw^{(0)}_I)_{1,1,k}^{(L)}=0
	\quad (k=1, \dots, K_n,  I \subseteq \{1, \dots, d\},  |I|=d^* )
	\]
	and by choosing all weights $(\bw^{(0)}_I)_{k,i,j}^{(l)}$ such that they are independent. We choose the weights $(\bw^{(0)}_I)_{k,i,j}^{(l)}$ with $l \in \{1, \dots, L-1\}$ uniformly distributed on the interval ${[-20d^* \cdot (\log n)^2, 20d^* \cdot (\log n)^2]}$ and we
	choose
	$(\bw^{(0)}_I)_{k,i,j}^{(0)}$ uniformly distributed on the interval
	${[-8d \cdot (\log n)^2 \cdot n^\tau,8d \cdot (\log n)^2 \cdot n^\tau]}$ where $\tau=\frac{1}{1+d^*}$
	($I \subseteq \{1, \dots, d\}$ with $|I|=d^*$).
	
	Similar as above we define the estimate as follows:
	Set 
		\[
		\lambda_n
		=
		\frac{1}{t_n}
		\]
		and compute
		\[
		\bw^{(t+1)}=\bw^{(t)} - \lambda_{n}\cdot (\nabla_\bw
		F_n)(\bw^{(t)})
		\]
		for $t=0, \dots, t_n-1$
		where
		\[
		F_n(\bw) = \frac{1}{n} \sum_{i=1}^n |f_{\bw}(X_i) -Y_i|^2 
		\]
		is the empirical $L_2$ risk of the network $f_{\bw}$ on the training data.
		The number  of gradient descent steps $t_n$ will be chosen in Theorem \ref{th3} below.
		
		The estimate is then defined by
		\[
		m_{n}(x) = T_{\beta_n} f_{\bw^{(t_n)}} (x)
		\]
		where $\beta_n = c_1 \cdot \log n$.
		
	\begin{theorem}
		\label{th3}  
		Let $d \in \N, d^* \in \{1,\dots,d\}, \ 1/2 \leq p \leq 1$. Furthermore, let $C>0$, $n\in \N$ and let $(X,Y),(X_1,Y_1),\dots,(X_n,Y_n)$ be independent and identically distributed $\R^d\times \R$ valued random variables such that $supp(X) \subseteq [0,1]^d$ and
		\[
		\EXP\{e^{c_3\cdot Y^2}\} < \infty
		\]
		holds for some $\mmconst{2} > 0$.
		Assume that the corresponding regression function $m(x) = \EXP\{Y|X=x\}$ satisfies
		\[
		m(x) = \sum_{I \subseteq \{1,\dots,d\}:|I|=d^*} m_I(x_I) \qquad (x \in [0,1]^d)
		\]
		for some $(p,C)$-smooth functions $m_I:\R^{d^*} \rightarrow \R$ ($I\subseteq \{1,\dots,d\},|I|=d^*$).
		
		Let $\sigma(x)=1/(1+e^{-x})$ be the logistic squasher. Let $L,r \in \N$ with $L \geq 2$ and $r \geq 2d^*$. Set $\beta_n= c_1 \cdot \log n$,
		\begin{align*}
			K_n =n^{6d^*+r+2} \qquad \text{and} \qquad 
			\tau = \frac{1}{1+d^*}.
		\end{align*}
		Define the estimate $m_{n}$ as in Section \ref{se3.3} with 
		\begin{eqnarray*}
				t_n = \lceil \nconst \cdot L_n\rceil 
		\end{eqnarray*}
	for some $\const\geq1$ and $L_n > 0$ which satisfies
	\begin{align*}
		L_n \geq K_n^{3/2} \cdot (\log n)^{6L+2}
	\end{align*}
	and assume that
	\begin{eqnarray}
		c_1\cdot c_3 \geq 2.
		\end{eqnarray}
		
		Then we have for any $\epsilon > 0$
		\begin{eqnarray*}
			\EXP \int |m_{n}(x)-m(x)|^2\mathbf{P}_X(dx) \leq \nconst \cdot n^{- \frac{1}{1+d^*}+\epsilon}.
		\end{eqnarray*}
	\end{theorem}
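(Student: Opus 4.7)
The plan is to follow the proof of Theorem~\ref{th2} subset by subset, exploiting that each sub-network $f_{\bw_I}$ has exactly the architecture of the single network from Section~\ref{se2} with $d$ replaced by $d^*$, and that each $m_I$ is $(p,C)$-smooth on $\R^{d^*}$. Starting from the standard decomposition of the expected $L_2$ error into generalization, approximation and optimization contributions, the hypothesis $\EXP e^{c_3 Y^2} < \infty$ combined with $c_1 c_3 \geq 2$ and $\beta_n = c_1 \log n$ allows one to truncate $Y_i$ at $\beta_n$ at the cost of a term of order $n^{-A}$ for every $A > 0$, so one can work throughout with bounded responses.

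For the approximation step I would show that with probability tending to one over $\bw^{(0)}$ there is a weight vector $\bw^*$ differing from $\bw^{(0)}$ only in the outer weights $w_{1,1,k}^{(L)}$ (for $k=1,\dots,K_n$ and each $I$ with $|I|=d^*$) such that
\[
\frac{1}{n} \sum_{i=1}^n \bigl( f_{\bw^*}(X_i) - m(X_i) \bigr)^2 \leq c \cdot n^{-\frac{1}{1+d^*}+\epsilon}.
\]
The argument for each $I$ mirrors the one used for Theorem~\ref{th2}: because the first-layer initial weights $(\bw_I^{(0)})_{k,i,j}^{(0)}$ are uniform on an interval of length of order $n^{\tau}$ with $\tau = 1/(1+d^*)$ and the intermediate-layer weights are uniform on polylogarithmic intervals, a counting argument over the $K_n = n^{6d^*+r+2}$ random sub-networks of the $I$-th group produces, with high probability, a sub-family of size of order $n^{d^*}$ of ``good'' ridge features that, after a suitable choice of the outer weights, realize a piecewise-constant-type approximation on a cubic mesh of side $n^{-1/(1+d^*)}$. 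The assumption $p \geq 1/2$ then yields $L_\infty$ approximation error $n^{-p/(1+d^*)}$ of $m_I$, whose square is of the claimed order; summing over the $\binom{d}{d^*}$ subsets $I$ incurs only a dimension-dependent constant.

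For generalization and optimization I would argue as in the proof of Theorem~\ref{th2}. Since $\lambda_n t_n = 1$ and the sub-Gaussian tails of $Y$ control $\|\nabla F_n\|$, every iterate $\bw^{(t)}$ stays in a ball of polynomial-in-$n$ radius around $\bw^{(0)}$; on such a ball, a bound of order $W_n (\log n)^{O(1)}$ on $\log \Nu_1(1/n, T_{\beta_n}\F, x_1^n)$ for the relevant class $\F$ of reachable networks, combined with Theorem~11.4 of \citet{GyKoKrWa02}, absorbs the generalization contribution into the $n^{\epsilon}$ factor. The optimization error $F_n(\bw^{(t_n)}) - F_n(\bw^*)$ is handled by Lemma~\ref{le1}, whose hypotheses---local Lipschitzness of $\nabla F_n$ along the trajectory combined with a convex-in-outer-weights argument---deliver an upper bound of order $n^{-1/(1+d^*)+\epsilon}$ once $t_n \gtrsim K_n^{3/2}(\log n)^{6L+2}$. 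The main obstacle, inherited from Theorem~\ref{th2}, is to verify the hypotheses of Lemma~\ref{le1} uniformly for all $t \leq t_n$ in the presence of $\binom{d}{d^*}$ coupled sub-networks; this I would overcome by observing that the sub-networks have independent initial weights and act on disjoint coordinate projections of $X$, so the joint Lipschitz and Hessian bounds on $F_n$ pick up only a $\binom{d}{d^*}$-factor, which is a constant in $n$, while the convexity of $F_n$ in each collection of outer weights is preserved under summation over $I$.
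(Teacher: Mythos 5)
Your overall plan---decompose into truncation, generalization, approximation and optimization terms, treat each subset $I$ exactly as in Theorem~\ref{th2} with $d$ replaced by $d^*$, and use Lemma~\ref{le1} for the optimization error---matches the route the paper takes (apply Theorem~\ref{th4} with the interaction-model covering bound Lemma~\ref{le8} and the approximation bound Lemma~\ref{le7}). Two of your supporting claims, however, are not correct and one of them is load-bearing.

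The serious gap is your entropy bound. You assert that $\log \Nu_1(1/n, T_{\beta_n}\F, x_1^n)$ is of order $W_n\,(\log n)^{O(1)}$ and that, plugged into Theorem~11.4 of \citet{GyKoKrWa02}, this absorbs the generalization contribution into $n^\epsilon$. With $K_n = n^{6d^*+r+2}$ and $W_n \asymp K_n$, a bound of that form gives a generalization term of order at least $W_n/n = n^{6d^*+r+1}$, which diverges. A parameter-count (VC-type) entropy bound cannot work in the over-parametrized regime at all; it is precisely the point of Lemma~\ref{le4} and Lemma~\ref{le8} that the covering number of the reachable class is controlled by the $\ell_1$-norm constraint $\sum_j |w_{1,1,j}^{(L)}| \leq C$ on the outer weights together with the boundedness of the inner weights, and not by the number of parameters. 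Concretely, the paper's bound has log-covering number roughly $n^{\tau d^*}\,(\log n)^{O(1)}\,(K_n/\epsilon)^{d^*/k}$, where the $K_n^{d^*/k}$ factor can be made $\lesssim n^{\epsilon/2}$ by choosing the differentiability order $k$ of the activation large enough; since $\tau d^* = d^*/(1+d^*) < 1$, the resulting generalization term is of order $n^{\tau d^*-1+\epsilon} = n^{-1/(1+d^*)+\epsilon}$. Without this $K_n$-free (up to the arbitrarily small power) covering bound, the argument does not close. A related, smaller imprecision: Lemma~\ref{le1} keeps the iterates within a polylogarithmic ball around $\bw^{(0)}$ (radius $2\sqrt{F_n(\bw^{(0)})}+1 = O((\log n)^{3/2})$ on the good event), not merely a polynomial-in-$n$ ball; the polylog radius is what is actually used when instantiating $A,B,C$ in Lemma~\ref{le8}.

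Second, your justification for the extra $\binom{d}{d^*}$ factor in the Lipschitz/gradient bounds---that the sub-networks ``act on disjoint coordinate projections of $X$''---is false: for overlapping index sets $I, I'$ with $|I|=|I'|=d^*$ the projections $x_I$ and $x_{I'}$ share coordinates. The conclusion is nonetheless fine, but the correct reason is simply that $F_n$ is a sum over the $\binom{d}{d^*}$ sub-networks (with independent weights), so triangle/Cauchy--Schwarz inequalities give a constant $\binom{d}{d^*}$ blow-up regardless of any disjointness; and convexity of $F_n$ in the full collection of outer weights follows because $f_\bw$ is still linear in them. Two further minor points: Lemma~\ref{le1} needs only Lipschitz continuity of $\nabla F_n$ (not second-order ``Hessian bounds''), and your claimed $L_\infty$ approximation rate $n^{-p/(1+d^*)}$ per $m_I$ is more optimistic than what Lemma~\ref{le7} actually delivers; the $L_2(\PROB_X)$ bound there is $c\,(l^2\delta + \delta^{2p} + \cdots) \approx (\log n)^2\,n^{-1/(1+d^*)}$ because the grid-shift argument to control the measure of the cube boundaries contributes the $l^2\delta$ term, which dominates for $p>1/2$. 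This does not change the final rate but should be stated correctly.
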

	
	\begin{remark}
		The optimal minimax rate of convergence $n^{-\frac{2p}{2p+d}}$ derived by \citet{Stone82} suffers from the curse of dimensionality. This means that if d is very large compared to p, the rate of convergence becomes extremely slow. However, \citet{Stone94} has already established that under appropriate assumptions it is possible to circumvent the curse of dimensionality.
		In Theorem \ref{th3}, we were able to show that under the above assumptions on the regression function, the over-parametrized neural network estimate can also circumvent the curse of dimensionality.

	\end{remark}
	
	\section{Proofs}
	\label{se4}

	\subsection{Auxiliary results for the proof of Theorem \ref{th1}}
	\label{se4sub1}
	
	In this section we will present auxiliary results that are necessary for the proof of Theorem \ref{th1}. The first auxiliary result enables us to analyze the gradient descent.
	
		\begin{lemma}
			\label{le1}
			Let $d_1, d_2 \in \N$,
			let $(u_0,v_0) \in \R^{d_1} \times \R^{d_2}$,
			let $F:\R^{d_1} \times \R^{d_2} \rightarrow \R_+$
			be a continuously differentiable function and set
			 \[
			A=  \left\{
			(u,v) \in \R^{d_1} \times \R^{d_2} \, : \,
			\|(u,v) - (u_0,v_0)\| \leq
			2 \cdot \sqrt{F(u_0,v_0)} + 1
			\right\}.
			\]
			Let $u^* \in \R^{d_1}$ and assume that for some $D_n,L_n >0$ it holds
			\[
			u \mapsto F(u,v) \quad \mbox{is convex for all } v \in \R^{d_2},
			\]
			\begin{equation}
				\label{le1eq1}
				\| (\nabla_{(u,v)} F)(u,v) \| \leq L_n
			\end{equation}
			for all $(u,v) \in A$,
			\begin{equation}
				\label{le1eq2}
				\| (\nabla_{(u,v)} F)(u_1,v_1) - (\nabla_{(u,v)} F)(u_2,v_2) \| \leq
				L_n \cdot \| (u_1,v_1)-(u_2,v_2) \|
			\end{equation}
			for all $(u_1,v_1),(u_2,v_2) \in A$ 
			and
			\begin{equation}
				\label{le1eq3}
				|F(u^*,v)-F(u^*,v_0)| \leq D_n \cdot \|u^*\| \cdot \|v-v_0\|
			\end{equation}
			for all $v \in \{\tilde{v} \, : \, \|\tilde{v}-v_0\| \leq \sqrt{2 \cdot F(u_0,v_0)} \}$.
			
			Set
			\[
			u_{t+1} = 
			u_t - \lambda \cdot \left( \nabla_u F \right)(u_t,v_t),
			\]
			\[
			v_{t+1} = 
			v_t - \lambda\cdot \left( \nabla_v F \right)(u_t,v_t)
			\]
			for $t=0,1, \dots, t_n-1$, where
			\[
			t_n \geq L_n \qquad \mbox{and} \qquad \lambda = \frac{1}{t_n}.
			\]
			Then we have 
			\begin{eqnarray*}
				&&
				F(u_{t_n},v_{t_n})
				\leq
				F(u^*,v_0)
				+
				D_n \cdot \|u^*\| \cdot \sqrt{2 \cdot F(u_0,v_0)}
				+
				\frac{\|u^*-u_0\|^2}{2}
				+
				\frac{F(u_0,v_0)}{t_n}.
			\end{eqnarray*}
		Furthermore, even if (\ref{le1eq3}) does not hold, we have
			\begin{align*}
		\|u_{t}-u_0\| \leq \sqrt{2 \cdot F(u_0,v_0)}
		\quad \mbox{and} \quad
		\|v_{t}-v_0\| \leq \sqrt{2 \cdot F(u_0,v_0)}
		\end{align*}
		for all $t=0,1, \dots, t_n$.
		\end{lemma}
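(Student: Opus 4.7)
The plan is to split the argument into two pieces: first establish the iterate-confinement bounds $\|u_t-u_0\|,\|v_t-v_0\|\leq\sqrt{2F(u_0,v_0)}$ (which do not use (\ref{le1eq3})), and then feed these into a standard convex gradient-descent analysis of the $u$-coordinate, using (\ref{le1eq3}) to transfer between $F(u^\ast,v_t)$ and $F(u^\ast,v_0)$. Throughout I write $w_t=(u_t,v_t)$ and use $\lambda=1/t_n\leq 1/L_n$, which is the crucial consequence of $t_n\geq L_n$.

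For the confinement step I would argue by induction that $w_0,\dots,w_t\in A$. Given this, (\ref{le1eq2}) lets me apply the descent lemma via a one-step Taylor expansion: $F(w_{s+1})\leq F(w_s)+\langle\nabla F(w_s),w_{s+1}-w_s\rangle+\tfrac{L_n}{2}\|w_{s+1}-w_s\|^2=F(w_s)-\lambda(1-\tfrac{L_n\lambda}{2})\|\nabla F(w_s)\|^2\leq F(w_s)-\tfrac{\lambda}{2}\|\nabla F(w_s)\|^2$. Telescoping yields monotone decrease of $F$ and $\lambda\sum_{s=0}^{t-1}\|\nabla F(w_s)\|^2\leq 2F(u_0,v_0)$. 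Cauchy--Schwarz then gives $\|w_t-w_0\|\leq\sum_{s<t}\lambda\|\nabla F(w_s)\|\leq\sqrt{t\lambda}\cdot\sqrt{\lambda\sum_{s<t}\|\nabla F(w_s)\|^2}\leq\sqrt{2F(u_0,v_0)}$ for $t\leq t_n$, and a single-step bound $\|w_{s+1}-w_s\|\leq\lambda L_n\leq 1$ (from (\ref{le1eq1})) closes the induction since $\sqrt{2F(u_0,v_0)}+1\leq 2\sqrt{F(u_0,v_0)}+1$, so $w_t$ stays inside $A$. This already proves the second assertion of the lemma.

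For the main inequality I would exploit convexity of $u\mapsto F(u,v_t)$: $F(u_t,v_t)\leq F(u^\ast,v_t)+\langle\nabla_u F(u_t,v_t),u_t-u^\ast\rangle$. Expanding $\|u_{t+1}-u^\ast\|^2=\|u_t-u^\ast\|^2-2\lambda\langle\nabla_u F(u_t,v_t),u_t-u^\ast\rangle+\lambda^2\|\nabla_u F(u_t,v_t)\|^2$, summing $t=0,\dots,t_n-1$ and telescoping yields
\begin{equation*}
\frac{1}{t_n}\sum_{t=0}^{t_n-1}\bigl(F(u_t,v_t)-F(u^\ast,v_t)\bigr)\leq\frac{\|u^\ast-u_0\|^2}{2\lambda t_n}+\frac{\lambda}{2t_n}\sum_{t=0}^{t_n-1}\|\nabla_u F(u_t,v_t)\|^2.
\end{equation*}
With $\lambda t_n=1$ the first term becomes $\|u^\ast-u_0\|^2/2$, and using $\|\nabla_u F\|\leq\|\nabla F\|$ together with the telescoping bound $\lambda\sum\|\nabla F(w_t)\|^2\leq 2F(u_0,v_0)$ from the confinement step makes the second term at most $F(u_0,v_0)/t_n$. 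The bound $\|v_t-v_0\|\leq\sqrt{2F(u_0,v_0)}$ combined with (\ref{le1eq3}) gives $F(u^\ast,v_t)\leq F(u^\ast,v_0)+D_n\|u^\ast\|\sqrt{2F(u_0,v_0)}$ for every $t$. Finally, since the $F(w_t)$ are monotonically non-increasing, $F(u_{t_n},v_{t_n})\leq\frac{1}{t_n}\sum_t F(u_t,v_t)$, and assembling the four pieces delivers the claimed bound.

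The main obstacle is the confinement step, because the smoothness hypotheses only hold on $A$, so one cannot invoke a descent lemma before knowing the iterates remain in $A$; the whole argument hinges on the coincidence that the telescoping bound derived from descent is precisely what is needed to re-enter $A$, which is why the radius in the definition of $A$ was chosen as $2\sqrt{F(u_0,v_0)}+1$. A secondary subtlety is that convexity in $u$ gives a per-step inequality only at the current $v_t$, and it is (\ref{le1eq3}) that lets one convert the resulting average $\frac{1}{t_n}\sum_t F(u^\ast,v_t)$ into the single quantity $F(u^\ast,v_0)$ appearing on the right-hand side.
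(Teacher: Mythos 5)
Your proof is correct and follows essentially the same route as the paper's: a descent-lemma induction that yields monotone decrease of $F$ together with the confinement bounds $\|w_t-w_0\|\leq\sqrt{2F(u_0,v_0)}$, the standard online-gradient-descent regret inequality from convexity in $u$, the transfer of $F(u^*,v_t)$ to $F(u^*,v_0)$ via (\ref{le1eq3}), and monotone decrease again to replace the time-average by the final iterate. The only cosmetic difference is that you establish confinement before the regret bound, whereas the paper derives the regret bound first and only then runs the descent/confinement induction.
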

		
		\begin{proof}
			In the {\it first step of the proof} we show
			\begin{equation}
				\label{ple1eq1}
				\frac{1}{t_n}
				\sum_{t=0}^{t_n-1} F(u_t,v_t)
				\leq
				\frac{1}{t_n}
				\sum_{t=0}^{t_n-1} F(u^*,v_t)
				+
				\frac{\|u^*-u_0\|^2}{2}
				+
				\frac{1}{2 \cdot t_n}
				\sum_{t=0}^{t_n-1}
				\lambda \cdot
				\| (\nabla_{u}F)(u_t,v_t) \|^2.
			\end{equation}
			By convexity of $u \mapsto F(u,v_t)$ we have
			\begin{eqnarray*}
				&&
				F(u_t,v_t) - F(u^*,v_t)
				\\
				&&
				\leq
				\langle (\nabla_u F)(u_t,v_t), u_t-u^* \rangle
				\\
				&&
				=
				\frac{1}{2 \cdot \lambda}
				\cdot 2
				\cdot
				\langle \lambda \cdot (\nabla_u F)(u_t,v_t), u_t-u^* \rangle
				\\
				&&
				\leq
				\frac{1}{2 \cdot \lambda}
				\cdot
				\left(
				- \| u_t - u^* - \lambda \cdot (\nabla_u F)(u_t,v_t)\|^2
				+
				\|u_t-u^*\|^2
				+
				\|  \lambda \cdot (\nabla_u F)(u_t,v_t) \|^2
				\right)\\
				&&
				=
				\frac{1}{2 \cdot \lambda}
				\cdot
				\left(
				\|u_t-u^*\|^2
				- \| u_{t+1} - u^*\|^2
				+
				\lambda^2 \cdot \|  (\nabla_u F)(u_t,v_t) \|^2
				\right).
			\end{eqnarray*}
			This implies
			\begin{eqnarray*}
				&&      \frac{1}{t_n}
				\sum_{t=0}^{t_n-1} F(u_t,v_t)
				-
				\frac{1}{t_n}
				\sum_{t=0}^{t_n-1} F(u^*,v_t)
				\\
				&&
				=
				\frac{1}{t_n}
				\sum_{t=0}^{t_n-1} \left( F(u_t,v_t)
				-
				F(u^*,v_t) \right)
				\\
				&&
				\leq
				\frac{1}{t_n}
				\sum_{t=0}^{t_n-1}
				\frac{1}{2 \cdot \lambda}
				\cdot
				\left(
				\|u_t-u^*\|^2
				- \| u_{t+1} - u^*\|^2
				\right)
				+
				\frac{1}{t_n}
				\sum_{t=0}^{t_n-1}
				\frac{\lambda}{2} \cdot \|  (\nabla_u F)(u_t,v_t) \|^2
				\\
				&&
				=
				\frac{1}{2}
				\cdot
				\sum_{t=0}^{t_n-1}
				\left(
				\|u_t-u^*\|^2
				- \| u_{t+1} - u^*\|^2
				\right)
				+
				\frac{1}{2 \cdot t_n}
				\sum_{t=0}^{t_n-1}
				\lambda \cdot \|  (\nabla_u F)(u_t,v_t) \|^2
				\\
				&&
				\leq
				\frac{ \|u_0-u^*\|^2}{2}
				+
				\frac{1}{2 \cdot t_n}
				\sum_{t=0}^{t_n-1}
				\lambda \cdot \|  (\nabla_u F)(u_t,v_t) \|^2. 
			\end{eqnarray*}
			
			In the {\it second step of the proof} we show that
			\begin{eqnarray}
				\label{ple1eq2}
				&&
				\| (\nabla_{(u,v)} F)((u_t,v_t) + \tau \cdot ((u_{t+1},v_{t+1})-(u_t,v_t)))
				- (\nabla_{(u,v)} F)(u_t,v_t)\| \nonumber \\
				&&
				\leq
				L_n \cdot \tau \cdot \|(u_{t+1},v_{t+1})-(u_t,v_t)\|
			\end{eqnarray}
			for all $\tau \in [0,1]$ implies
			\begin{eqnarray}
				\label{ple1eq3}
			F(u_{t+1},v_{t+1})-F(u_t,v_t)
			\leq
			-\frac{1}{2} \cdot \lambda \cdot \| (\nabla_u F)(u_t,v_t) \|^2
			-\frac{1}{2} \cdot \lambda \cdot \| (\nabla_v F)(u_t,v_t) \|^2.
			\end{eqnarray}
			
			The function
			\[
			H: [0,1] \rightarrow \R, \quad
			H(\tau)=F((u_t,v_t)+ \tau \cdot ((u_{t+1},v_{t+1})-(u_t,v_t)))
			\]
			is continuously differentiable.
			By the fundamental theorem of calculus, assumption (\ref{ple1eq2}) and $
			\lambda \leq 1/L_n$ we get
			\begin{eqnarray*}
				&&
				F(u_{t+1},v_{t+1})-F(u_t,v_t)
				= H(1)-H(0)
				= \int_0^1 H^\prime(\tau) d \tau
				\\
				&&
				=
				\int_0^1
				(\nabla_{(u,v)} F)( (u_t,v_t)+ \tau \cdot ((u_{t+1},v_{t+1})-(u_t,v_t)))
				\cdot ((u_{t+1},v_{t+1})-(u_t,v_t)) d \tau
				\\
				&&
				=\int_0^1
				\left(
				(\nabla_{(u,v)} F)( (u_t,v_t)+ \tau \cdot ((u_{t+1},v_{t+1})-(u_t,v_t)))
				- (\nabla_{(u,v)} F)(u_t,v_t)
				\right)
				\\
				&&
				\hspace*{3cm}
				\cdot  ((u_{t+1},v_{t+1})-(u_t,v_t)) d \tau
				\\
				&&
				\quad
				+ \int_0^1
				(\nabla_{(u,v)} F)(u_t,v_t)
				\cdot ((u_{t+1},v_{t+1})-(u_t,v_t)) d \tau
				\\
				&&
				\leq
				\int_0^1
				\left\|
				(\nabla_{(u,v)} F)( (u_t,v_t)+ \tau \cdot ((u_{t+1},v_{t+1})-(u_t,v_t)))
				- (\nabla_{(u,v)} F)(u_t,v_t)
				\right\|
				\\
				&&
				\hspace*{3cm}
				\cdot  \|(u_{t+1},v_{t+1})-(u_t,v_t)\| d \tau
				\\
				&&
				\quad
				+ 
				(\nabla_{(u,v)} F)(u_t,v_t)
				\cdot ((u_{t+1},v_{t+1})-(u_t,v_t))
				\\
				&&
				\leq
				\int_0^1
				L_n \cdot \tau \cdot  \|(u_{t+1},v_{t+1})-(u_t,v_t)\|^2 d \tau
				\\
				&&
				\quad
				+ 
				(\nabla_{(u,v)} F)(u_t,v_t)
				\cdot ((u_{t+1},v_{t+1})-(u_t,v_t))
				\\
				&&
				=
				\frac{1}{2} \cdot L_n \cdot  \|(u_{t+1},v_{t+1})-(u_t,v_t)\|^2
				+ 
				(\nabla_{(u,v)} F)(u_t,v_t)
				\cdot ((u_{t+1},v_{t+1})-(u_t,v_t))
				\\
				&&
				=\frac{1}{2} \cdot L_n \cdot
				\left(\lambda^2 \cdot \|(\nabla_{u} F)(u_t,v_t)\|^2 + \lambda^2
				\cdot \|(\nabla_{v} F)(u_t,v_t)\|^2 \right)
				- \lambda \cdot \|(\nabla_{u} F)(u_t,v_t)\|^2
				\\
				&&
				\quad
				- \lambda
				\cdot \|(\nabla_{v} F)(u_t,v_t)\|^2
				\\
				&&
				=
				\lambda \cdot \left(\frac{1}{2} \cdot L_n \cdot \lambda -1\right)
				\cdot \|(\nabla_{u} F)(u_t,v_t)\|^2
				+
				\lambda \cdot \left(\frac{1}{2} \cdot L_n \cdot  \lambda-1\right)
				\cdot \|(\nabla_{v} F)(u_t,v_t)\|^2
				\\
				&&
				\leq
				-\frac{1}{2} \cdot \lambda \cdot \| (\nabla_u F)(u_t,v_t) \|^2
				-\frac{1}{2} \cdot \lambda\cdot \| (\nabla_v F)(u_t,v_t) \|^2.
			\end{eqnarray*}
			
			In the {\it third step of the proof} we show
			\[
			F(u_1,v_1) - F(u_0,v_0) \leq
			-\frac{1}{2} \cdot \lambda \cdot \| (\nabla_u F)(u_0,v_0) \|^2
			-\frac{1}{2} \cdot \lambda \cdot \| (\nabla_v F)(u_0,v_0) \|^2.
			\]
			By (\ref{le1eq1}) we know
			\[
			\|(\nabla_{(u,v)} F)(u_0,v_0)\| \leq L_n, 
			\]
			which implies for any $ \tau \in [0,1]$
			\begin{eqnarray*}
				&&
				\|(u_0,v_0) + \tau \cdot ((u_{1},v_{1})-(u_0,v_0))-(u_0,v_0)\|
				\leq \lambda \cdot L_n.
			\end{eqnarray*}
			Consequently we can conclude from
			(\ref{le1eq2}) that (\ref{ple1eq2}) holds for $t=0$,
			from which we get the assertion of step 3 by applying the result
			from step 2.
			
		In the {\it fourth step of the proof} we show that
		by induction on $t$,
			that
			\[
			F(u_{t+1},v_{t+1})-F(u_t,v_t)
			\leq
			-\frac{1}{2} \cdot \lambda \cdot \| (\nabla_u F)(u_t,v_t) \|^2
			-\frac{1}{2} \cdot \lambda \cdot \| (\nabla_v F)(u_t,v_t) \|^2
			\]
			holds for all $t \in \{0,1, \dots, t_n-1\}$, and that
			\[
			\|u_t-u_0\| \leq \sqrt{2 \cdot F(u_0,v_0)}
			\quad \mbox{and} \quad
			\|v_t-v_0\| \leq \sqrt{2 \cdot F(u_0,v_0)}
			\]
			hold for all $t \in \{0, 1, \dots, t_n\}$.
			
			For $t=0$ the assertion follows from step 3. So assume now that
			the assertion holds for some $t \in \{0,1, \dots, t_n-1\}$.
			Then (\ref{le1eq1}) implies
			\[
			\|(\nabla_{(u,v)} F)(u_t,v_t)\| \leq L_n, 
			\]
			which implies for any $ \tau \in [0,1]$
			\begin{eqnarray*}
				&&
				\|(u_t,v_t) + \tau \cdot ((u_{t+1},v_{t+1})-(u_t,v_t))-(u_0,v_0)\|
				\leq \|(u_t,v_t)-(u_0,v_0)\| + \lambda \cdot L_n \\
				&&
				\leq
				\sqrt{\|u_t-u_0\|^2 + \|v_t-v_0\|^2}
				+  \lambda \cdot L_n
				\leq 2 \cdot \sqrt{ F(u_0,v_0)}+  \lambda \cdot L_n.
			\end{eqnarray*}
			Consequently we can conclude from
			(\ref{le1eq2}) that (\ref{ple1eq2}) holds,
			from which we get by step 2
			\[
			F(u_{t+1},v_{t+1})-F(u_t,v_t)
			\leq
			-\frac{1}{2} \cdot \lambda \cdot \| (\nabla_u F)(u_t,v_t) \|^2
			-\frac{1}{2} \cdot \lambda \cdot \| (\nabla_v F)(u_t,v_t) \|^2.
			\]
			Furthermore, we have
			\begin{eqnarray*}
				&&
				\|u_{t+1}-u_0\|
				\\
				&&
				\leq
				\sum_{s=0}^t \|u_{s+1}-u_s\|
				\\
				&&
				\leq
				\sqrt{
					(t+1) \cdot \sum_{s=0}^t \|u_{s+1}-u_s\|^2
				}
				\\
				&&
				=
				\sqrt{
					(t+1) \cdot \sum_{s=0}^t \lambda^2 \cdot \|(\nabla_u F)(u_s,v_s)\|^2
				}
				\\
				&&
				\leq   \sqrt{
					2 \cdot (t+1) \cdot \lambda \cdot \sum_{s=0}^t (F(u_s,v_s)-F(u_{s+1},v_{s+1}))
				}
				\\
				&&
				\leq \sqrt{
					2 \cdot (t+1) \cdot \frac{1}{t_n} \cdot F(u_0,v_0)}
				\\
				&&
				\leq
				\sqrt{2  \cdot F(u_0,v_0)}
			\end{eqnarray*}
			and
			\begin{eqnarray*}
				&&
				\|v_{t+1}-v_0\|
				\\
				&&
				\leq
				\sum_{s=0}^t \|v_{s+1}-v_s\|
				\\
				&&
				\leq
				\sqrt{
					(t+1) \cdot \sum_{s=0}^t \|v_{s+1}-v_s\|^2
				}
				\\
				&&
				=
				\sqrt{
					(t+1) \cdot \sum_{s=0}^t \lambda^2 \cdot \|(\nabla_v F)(u_s,v_s)\|^2
				}
				\\
				&&
				\leq   \sqrt{
					2 \cdot (t+1) \cdot \lambda\cdot \sum_{s=0}^t (F(u_s,v_s)-F(u_{s+1},v_{s+1}))
				}
				\\
				&&
				\leq \sqrt{
					2 \cdot (t+1) \cdot \frac{1}{t_n} \cdot F(u_0,v_0)}
				\\
				&&
				\leq
				\sqrt{2  \cdot F(u_0,v_0)}.
			\end{eqnarray*}
			
			In the {\it fifth step of the proof} we show the assertion.
			By the fourth step we know that $F$ is monotonically decreasing hence it holds
			\[
			F(u_{t_n},v_{t_n}) \leq
			\frac{1}{t_n}
			\sum_{t=0}^{t_n-1} F(u_t,v_t).
			\]
			Then we can conclude from the first step of the proof
			\begin{eqnarray*}
				&&
				F(u_{t_n},v_{t_n})
				\\
				&&
				\leq
				\frac{1}{t_n}
				\sum_{t=0}^{t_n-1} F(u^*,v_t)
				+
				\frac{\|u^*-u_0\|^2}{2}
				+
				\frac{1}{2 \cdot t_n}
				\sum_{t=0}^{t_n-1}
				\lambda \cdot
				\| (\nabla_{u}F)(u_t,v_t) \|^2
				\\
				&&
				\leq
				F(u^*,v_0)
				+
				\frac{1}{t_n}
				\sum_{t=0}^{t_n-1} |F(u^*,v_t)
				- F(u^*,v_0)|\\
				&& \hspace*{2cm} + \frac{\|u^*-u_0\|^2}{2}
				+
				\frac{1}{2 \cdot t_n}
				\sum_{t=0}^{t_n-1}
				\lambda \cdot
				\| (\nabla_{u}F)(u_t,v_t) \|^2.   
			\end{eqnarray*}
			By (\ref{le1eq3}) and the fourth step of the proof we get
			\[
			\frac{1}{t_n}
			\sum_{t=0}^{t_n-1} |F(u^*,v_t)
			- F(u^*,v_0)|
			\leq
			\frac{1}{t_n}
			\sum_{t=0}^{t_n-1}
			D_n \cdot \|u^*\| \cdot \|v_t-v_0\|
			\leq
			D_n \cdot \|u^*\| \cdot \sqrt{2 \cdot F(u_0,v_0)}.
			\]
			And as in the fourth step of the proof we get
			\[
			\sum_{t=0}^{t_n-1}
			\lambda \cdot
			\| (\nabla_{u}F)(u_t,v_t) \|^2
			\leq
			2 \cdot
			\sum_{t=0}^{t_n-1} (F(u_t,v_t)-F(u_{t+1},v_{t+1}))
			\leq 2 \cdot F(u_0,v_0).
			\]
			Summarizing the above results, the proof is complete.
		\end{proof}
		
With the following two results we can show that the assumptions (\ref{le1eq1}) and (\ref{le1eq2}) in the proof of Theorem \ref{th1} are satisfied.

\begin{lemma}\label{le2}
	Let $\sigma: \R \rightarrow \R$ be bounded and differentiable, and assume that
	its derivative is bounded.
	Let $\alpha_n \geq 1$,
	$t_n \geq L_n$,
	$\gamma_n^* \geq 1$, $B_n \geq 1$, $r \geq 2d$,
	\begin{equation}
		\label{le2eq1}
		|w_{1,1,k}^{(L)}| \leq \gamma_n^* \quad \mbox{for } k=1, \dots, K_n,
	\end{equation}
	\begin{equation}
		\label{le2eq2}
		|w_{k,i,j}^{(l)}| \leq B_n
		\quad
		\mbox{for } l=1, \dots, L-1
	\end{equation}
	and
	\begin{equation}
		\label{le2eq3}
		\|\bw-\bv\|_\infty^2 \leq \frac{8t_n}{L_n} \cdot \max\{ F_n(\bv),1 \}.
	\end{equation}
	Then we have for $X_1,\dots, X_n \in [-\alpha_n,\alpha_n]^d$
	\[
	\| (\nabla_\bw F_n)(\bw) \|
	\leq
	\nconst \cdot K_n^{3/2} \cdot B_n^{2L} \cdot (\gamma_n^*)^2 \cdot \alpha_n^{2} \cdot \sqrt{\frac{t_n}{L_n} \cdot \max\{F_n(\bv),1\}}. 
	\]
\end{lemma}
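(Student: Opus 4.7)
The plan is to prove the gradient bound in three stages: (i) bound each partial derivative of $f_\bw(X_i)$ by unfolding the chain rule; (ii) a Cauchy--Schwarz inequality that turns $\|\nabla_\bw F_n(\bw)\|^2$ into a constant times $F_n(\bw)$; (iii) a mean-value-theorem argument combined with assumption (\ref{le2eq3}) to relate $F_n(\bw)$ to $F_n(\bv)$.

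For step (i), by boundedness of $\sigma$ and $\sigma'$ and assumptions (\ref{le2eq1})--(\ref{le2eq2}), a chain-rule calculation shows that the partial derivative of $f_\bw(X_i)$ with respect to an outer weight $w_{1,1,k}^{(L)}$ is bounded by $\|\sigma\|_\infty \leq 1$; with respect to a hidden-to-hidden weight (layer superscript in $\{1,\dots,L-1\}$) is bounded by a constant times $\gamma_n^* B_n^{L-2}$; and with respect to an input-layer weight $w_{k,i,j}^{(0)}$ is bounded by a constant times $\gamma_n^* B_n^{L-1} \alpha_n$ (with the extra factor $|X_i^{(j)}| \leq \alpha_n$). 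Since $L, r, d$ are fixed, the total number of weights is $W_n = O(K_n)$, so summing squares gives
\[
\|\nabla_\bw f_\bw(X_i)\|^2 \leq c \cdot K_n (\gamma_n^*)^2 B_n^{2(L-1)} \alpha_n^2.
\]

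For step (ii), from $\partial F_n(\bw)/\partial w = (2/n)\sum_i (f_\bw(X_i) - Y_i)\, \partial f_\bw(X_i)/\partial w$ and Cauchy--Schwarz we get $(\partial F_n/\partial w)^2 \leq 4 F_n(\bw) \cdot (1/n)\sum_i(\partial f_\bw(X_i)/\partial w)^2$, and summation over all weights yields $\|\nabla_\bw F_n(\bw)\|^2 \leq c\, K_n (\gamma_n^*)^2 B_n^{2(L-1)} \alpha_n^2 \cdot F_n(\bw)$. For step (iii), I would write $F_n(\bw) \leq 2F_n(\bv) + (2/n)\sum_i (f_\bw(X_i)-f_\bv(X_i))^2$ and apply the mean value theorem along the segment from $\bw$ to $\bv$. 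Since at every interior point $\tilde\bw$ of that segment the hidden-to-hidden weights are at most $B_n + \|\bw-\bv\|_\infty$ and the outer weights at most $\gamma_n^* + \|\bw-\bv\|_\infty$, repeating the step-(i) computation at $\tilde\bw$, together with $\|\bw-\bv\|^2 \leq W_n\|\bw-\bv\|_\infty^2 \leq c\, K_n(t_n/L_n)\max\{F_n(\bv),1\}$ from (\ref{le2eq3}), would give
\[
F_n(\bw) \leq c\, K_n^2 (\gamma_n^*)^2 B_n^{2L} \alpha_n^2 \cdot (t_n/L_n)\max\{F_n(\bv),1\},
\]
where the extra factor $B_n^2$ is used to absorb the $\|\bw-\bv\|_\infty$-contribution to the intermediate weight bounds. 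Plugging the square root of this bound into step (ii) produces the claimed inequality.

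The main obstacle is step (iii): the assumptions only bound the weights of $\bw$ directly, not those of $\bv$, so one must exploit (\ref{le2eq3}) to argue that every interior point of the segment still has weights of the same order of magnitude as $\bw$, and then carefully allocate the $\|\bw-\bv\|_\infty$-dependent contributions into the factor $B_n^{2L}$ in the statement (step (ii) only needs $B_n^{2(L-1)}$, which leaves the factor $B_n^2$ of slack required by step (iii)).
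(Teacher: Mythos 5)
Your overall decomposition — bound individual partials, Cauchy--Schwarz to extract $F_n(\bw)$, then relate $F_n(\bw)$ to $F_n(\bv)$ using (\ref{le2eq3}) — is the same strategy as the paper, and steps (i) and (ii) are essentially correct (the paper uses the slightly cruder $B_n^{2L}$ for the maximal partial rather than your tighter $B_n^{2(L-1)}$, which is immaterial). The gap is in step (iii), and you have in fact put your finger on exactly the right obstacle, but the resolution you sketch does not work.

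The mean-value-theorem route needs a uniform bound on $\|\nabla_{\bw} f_{\tilde\bw}(x)\|$ for $\tilde\bw$ on the segment from $\bv$ to $\bw$. At such a point the hidden weights are bounded only by $B_n + \|\bw-\bv\|_\infty$ and the outer weights by $\gamma_n^* + \|\bw-\bv\|_\infty$. Under hypothesis (\ref{le2eq3}) the quantity $\delta := \|\bw-\bv\|_\infty$ satisfies $\delta^2 \leq (8t_n/L_n)\max\{F_n(\bv),1\}$, which places no upper bound on $\delta/B_n$ or $\delta/\gamma_n^*$. Repeating step (i) at $\tilde\bw$ therefore yields $\|\nabla f_{\tilde\bw}(x)\|_\infty \lesssim (\gamma_n^*+\delta)(B_n+\delta)^{L-1}\alpha_n$, and the resulting bound on $\frac{1}{n}\sum_i |f_{\bw}(X_i)-f_{\bv}(X_i)|^2$ scales, in the regime $\delta \gg \max\{B_n,\gamma_n^*\}$, like $\delta^{2L}$ rather than $\delta^{2}$. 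Since $\delta^2$ is what (\ref{le2eq3}) controls, $F_n(\bw)$ would then carry a factor $\bigl(\max\{F_n(\bv),1\}\bigr)^{L}$ instead of $\max\{F_n(\bv),1\}$, and the final gradient bound would scale like $\bigl(\max\{F_n(\bv),1\}\bigr)^{(L+1)/2}$, not the claimed square root. No fixed power of $B_n$ (let alone the single $B_n^2$ of slack you invoke) can absorb this.

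The paper avoids the problem by \emph{not} passing through the gradient along the segment at all. It proves the Lipschitz bound
$|f_{\bw}(x)-f_{\bv}(x)| \leq 2K_n\,\gamma_n^*\,\max\{\|\sigma'\|_\infty^L,1\}(2r+1)^L B_n^L\alpha_n\,\|\bw-\bv\|_\infty\,\max\{\|\sigma\|_\infty,1\}$
directly by induction on layers, telescoping $\sigma(\sum_j w^{(l-1)}_{\cdot,j}f^{(l-1)}_{\cdot,j}) - \sigma(\sum_j v^{(l-1)}_{\cdot,j}\bar f^{(l-1)}_{\cdot,j})$ via the split $w f - v\bar f = w(f-\bar f) + (w-v)\bar f$. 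The decisive observation — which the segment argument cannot use — is that $\bar f^{(l-1)}_{k,j}$, the hidden output computed with the \emph{unbounded} weights $\bv$, is still uniformly bounded by $\|\sigma\|_\infty$ simply because $\sigma$ is bounded; so only the weights of $\bw$, which are assumed bounded by $B_n$ and $\gamma_n^*$, enter the induction. If instead you had telescoped as $v(f-\bar f)+(w-v)f$, you would need a bound on $\bv$'s weights, which you do not have — and the mean-value approach implicitly commits you to exactly this missing bound. To repair your proof, replace step (iii)'s segment argument with this layerwise telescoping.
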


	\begin{proof}
		The proof follows from Lemma 2 in \citet{DrKo22}. For sake of completeness the proof is given in the appendix.
	\end{proof}

\begin{lemma}
	\label{le3}
	Let $\sigma: \R \rightarrow \R$ be bounded and differentiable, and assume that its derivative
	is 
	Lipschitz continuous and bounded.
	Let $\alpha_n \geq 1$, 
	$t_n \geq L_n$,
	$\gamma_n^* \geq 1$, $B_n \geq 1$, $r \geq 2d$ and assume
	\begin{equation}
		\label{le3eq1}
		|\max\{ (\bw_1)_{1,1,k}^{(L)}, (\bw_2)_{1,1,k}^{(L)}\}| \leq \gamma_n^* \quad
		\mbox{for }k=1,
		\dots, K_n,
	\end{equation}
	
	\begin{equation}
		\label{le3eq2}
		|\max\{(\bw_1)_{k,i,j}^{(l)},(\bw_2)_{k,i,j}^{(l)}\}| \leq B_n
		\quad
		\mbox{for } l=1, \dots, L-1
	\end{equation}
	and
	\begin{equation}
		\label{le3eq3}
		\|\bw_2-\bv\|^2 \leq 8 \cdot \frac{t_n}{L_n} \cdot \max\{ F_n(\bv),1 \}.
	\end{equation}
	Then we have for $X_1,\dots, X_n \in [-\alpha_n,\alpha_n]^d$
	\begin{eqnarray*}
		&&
		\| (\nabla_\bw F_n)(\bw_1) - (\nabla_\bw F_n)(\bw_2) \| \\
		&&
		\leq
		\nconst \cdot \max \{\sqrt{F_n(\bv)},1\} \cdot (\gamma_n^*)^{2} \cdot B_n^{3L} \cdot \alpha_n^{3} \cdot K_n^{3/2} \cdot \sqrt{\frac{t_n}{L_n}} \cdot \|\bw_1-\bw_2\|. 
	\end{eqnarray*}
\end{lemma}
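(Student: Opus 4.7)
The plan is to start from the decomposition
\begin{align*}
(\nabla F_n)(\bw_1) - (\nabla F_n)(\bw_2)
&= \frac{2}{n}\sum_{i=1}^n (f_{\bw_1}(X_i)-f_{\bw_2}(X_i))\,\nabla f_{\bw_1}(X_i) \\
&\quad + \frac{2}{n}\sum_{i=1}^n (f_{\bw_2}(X_i)-Y_i)\,\bigl(\nabla f_{\bw_1}(X_i)-\nabla f_{\bw_2}(X_i)\bigr),
\end{align*}
obtained by adding and subtracting $(f_{\bw_2}(X_i)-Y_i)\,\nabla f_{\bw_1}(X_i)$, and then to bound the two sums by the triangle inequality and Cauchy--Schwarz. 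This reduces the claim to three pointwise estimates on $[-\alpha_n,\alpha_n]^d$, valid for every $\bw$ satisfying (\ref{le3eq1})--(\ref{le3eq2}):
\begin{itemize}
\item[(a)] $|f_{\bw_1}(x)-f_{\bw_2}(x)| \leq c\, K_n^{1/2}\gamma_n^* B_n^L \alpha_n \cdot \|\bw_1-\bw_2\|$;
\item[(b)] $\|\nabla_\bw f_\bw(x)\| \leq c\, K_n^{1/2}\gamma_n^* B_n^L \alpha_n$;
\item[(c)] $\|\nabla_\bw f_{\bw_1}(x)-\nabla_\bw f_{\bw_2}(x)\| \leq c\, K_n^{1/2}\gamma_n^* B_n^{2L} \alpha_n^2 \cdot \|\bw_1-\bw_2\|$.
\end{itemize}
Here (a) and (b) follow by a layerwise induction from the boundedness of $\sigma$ and $\sigma'$ in essentially the same manner as in the proof of Lemma \ref{le2} (cf.\ \citet{DrKo22}), while (c) is the genuinely new ingredient and exploits in addition that $\sigma'$ is Lipschitz, i.e.\ that $\sigma''$ is bounded.

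Inserting (a) and (b) into the first sum directly yields a contribution of order $(\gamma_n^*)^2 K_n B_n^{2L}\alpha_n^2\,\|\bw_1-\bw_2\|$, which is well below the claimed bound. For the second sum Cauchy--Schwarz gives
\[
\frac{1}{n}\sum_i |f_{\bw_2}(X_i)-Y_i|\cdot \|\nabla f_{\bw_1}(X_i)-\nabla f_{\bw_2}(X_i)\|
\leq \sqrt{F_n(\bw_2)}\cdot \sqrt{\frac{1}{n}\sum_i \|\nabla f_{\bw_1}(X_i)-\nabla f_{\bw_2}(X_i)\|^2}.
\]
The factor $\sqrt{F_n(\bw_2)}$ is then controlled in terms of $\max\{\sqrt{F_n(\bv)},1\}$ via the elementary inequality $F_n(\bw_2)\leq 2 F_n(\bv)+\tfrac{2}{n}\sum_i (f_{\bw_2}(X_i)-f_\bv(X_i))^2$, the pointwise estimate (a) applied to $(\bv,\bw_2)$, and the size bound (\ref{le3eq3}); this step is where the factor $\sqrt{t_n/L_n}$ in the final bound is generated. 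The second factor is bounded by (c), and collecting all the powers of $\gamma_n^*$, $B_n$, $\alpha_n$ and $K_n$ gives exactly the stated estimate.

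The main obstacle is estimate (c). The weight-gradient $\partial f_\bw / \partial w^{(\ell)}_{k,i,j}$ has the standard backpropagation form, a product of layerwise Jacobians $\sigma'(\cdot)\,W^{(l)}$ running through layers $\ell+1,\dots,L$ multiplied by an activation of layer $\ell-1$; differentiating this once more with respect to any weight in some layer $\ell'$ produces a sum of backpropagation-type products each carrying at least one factor of $\sigma''$. Individually each term is easy to bound, but obtaining precisely the exponents $B_n^{3L}$ and $\alpha_n^3$ in the final statement (and not something larger) forces a clean, single induction over the depth that simultaneously carries bounds on the activations $f_{k,i}^{(l)}$, on their first partial derivatives in the weights, and on the Lipschitz constants of those first partial derivatives. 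This is the technically most delicate piece of the argument.
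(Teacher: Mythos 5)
Your decomposition (adding and subtracting $(f_{\bw_2}(X_i)-Y_i)\nabla f_{\bw_1}(X_i)$), your three pointwise estimates (a)--(c), and your use of Cauchy--Schwarz together with condition (\ref{le3eq3}) to control $\sqrt{F_n(\bw_2)}$ and generate the $\sqrt{t_n/L_n}$ factor are exactly the ingredients of the paper's proof, presented in vector-norm form rather than as sums of squared partials. The only cosmetic difference is that the paper's version of your estimate (a) carries a factor $K_n$ (in the $\|\cdot\|_\infty$ norm) rather than $K_n^{1/2}$, which is what produces the $K_n^{3/2}$ in the stated bound; your slightly sharper $K_n^{1/2}$ is also correct and would give a marginally stronger conclusion.
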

	
	\begin{proof}
		The proof follows from Lemma 3 in \citet{DrKo22}. For the sake of completeness the complete proof is given in the appendix.
	\end{proof}
	
	The next auxiliary result uses a metric entropy bound to control the complexity of a set of over-parametrized deep neural networks.
	
	\begin{lemma}
		\label{le4}
	  Let $\alpha \geq 1$, $\beta>0$ and let $A,B,C \geq 1$.
	Let $\sigma:\R \rightarrow \R$ be $k$-times differentiable
	such that all derivatives up to order $k$ are bounded on $\R$.
	Let $\F$
	be the set of all functions $f_{\bw}$ defined by
	(\ref{se2eq1})--(\ref{se2eq3}) where the weight vector $\bw$
	satisfies
	\begin{equation}
		\label{le5eq1}
		\sum_{j=1}^{K_n} |w_{1,1,j}^{(L)}| \leq C,
	\end{equation}
	\begin{equation}
		\label{le4eq2}
		|w_{k,i,j}^{(l)}| \leq B \quad (k \in \{1, \dots, K_n\},
		i,j \in \{1, \dots, r\}, l \in \{1, \dots, L-1\})
	\end{equation}
	and
	\begin{equation}
		\label{le4eq3}
		|w_{k,i,j}^{(0)}| \leq A \quad (k \in \{1, \dots, K_n\},
		i \in \{1, \dots, r\}, j \in \{1, \dots,d\}).
	\end{equation}
	Then we have for any $1 \leq p < \infty$, $0 < \epsilon < \beta$ and
	$x_1^n \in \Rd$
	\begin{eqnarray*}
		&&\Nu_p \left(
		\epsilon, \{ T_\beta f \cdot 1_{[-\alpha,\alpha]^d} \, : \, f \in \F \}, x_1^n
		\right)
		\\
		&&
		\leq \left(\nconst\cdot \frac{\beta^p} {\epsilon^p}\right)^{\nconst\cdot \alpha^d \cdot B^{(L-1)\cdot d} \cdot A^d \cdot \left(\frac{C}{\epsilon}\right)^{d/k}+ \nconst
		}.
		\\
	\end{eqnarray*}
	
	\end{lemma}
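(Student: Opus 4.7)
My plan is to bound the $L_p$-covering number by first producing a sup-norm cover of $\F$ restricted to $[-\alpha,\alpha]^d$, via the observation that every element of $\F$ lies in a suitable ball of smooth functions, and then invoking the classical Kolmogorov--Tikhomirov metric-entropy estimate. The argument proceeds in three natural steps.

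First I would establish that each base network $f_{j,1}^{(L)}$ is $k$-times continuously differentiable on $\R^d$ and that on $[-\alpha,\alpha]^d$ every one of its partial derivatives of order at most $k$ is bounded by $\nconst\cdot(A\cdot B^{L-1})^k$. This is a Fa\`a di Bruno / chain-rule computation that uses (\ref{le4eq2}), (\ref{le4eq3}), and the hypothesis on $\sigma$: the first layer contributes one factor of $A$ per order of differentiation, each of the $L-1$ subsequent hidden layers contributes one factor of $B$ per order, and every derivative of $\sigma$ up to order $k$ is uniformly bounded. Combined with (\ref{le5eq1}), this implies that every $f_\bw = \sum_j w_{1,1,j}^{(L)} f_{j,1}^{(L)}$ lies (on $[-\alpha,\alpha]^d$) in a ball of $C^k$-norm at most $M := \nconst\cdot C\cdot (A\cdot B^{L-1})^k$, a bound that is crucially \emph{independent} of $K_n$.

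Second, I would apply the Kolmogorov--Tikhomirov entropy estimate for $C^k$-balls on a cube of side $2\alpha$ to deduce
\[
\log \Nu_\infty\bigl(\delta,\,\F\big|_{[-\alpha,\alpha]^d}\bigr) \leq \nconst\cdot \alpha^d\cdot M^{d/k}/\delta^{d/k} = \nconst\cdot \alpha^d\cdot A^d\cdot B^{(L-1)d}\cdot (C/\delta)^{d/k}.
\]
Since $T_\beta$ is $1$-Lipschitz and multiplication by $1_{[-\alpha,\alpha]^d}$ does not increase sup-norm distances on $[-\alpha,\alpha]^d$, the same bound transfers to $\{T_\beta f\cdot 1_{[-\alpha,\alpha]^d}:f\in\F\}$. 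Any sup-norm $\epsilon$-cover on $[-\alpha,\alpha]^d$ is automatically an empirical $L_p(x_1^n)$-$\epsilon$-cover of this localized class, because outside $[-\alpha,\alpha]^d$ all functions in the class vanish. Finally, for $0<\epsilon<\beta$ one has $\log(c\beta^p/\epsilon^p)\geq 1$ for an appropriate constant, which lets me absorb multiplicative constants into the exponent and rewrite the cardinality in the stated convenient form $(\nconst\cdot \beta^p/\epsilon^p)^{E+c}$ with $E=\nconst\cdot\alpha^d B^{(L-1)d} A^d (C/\epsilon)^{d/k}$.

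\textbf{Main obstacle.} The real technical hurdle is the first step: proving that every mixed partial derivative of order $k$ of the $L$-fold composition $\sigma\circ\text{affine}\circ\cdots\circ\sigma\circ\text{affine}$ stays bounded by $(AB^{L-1})^k$, rather than by some inflated quantity such as $(AB^{L-1})^{Lk}$. Only this sharp bookkeeping in the Fa\`a di Bruno combinatorics produces the exponent $\alpha^d A^d B^{(L-1)d}$ after raising $M^{d/k}$; handling the chain rule loosely would cause the exponents in $A$ and $B$ to blow up with $L$ and destroy the matching of the claimed bound.
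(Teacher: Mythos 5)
The paper does not prove this lemma itself but defers to Lemma 4 of \citet{DrKo22}; your approach is correct and is exactly the technique that reference uses (and that appears throughout the related covering-number literature, e.g.\ Kohler--Langer and Baraud--Kohler): a Fa\`a di Bruno derivative bookkeeping to place the networks in a $C^k$-ball, followed by the Kolmogorov--Tikhomirov entropy estimate, a Lipschitz/truncation transfer, and absorption of constants into the exponent using $\beta/\epsilon>1$. Your inductive bound is the right one: with $M_l^{(m)}$ denoting the bound on order-$m$ partials of layer $l$, the multivariate Fa\`a di Bruno formula applied to $\sigma\circ g$ with $g=\sum_j w_{k,i,j}^{(l-1)} f_{k,j}^{(l-1)}+w_{k,i,0}^{(l-1)}$ gives a sum over set partitions of $\{1,\dots,m\}$ whose terms are bounded by
\[
\|\sigma^{(|\pi|)}\|_\infty\cdot\prod_{b\in\pi}\bigl(rB\,M_{l-1}^{(|b|)}\bigr)
\leq \nconst\cdot (rB)^{m}\cdot (AB^{l-2})^{m}=\nconst\cdot r^m (AB^{l-1})^m,
\]
using $\sum_{b\in\pi}|b|=m$, $|\pi|\leq m$, $r,B\geq 1$, and the induction hypothesis $M_{l-1}^{(m)}\leq \nconst\,(AB^{l-2})^m$; so the exponent indeed does not grow as $(AB^{L-1})^{Lk}$. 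Two small points worth making explicit in a write-up: (i) since $A,B\geq 1$ the maximum over orders $m\leq k$ is attained at $m=k$, which is why $(AB^{L-1})^k$ controls the whole $C^k$-norm (including the $m=0$ term, since $\|f_\bw\|_\infty\leq C\|\sigma\|_\infty\leq \nconst\,C(AB^{L-1})^k$); and (ii) for the $\Nu_\infty\to\Nu_p$ transfer one should take the cover elements to be of the form $g_i\cdot 1_{[-\alpha,\alpha]^d}$ so that both the target and the cover vanish identically off the cube, after which the empirical $L_p$ distance is dominated by the sup-norm distance on $[-\alpha,\alpha]^d$.
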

	
	\begin{proof}
		See Lemma 4 in \citet{DrKo22}.
	\end{proof}
	
	The next lemma gives a bound on the error of the approximation of a Lipschitz continuous and bounded function by an over-parametrized deep neural network.
	
	\begin{lemma}\label{le6}
		
		Let $\sigma$ be the logistic squasher, let $1\leq \alpha_n \leq \log n$,
		let $m:\Rd \rightarrow \R$ be Lipschitz continuous as well as bounded,
		let $L,r,n \in \N$ with $L \geq 2$, $r \geq 2d$, $n \geq 8d$ and $n \geq\exp(r+1)$
		and let $K, N_n \in \N$ with
		$ 2 \leq K \leq \alpha_n -1$ and
		$N_n \cdot (K^2+1)^{d} \leq K_n$.
		Given $u_1,v_1, \dots, u_{N_n(K^2+1)^{d}},v_{N_n (K^2+1)^{d}} \in [-K-\frac{2}{K},K]^d$, choose $\bw$ such that
		\begin{equation}
			\label{le6eq4}
			w_{k,j,j}^{(0)}=4d \cdot K^2 \cdot (\log n)^2
			\quad \mbox{and} \quad w_{k,j,0}^{(0)}=-4d \cdot K^2 \cdot (\log n)^2 \cdot u_k^{(j)}
			\quad \mbox{for } j \in \{1, \dots, d\},
		\end{equation}
		\begin{equation}
			\label{le6eq5}
			w_{k,j+d,j}^{(0)}=-4d \cdot K^2 \cdot (\log n)^2
			\quad \mbox{and} \quad
			w_{k,j+d,0}^{(0)}=4d \cdot K^2 \cdot (\log n)^2 \cdot v_k^{(j)}
			\quad\mbox{for } j \in \{1, \dots, d\},
		\end{equation}
		\begin{equation}
			\label{le6eq6}
			w_{k,s,t}^{(0)}=0
			\quad \mbox{if }
			s \leq 2d,
			s \neq t, s \neq t+d \mbox{ and } t>0,
		\end{equation}
		\begin{equation}
			\label{le6eq7}
			w_{k,1,t}^{(1)}= 8 \cdot (\log n)^2\quad \mbox{for } t \in \{1, \dots, 2d\},
		\end{equation}
		\begin{equation}
			\label{le6eq8}
			w_{k,1,0}^{(1)} = - 8 \cdot (\log n)^2 \left(2d-\frac{1}{n}\right),
		\end{equation}
		\begin{equation}
			\label{le6beq9}
			w_{k,1,t}^{(1)}=0
			\quad \mbox{for }
			t > 2d,
		\end{equation}
		\begin{equation}
			\label{le6eq10}
			w_{k,1,1}^{(l)}= 6 \cdot (\log n)^2 \quad \mbox{for } l \in \{2, \dots, L\},
		\end{equation}
		\begin{equation}
			\label{le6eq11}
			w_{k,1,0}^{(l)} = -3 \cdot (\log n)^2 \quad \mbox{for } l \in \{2, \dots, L\}
		\end{equation}
		and
		\begin{equation}
			\label{le6eq12}
			w_{k,1,t}^{(l)}=0
			\quad \mbox{for }
			t > 1 \mbox{ and } l \in \{2, \dots, L\}
		\end{equation} 
		for all $k \in \{1, \dots, N_n \cdot (K^2+1)^{d}\}$.
		
		Then there exists  $u_1,v_1, \dots, u_{N_n(K^2+1)^{d}},v_{N_n(K^2+1)^{d}} \in [-K-\frac{2}{K},K]^d$ and
		\begin{equation}
			\label{le6eq13}
			\alpha_1, \dots, \alpha_{N_n(K^2+1)^{d}} \in
			\left[- \frac{\|m\|_\infty}{N_n},
			\frac{\|m\|_\infty}{N_n} \right]
		\end{equation}
		such that for all pairwise distinct $j_1, \dots, j_{N_n(K^2+1)^{d}} \in \{1, \dots, K_n\}$  
		\begin{eqnarray}\label{le6eq1}
			&&\int \left|f_{\bar{\bw}}(x)-m(x)\right|^2 \PROB_X(dx)\nonumber\\
			&&\leq
			\nconst \cdot \left( \frac{1}{K} + \frac{N_n^2\cdot K^{4d}}{n^2}
			+\left(\frac{K^{2d}}{n} +1\right)^2 \cdot \PROB_X(\mathbb{R}^d\setminus [-K,K]^d)\right)
		\end{eqnarray}
		holds
		for all weight vectors $\bar{\bw}$ which satisfy 
		\begin{equation}
			\label{le6beq2}
			\bar{w}_{1,1,j_k}^{(L)}=\alpha_k \ (k \in \{1, \dots, N_n \cdot (K^2+1)^{d}\}), \quad
			\bar{w}_{1,1,k}^{(L)}=0 \ (k \notin \{j_1, \dots, j_{N_n (K^2+1)^{d}}\})
		\end{equation}
		and
		\begin{equation}
			\label{le6eq3}
			|w_{s,k,i}^{(l)}-\bar{w}_{j_s,k,i}^{(l)}| \leq \log n
			\quad \mbox{for all } l \in \{0, \dots, L-1\}, s \in \{1, \dots, N_n \cdot(K^2+1)^{d}\}.
		\end{equation}
		Additionally we get
		\begin{equation}\label{le6eq14}
		\|f_{\bar{\bw}}\|_\infty \leq \nconst \cdot \left(3^d + \frac{(K^2+1)^{d}}{n} \right)
		\end{equation}
	where $\const$ depends on $\|m\|_\infty$.
	\end{lemma}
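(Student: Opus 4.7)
The plan is to construct explicit values of $u_k, v_k, \alpha_k$ via a partition of $[-K,K]^d$ into roughly $K^{2d}$ small cubes, with $N_n$ identical ``copies'' of the sub-network assigned to each cube, and then verify the bound (\ref{le6eq1}) both for the prescribed $\bw$ and for any perturbation $\bar\bw$ satisfying (\ref{le6eq3}). Specifically, I will partition $[-K,K]^d$ into cubes $A_1, \dots, A_{(K^2+1)^d}$ of side length approximately $2/K$ (taking $(K^2+1)^d$ cubes to cover a slightly enlarged box, which fits inside $[-K-2/K,K]^d$), assign $N_n$ sub-network indices to each cube, define $u_k,v_k$ to be the opposite corners of the cube $A_{j(k)}$ to which index $k$ is assigned, and set $\alpha_k = m(c_{j(k)})/N_n$, where $c_{j(k)}$ is the centre of $A_{j(k)}$. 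This automatically gives $|\alpha_k| \leq \|m\|_\infty/N_n$, yielding (\ref{le6eq13}).

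The first key step is to verify that, with the prescribed weights, each sub-network $f_{k,1}^{(L)}$ behaves as a soft indicator of $[u_k,v_k]$. The first-layer neurons $f_{k,t}^{(1)}$ ($t\leq 2d$) implement one-sided logistic thresholds with slope $4dK^2(\log n)^2$, so each transitions from $0$ to $1$ across a strip of width of order $1/(K^2(\log n)^2)$; the layer-two sigmoid has been tuned via the $(2d-1/n)$ bias so that its argument is $\approx 8(\log n)^2/n >0$ for $x$ in the interior of $[u_k,v_k]$ and $\leq -8(\log n)^2(1-1/n)$ for $x$ outside. The subsequent layers apply the iteration $f\mapsto\sigma(6(\log n)^2 f - 3(\log n)^2)$, which has unstable fixed point at $1/2$ and drives values below $1/2$ toward $0$ and values above $1/2$ toward a value close to $1$; this preserves the separation and shows $f_{k,1}^{(L)}$ is bounded in $[0,1]$ and closely approximates $\IND_{[u_k,v_k]}$ outside a thin transition strip, with a residual of polynomial order in $1/n$.

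Next I would show robustness to the $\log n$-perturbation. The dominant scales of the prescribed weights are $(\log n)^2$, $K^2(\log n)^2$, or $n^\tau(\log n)^2$, all much larger than $\log n$, and each sigmoid's argument is a linear combination of at most $r+1$ perturbed weights times bounded activations; hence (\ref{le6eq3}) shifts each argument by at most $O((r+1)\log n)$, which is of lower order than the dominant signal, so the qualitative soft-indicator picture is preserved (with the transition strip possibly widened by a factor of $\log n$). Then, summing over $k$ for $x$ in the interior of a cube $A_j$, the $N_n$ in-cube copies contribute $N_n\cdot\alpha_{k(j)}\cdot(\text{soft indicator value})\approx m(c_j)$ while all other cubes' sub-networks are exponentially suppressed; Lipschitz continuity of $m$ (the case $p=1$ of the $(p,C)$-smoothness) yields $|f_{\bar\bw}(x)-m(x)|\lesssim 1/K$, which integrates to the first term $1/K$ in (\ref{le6eq1}). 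In the transition region between two adjacent cubes, the two neighbouring families combine in a partition-of-unity-like fashion so that their joint contribution differs from $m(x)$ again only by $O(1/K)$ via $|m(c_j)-m(c_{j'})|\leq C\cdot\text{diam}$ plus a residual that, after summing across the $\sim K^{2d}$ cubes and $N_n$ copies and squaring, gives the $N_n^2 K^{4d}/n^2$ term.

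For $x\notin[-K,K]^d$, every sub-network still outputs a value in $[0,1]$, so at any point $|f_{\bar\bw}(x)|\leq\sum_k|\alpha_k|\leq\|m\|_\infty(K^2+1)^d$; using that only $O(3^d)$ cubes can be ``close'' to any given $x$ (cubes whose transition strips reach $x$) and that the other cubes' contributions are suppressed by a factor of order $1/n$ from the $1/n$-margin, one obtains $|f_{\bar\bw}(x)|\leq \nconst(3^d+(K^2+1)^d/n)$, giving both (\ref{le6eq14}) and the final $(K^{2d}/n+1)^2\PROB_X(\R^d\setminus[-K,K]^d)$ bound. The main obstacle I expect is the careful propagation of the $\log n$ perturbation through all $L-1$ sigmoid layers, in particular because the layer-two margin $8(\log n)^2/n$ is of smaller order than the $\log n$ shift of the bias it competes against; the resolution must exploit that what we really need in (\ref{le6eq1}) is control of $\sum_k\alpha_k f_{k,1}^{(L)}$ rather than of individual $f_{k,1}^{(L)}$, so the partition-of-unity structure across neighbouring cubes absorbs the pointwise instabilities into the $N_n^2K^{4d}/n^2$ residual term.
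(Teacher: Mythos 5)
Your overall construction (subdivide into $\sim (K^2+1)^d$ cubes of side $\approx 2/K$, assign $N_n$ identical sub-networks to each cube, use the prescribed weights to make each sub-network a soft indicator of its cube, set $\alpha_k = m(\text{center})/N_n$) matches the route taken by the paper, which cites Lemma~6 of \citet{DrKo22} precisely for this cube-indicator approximation and its stability under $\log n$-perturbations of the inner weights.

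The genuine gap is in how you handle the transition strips near the cube boundaries. You assert that ``the two neighbouring families combine in a partition-of-unity-like fashion so that their joint contribution differs from $m(x)$ again only by $O(1/K)$,'' and you propose to absorb the instability caused by the $\log n$ perturbations into the $N_n^2K^{4d}/n^2$ term. Neither claim holds. The soft indicators produced by these weights are \emph{not} a partition of unity: in the slab where $x^{(j)}$ is within $O(1/(K^2 (\log n)^2))$ of a grid hyperplane, the second-layer neuron's argument loses its $8(\log n)^2/n$ margin, and after a $\log n$ shift of the bias the layer-$L$ output of a sub-network can be anywhere in $[0,1]$. The sum $\sum_k \alpha_k f_{k,1}^{(L)}(x)$ can therefore be far from $m(x)$, off by $O(\|m\|_\infty)$, not $O(1/K)$. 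There is no algebraic cancellation to exploit; the transition region is simply a bad region where the approximation is uncontrolled pointwise. The paper's proof does not try to control the error there. Instead it invokes a shifting/pigeonhole argument: for each coordinate $j$, it considers $K$ translates of the grid by $2k/K^2$ ($k=0,\dots,K-1$), observes that the $K$ resulting bad slabs are disjoint so their $\PROB_X$-measures sum to at most $1$, and concludes that some shift makes the bad set $A$ have $\PROB_X(A)\leq d/K$. The $L_2$ error over $A$ is then bounded by $\|f_{\bar\bw}-m\|_\infty^2\cdot \PROB_X(A) \lesssim (3^d + K^{2d}/n)^2 \cdot d/K$, which is where the $1/K$ term really comes from. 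Without this measure-theoretic selection of the grid offset, the $1/K$ bound in (\ref{le6eq1}) is not reachable by your pointwise argument, so the proof as proposed does not close.
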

	
	\begin{proof}
	
	Subdivide the cube $[-K-\frac{2}{K},K]^d$ in $(K^2+1)^{d}$ equidistant cubes $C_i$ of side length $\frac{2}{K}$.
	For simplicity we number these cubes $C_{i}$ by $i \in \{1,\dots,(K^2+1)^{d}\}$, such that $C_{i}$ corresponds to the cube
	\[
	[u_{i}^{(1)},v_{i}^{(1)}) \times \dots \times [u_{i}^{(d)},v_{i}^{(d)}).
	\]
	Let $C_{Lip}$ be the Lipschitz constant of $m$.	
	
	We apply Lemma 6 from \citet{DrKo22} with $a_i = -K-\frac{2}{K}$, $b_i=K$ and $K^2+1$ instead of $K$ to $m/N_n$ and $\delta=\frac{1}{K^2}$. This results in
	\[
	\left| f_{\bar{\bw}}(x)- \frac{1}{N_n} \cdot m(x)
	\right|  \leq
	\nconst \cdot \left(
	\frac{C_{Lip}}{N_n} \cdot
	\frac{2}{K} +  (K^2+1)^{d}\cdot \frac{1}{n} 
	\right)
	\]
	for all $x \in [-K-\frac{2}{K},K]^d$ which are not contained in
	\begin{equation}\label{le7eq2}
		A :=
		\bigcup_{i \in \{0,1,...,K^2+1\}} \bigcup_{j \in \{1,...,d\}} \left\{x \in \mathbb{R}^d:\left|x^{(j)}- \left(
		-K - \frac{2}{K} + i \cdot \frac{2}{K}
		\right)\right| < \delta \right\}. 
	\end{equation}
	We repeat the whole construction $N_n$ many times. Thus we obtain an approximation   $f_{\bar{\bw}}$ of
	\[
	N_n \cdot  \frac{1}{N_n} \cdot m(x)
	\]
	which satisfies
	\begin{equation}
		\label{ple7eq*}
		\left| f_{\bar{\bw}}(x)-  m(x)
		\right|  \leq
		\nconst \cdot \left(
		\frac{1}{K} + N_n \cdot (K^2+1)^{d} \cdot \frac{1}{n} 
		\right)
	\end{equation}
	for $x \notin A$.
	
	Next we shift the grid along the $j$-th component so that $[-K,K]^d$ is always covered. This means we modify all $u_{i}^{(j)},v_{i}^{(j)}$ by the same additional summand which is chosen from the set
	\[
	\left\{
	k \cdot
	\frac{2}{K^2} \quad : \quad k=0,1, \dots, K-1
	\right\}
	\]
	for fixed $j \in \{1,\dots,d\}$. Thus we obtain $K$
	different versions of $f_{\bar{\bw}}$ that still satisfy (\ref{ple7eq*}) for all $x \in [-K,K]^d$ up to corresponding versions of $A$.
	
	Since we shift the grid of cubes we obtain for fixed $j \in \{1,\dots, d\}$ $K$ disjoint versions of $\bigcup_{i \in \{0,1,...,K^2+1\}} \left\{x \in \mathbb{R}^d:\left|x^{(j)}- \left(
	-K - \frac{2}{K} + i \cdot \frac{2}{K}
	\right)\right| < \delta \right\}$.
	The sum of $\PROB_X$-measures of these $K$ disjoint sets is less than or equal to one. Therefore at least one of them must have measure less than or equal to $\frac{1}{K}$. Consequently we can shift $u_i$ and $v_i$ so that
	\begin{equation*}
		\PROB_X \left(A \right)
		\leq \sum_{j \in \{1,\dots,d\}} \frac{1}{K} = \frac{d}{K}
	\end{equation*}
	holds.
	
	Now we have shown that there exists a shifted version of the grid such that the set $A$ has a measure less than or equal to $\frac{d}{K}$.
	By inequality (\ref{ple7eq*}) we get that $|f_{\bar{\bw}}(x)-m(x)| \leq \const \cdot \left(\ \frac{1}{K} + \frac{N_n \cdot K^{2d}}{n} \right)$ holds for $x \in [-K,K]^d\setminus A$. 
	
	From the second assertion of Lemma 6 in \citet{DrKo22} we obtain
		\[
		|f_{\bar{\bw}}(x)| \leq \|m\|_\infty\cdot \left( 3^d + (K^2+1)^{d} \cdot \frac{1}{n} \right)
		\]
		for $x \in \mathbb{R}^d$.
	
	Summarizing the above results we get
	\begin{eqnarray*}
		&&\int \left|f_{\bar{\bw}}(x)-m(x)\right|^2 \PROB_X(dx)\\
		&&=\int_{[-K,K]^d\setminus A}\left|f_{\bar{\bw}}(x)-m(x)\right|^2 \PROB_X(dx) +
		\int_{A}\left|f_{\bar{\bw}}(x)-m(x)\right|^2 \PROB_X(dx)\\
		&& \qquad + \int_{\mathbb{R}^d\setminus {[-K,K]^d}} \left|f_{\bar{\bw}}(x)-m(x)\right|^2 \PROB_X(dx)\\
		&&\leq \const^2 \left(\frac{1}{K} + \frac{N_n\cdot K^{2d}}{n}\right)^2  + \nconst
		\left(3^d + \frac{K^{2d}}{n}\right)^2  \cdot \frac{d}{K}
		\\
		&& \quad
		+ \nconst \left(3^d +\frac{ K^{2d}}{n} \right)^2
		\cdot \PROB_X(\mathbb{R}^d\setminus [-K,K]^d),
	\end{eqnarray*}
	which implies the assertion.

	\end{proof}

	\subsection{Proof of Theorem \ref{th1}}
	
	\begin{proof}
		Let $\epsilon >0$, $K\in \N$ arbitrary and $N_n =\lceil \nconst \cdot (\log n)^{18L}\rceil$.
		Furthermore, let  $\bar{m}:\Rd \rightarrow \R$ be a Lipschitz continuous and bounded function such that
		\begin{equation}
			\label{pth1eq0}
			\int | \bar{m}(x)-m(x)|^2 \PROB_X (dx) \leq \epsilon.
		\end{equation}
		
		We denote by $A_n$ the event that firstly there exists pairwise disjoint $j_1,\dots,j_{N_n  (K^2+1)^d}$ such that  the weight vector $\bw^{(0)}$
		satisfies
		\[
		| (\bw^{(0)})_{j_s,k,i}^{(l)}-\bw_{s,k,i}^{(l)}| \leq \log n
		\quad \mbox{for all } l \in \{0, \dots, L-1\},
		s \in \{1, \dots, N_n\cdot (K^2+1)^d \}
		\]
		for some weight vector $\bw$ which satisfies
		the conditions
		(\ref{le6eq4})--(\ref{le6eq12})
		of Lemma \ref{le6} for $\bar{m}$
		and that secondly the inequality
		\[
		\frac{1}{n}\sum_{i=1}^{n}Y_i^2 \leq \beta_n^3
		\]
		holds.
		
		In case that $A_n$ holds $\alpha_k$ is chosen as in Lemma \ref{le6} for $\bar{m}$, otherwise we set $\alpha_1=...=\alpha_{N_n(K^2+1)^d }=0$.
		Then we define the weight vectors
		$\bw^*$ for given $\bw$ by
		\[
		(\bw^*)_{k,i,j}^{(l)} = \bw_{k,i,j}^{(l)} \quad
		\mbox{for all } l=0,\dots, L-1,
		\]
		\[
		(\bw^*)_{1,1,j_k}^{(L)} = \alpha_k \quad \mbox{for all } k=1,\dots, N_n\cdot (K^2+1)^d, 
		\]
		\[
		(\bw^*)_{1,1,k}^{(L)} = 0 \quad \mbox{for all } k \notin \{j_1,\dots,
		j_{N_n(K^2+1)^d}\}
		\]
		and $(\bw^*)^{(0)}$ by
		\[
		((\bw^*)^{(0)})^{(l)}_{k,i,j} = (\bw^{(0)})^{(l)}_{k,i,j} \quad
		\mbox{for all } l=0,\dots, L-1,
		\]
		\[
		((\bw^*)^{(0)})_{1,1,j_k}^{(L)} = \alpha_k \quad \mbox{for all } k=1,\dots, N_n\cdot (K^2+1)^d, 
		\]
		\[
		((\bw^*)^{(0)})_{1,1,k}^{(L)} = 0 \quad \mbox{for all } k \notin \{j_1,\dots,
		j_{N_n(K^2+1)^d}\}.
		\]

		In the {\it first step of the proof} we start by
		decomposing the  $L_2$ error of $m_n$ in a sum of several terms.
		We have
		\begin{eqnarray*}
			&&
			\int | m_{n}(x)-m(x)|^2 \PROB_X (dx)
			\\
			&&
			=(\EXP\{ |m_{n}(X)-Y|^2 | \D_n\} - \EXP\{ |m(X)-Y|^2 \}) \cdot 1_{A_n}\\
			&& \hspace*{0.3cm}+\int | m_{n}(x)-m(x)|^2 \PROB_X (dx) \cdot 1_{A_n^c}\\
			&&
			= \left( \EXP\{ |m_{n}(X)-Y|^2 | \D_n\} 	- (1+\epsilon) \cdot
			\EXP\{ |m_{n}(X)-T_{\beta_n} Y|^2 | \D_n\}\right)\cdot 1_{A_n}\\
			&&
			\hspace*{0.3cm}
			+ \left((1+\epsilon) \cdot \EXP\{ |m_{n}(X)-T_{\beta_n} Y|^2 | \D_n\} - (1+\epsilon) \cdot \frac{1}{n} \sum_{i=1}^n |m_{n}(X_i)- T_{\beta_n} Y_i|^2 \right)\cdot 1_{A_n}\\
			&&
			\hspace*{0.3cm}
			+
			\left((1+\epsilon) \cdot \frac{1}{n} \sum_{i=1}^n |m_{n}(X_i)- T_{\beta_n} Y_i|^2 - (1+\epsilon) \cdot \frac{1}{n} \sum_{i=1}^n |f_{\bw^{(t_n)}}(X_i) - T_{\beta_n} Y_i|^2\right) \cdot 1_{A_n}\\
			&&
			\hspace*{0.3cm}+\left((1+\epsilon) \cdot \frac{1}{n} \sum_{i=1}^n |f_{\bw^{(t_n)}}(X_i) - T_{\beta_n} Y_i|^2	-
			(1+\epsilon)^2 \cdot \frac{1}{n} \sum_{i=1}^n |f_{\bw^{(t_n)}}(X_i) - Y_i|^2)\right)\cdot 1_{A_n}
			\\
			&&
			\hspace*{0.3cm}
			+
			\left((1+\epsilon)^2 \cdot \frac{1}{n} \sum_{i=1}^n |f_{\bw^{(t_n)}}(X_i) - Y_i|^2 - \EXP\{ |m(X)-Y|^2 \}\right) \cdot 1_{A_n}\\
			&&
			\hspace*{0.5cm} + \int | m_{n}(x)-m(x)|^2 \PROB_X (dx) \cdot 1_{A_n^c}  \\
			&&
			= \sum_{j=1}^6 T_{j,n}.
		\end{eqnarray*}

		In the {\it second step of the proof} we show
		\[
		\limsup_{n \rightarrow \infty} \EXP T_{j,n} \leq 0 \quad
		\mbox{for } j \in \{1,4\}.
		\]
		By using $(a+b)^2 \leq (1+\epsilon) \cdot a^2 + (1 + \frac{1}{\epsilon}) \cdot b^2$ for $a,b \in \R$ we obtain
		\[
		\EXP T_{1,n} \leq \left(1 + \frac{1}{\epsilon} \right) \cdot \EXP\{|T_{\beta_n}Y-Y|^2\}
		\]
		and
		\[
		\EXP T_{4,n} \leq (1+\epsilon) \cdot \left(1 + \frac{1}{\epsilon} \right) \cdot \EXP\{|T_{\beta_n}Y-Y|^2\}.
		\]
		Together with $\beta_n \rightarrow \infty$ $(n \rightarrow \infty)$ and
		$\EXP Y^2 < \infty$ this implies the assertion of the second step.
		
		In the {\it third step of the proof} we show
		\[
		\limsup_{n \rightarrow \infty} \EXP T_{3,n} \leq 0. 
		\]
		If $|y| \leq \beta_n$ then it holds 
		\[
		| T_{\beta_n} z - y| \leq |z-y|
		\]
		for
		any $z \in \R$.
		This implies
		\begin{align*}
			\frac{1}{n} \sum_{i=1}^n |m_{n}(X_i)- T_{\beta_n} Y_i|^2 &=
			\frac{1}{n} \sum_{i=1}^n |T_{\beta_n}f_{\bw^{(t_n)}}(X_i) - T_{\beta_n} Y_i|^2     
			\\
			&\leq
			\frac{1}{n} \sum_{i=1}^n |f_{\bw^{(t_n)}}(X_i) - T_{\beta_n} Y_i|^2.
		\end{align*}
		Thus the assertion of the third step holds.
		
		In the {\it fourth step of the proof} we show that the assumptions (\ref{le1eq1}) - (\ref{le1eq3}) of Lemma \ref{le1} are satisfied which means that
		\begin{equation*}
			\| (\nabla_{\bw} F)(\bw) \| \leq L_n
		\end{equation*}
		for all $\bw \in S:=\left\{
		\bv \, : \,
		\|\bv - \bw^{(0)}\| \leq
		2 \cdot \sqrt{F(\bw^{(0)})} + 1
		\right\}$,
		\begin{equation*}
			\| (\nabla_{\bw} F)(\bw) - (\nabla_{\bw} F)(\tilde{\bw}) \| \leq
			L_n \cdot \|\bw-\tilde{\bw}\|
		\end{equation*}
		for all $\bw, \tilde{\bw} \in S$ 
		and
		\begin{align*}
			&|F(\bw^*)-F((\bw^*)^{(0)})|\\
			&\leq D_n \cdot  \|((\bw^*)^{(L)}_{1,1,k})_{k=1,\dots,K_n}\| \cdot \|(\bw_{i,j,k}^{(l)})_{i,j,k,l:l<L}-((\bw^{(0)})_{i,j,k}^{(l)})_{i,j,k,l:l<L}\|
		\end{align*}
		for all 
		\begin{align*}
			&(\bw_{i,j,k}^{(l)})_{i,j,k,l:l<L} \in \tilde{S}\\
			&\hspace*{1.4cm}:= \left\{(\tilde{\bw}_{i,j,k}^{(l)})_{i,j,k,l:l<L} :  \|(\tilde{\bw}_{i,j,k}^{(l)})_{i,j,k,l:l<L}-((\bw^{(0)})_{i,j,k}^{(l)})_{i,j,k,l:l<L}\| \leq \sqrt{2 \cdot F(\bw^{(0)})} \right\}
		\end{align*} hold, 
		if $A_n$ holds.
		
		If $A_n$ holds, then we have
		\[
		F_n(\bw^{(0)})
		=
		\frac{1}{n} \sum_{i=1}^n Y_i^2 \leq \beta_n^3.
		\]
		
		Let $\bw \in S$. Then we can conclude that 
		\begin{align*}
			\|(\bw_{i,j,k}^{(l)})_{i,j,k,l:1\leq l<L}\|_\infty &\leq \|\bw-\bw^{(0)}\| +  \|((\bw^{(0)})_{i,j,k}^{(l)})_{i,j,k,l:1\leq l<L}\|_\infty\\
			&\leq2 \cdot  \sqrt{ F_n(\bw^{(0)})} + 1+  \nconst \cdot (\log n)^2\\
			&\leq \nconst \cdot (\log n)^2
		\end{align*}
		and 
		\begin{align*}
			\|(\bw^{(L)}_{1,1,k})_{k=1,\dots,K_n}\|_\infty &\leq \|\bw-\bw^{(0)}\| +  \|((\bw^{(0)})^{(L)}_{1,1,k})_{k=1,\dots,K_n}\|_\infty\\
			&\leq 2 \cdot  \sqrt{ F_n(\bw^{(0)})} + 1\\
			&\leq \nconst \cdot (\log n)^{3/2}.
		\end{align*}
		Hence (\ref{le2eq1})-(\ref{le3eq3}) are satisfied for $B_n = \mconst \cdot (\log n)^2$ and $\gamma_n^*=\const \cdot (\log n)^{3/2}$.
		By Lemma \ref{le2} and Lemma \ref{le3} we get for $\alpha_n=\nconst$ that (\ref{le1eq1}) and (\ref{le1eq2}) are satisfied provided that $L_n \geq K_n^{3/2} \cdot (\log n)^{6L+5}$.
		
		It remains to show that (\ref{le1eq3}) holds. Let $\bw$ such that 
		$ (\bw_{i,j,k}^{(l)})_{i,j,k,l:l<L} \in \tilde{S}.$
		Then we have
		\begin{align*}
			&|F_n(\bw^*)-F_n((\bw^*)^{(0)})|\\
			&= \Bigg| \frac{1}{n} \sum_{i=1}^{n} |f_{\bw^*}(X_i)-Y_i |^2 - \frac{1}{n} \sum_{i=1}^{n} |f_{(\bw^*)^{(0)}}(X_i)-Y_i |^2  \Bigg| \\
			& =  \frac{1}{n} \sum_{i=1}^{n} \left(f_{\bw^*}(X_i)-Y_i + f_{(\bw^*)^{(0)}}(X_i)-Y_i\right) \left(f_{\bw^*}(X_i) - f_{(\bw^*)^{(0)}}(X_i)\right)\\ 
			& \leq  \Bigg(\frac{1}{n} \sum_{i=1}^{n} \left(f_{\bw^*}(X_i)-Y_i + f_{(\bw^*)^{(0)}}(X_i)-Y_i\right)^2 \Bigg)^{1/2}\\
			&\hspace*{2cm}
			\Bigg(\frac{1}{n} \sum_{i=1}^{n}	\left(f_{\bw^*}(X_i) - f_{(\bw^*)^{(0)}}(X_i)\right)^2\Bigg)^{1/2}\\
			& \leq \Bigg(\frac{2}{n} \sum_{i=1}^{n}  \left(f_{\bw^*}(X_i)+f_{(\bw^*)^{(0)}}(X_i)\right)^2 + \frac{8}{n} \sum_{i=1}^{n} Y_i^2\Bigg)^{1/2}\\
			& \hspace*{2cm} \Bigg(\frac{1}{n} \sum_{i=1}^{n}	\left(\sum_{k=1}^{K_n}\left|(\bw^*)^{(L)}_{1,1,k}\right|^2\cdot \sum_{k=1}^{K_n}\left|f_{\bw^*,{k,1}}^{(L)}(X_i) - f_{(\bw^*)^{(0)},{k,1}}^{(L)}(X_i)\right|^2\right)\Bigg)^{1/2}.
		\end{align*}
		
		For the first term we get
		\begin{align*}
			&\Bigg( \frac{2}{n} \sum_{i=1}^{n}  \left(f_{\bw^*}(X_i)+f_{(\bw^*)^{(0)}}(X_i)\right)^2 + \frac{8}{n} \sum_{i=1}^{n} Y_i^2\Bigg)^{1/2}\\
			&=\Bigg( \frac{2}{n} \sum_{i=1}^{n}  \left(f_{(\bw^*)}(X_i)-f_{(\bw^*)^{(0)}}(X_i)+2\cdot f_{(\bw^*)^{(0)}}(X_i)\right)^2 + \frac{8}{n} \sum_{i=1}^{n} Y_i^2\Bigg)^{1/2}\\
			&\leq \Bigg(\frac{2}{n} \sum_{i=1}^{n} 2 \cdot \left(f_{\bw^*}(X_i)-f_{(\bw^*)^{(0)}}(X_i)\right)^2+ \frac{2}{n} \sum_{i=1}^{n} 8\cdot f_{(\bw^*)^{(0)}}(X_i)^2 + \frac{8}{n} \sum_{i=1}^{n} Y_i^2\Bigg)^{1/2}\\
			&\leq \Bigg(4 \cdot \frac{1}{n} \sum_{i=1}^{n}	\left(\sum_{k=1}^{K_n}\left|(\bw^*)^{(L)}_{1,1,k}\right|^2\cdot \sum_{k=1}^{K_n}\left|f^{(L)}_{\bw^*,k,1}(X_i) - f^{(L)}_{(\bw^*)^{(0)},k,1}(X_i)\right|^2\right)\\
			&\hspace*{2cm}+ 16 \cdot \frac{1}{n} \sum_{i=1}^{n} f_{(\bw^*)^{(0)}}(X_i)^2 + \frac{8}{n} \sum_{i=1}^{n} Y_i^2\Bigg)^{1/2}\\
			&\leq \Bigg( 4  \cdot \sum_{k=1}^{K_n}\left|(\bw^*)_{1,1,k}^{(L)}\right|^2\cdot \max_{i=1,\dots,n} \sum_{k=1}^{K_n}\left|f^{(L)}_{\bw^*,k,1}(X_i) - f^{(L)}_{(\bw^*)^{(0)},k,1}(X_i)\right|^2\\
			&\hspace*{2cm}+ 16 \cdot \frac{1}{n} \sum_{i=1}^{n}  f_{(\bw^*)^{(0)}}(X_i)^2 + \frac{8}{n} \sum_{i=1}^{n} Y_i^2\Bigg)^{1/2}.
		\end{align*}
		By Lemma \ref{le6} for $\bar{m}$ we get $|\alpha_k|\leq \frac{\nconst}{N_n}$ and
		\begin{align*}
			f_{(\bw^*)^{(0)}}(X_i) \leq \nconst \cdot \left(3^{d} + \frac{(K^2+1)^d}{n}\right).
		\end{align*}
		From the proof of Lemma \ref{le2} we know that
		\begin{align}\label{proofth1eq1}
			&|f_{\bw^*,k,1}^{(L)}(x) - f_{(\bw^*)^{(0)},k,1}^{(L)}(x)|\nonumber \\
			&\hspace*{3cm}\leq \nconst  \cdot (\log n)^{2L} \cdot \max_{i,j,s:s<L}|(\bw^*)_{k,i,j}^{(s)}-((\bw^*)^{(0)})_{k,i,j}^{(s)}|
		\end{align}
		holds. Therefore we obtain for $(\bw_{i,j,k}^{(l)})_{i,j,k,l:l<L} \in \tilde{S}$
		\begin{align*}
			&\max_{i=1,\dots,n} \sum_{k=1}^{K_n}\left|f^{(L)}_{\bw^*,k,1}(X_i) - f^{(L)}_{(\bw^*)^{(0)},k,1}(X_i)\right|^2\\
			&\qquad \leq  \nconst  \cdot (\log n)^{4L} \cdot \sum_{k=1}^{K_n} \max_{i,j,s:s<L}|(\bw^*)_{k,i,j}^{(s)}-((\bw^*)^{(0)})_{k,i,j}^{(s)}|^2\\
			&\qquad \leq  \const  \cdot (\log n)^{4L} \cdot \|(\bw_{i,j,k}^{(l)})_{i,j,k,l;l<L}-((\bw^{(0)})_{i,j,k}^{(l)})_{i,j,k,l;l<L}\|^2\\
			&\qquad \leq \const \cdot (\log n)^{4L+3}.
		\end{align*}
		From this together with the definition of $\bw^*$ we can conclude
		\begin{align*}
			&\Bigg(\frac{2}{n} \sum_{i=1}^{n}  \left(f_{\bw^*}(X_i)+f_{(\bw^*)^{(0)}}(X_i)\right)^2 + \frac{8}{n} \sum_{i=1}^{n} Y_i^2\Bigg)^{1/2}\\
			&\leq \Bigg(4 \cdot \sum_{k=1}^{K_n}\left|(\bw^*)_{1,1,k}^{(L)}\right|^2\cdot \const \cdot (\log n)^{4L+3} +16 \cdot \frac{1}{n} \sum_{i=1}^{n} f_{(\bw^*)^{(0)}}(X_i)^2 + \frac{8}{n} \sum_{i=1}^{n} Y_i^2\Bigg)^{1/2}\\
			&\leq \Bigg(\nconst \cdot \frac{N_n \cdot (K^2+1)^d}{N_n^2} \cdot (\log n)^{4L+3}+ \left( \nconst \cdot \left(3^d + \frac{ (K^2+1)^d}{n}\right) \right)^2  + 8\cdot c_1^3 \cdot (\log n)^3\Bigg)^{1/2}\\
			& \leq \nconst \Bigg(\frac{(K^2+1)^d}{N_n}\cdot (\log n)^{4L+3} + \left(\frac{ (K^2+1)^d}{n}\right)^2 + (\log n)^3\Bigg)^{1/2}.
		\end{align*}
		Furthermore, due to (\ref{proofth1eq1}) we have
		\begin{align*}
			&\Bigg(\frac{1}{n} \sum_{i=1}^{n}	\left(\sum_{k=1}^{K_n}\left|(\bw^*)_{1,1,k}^{(L)}\right|^2\cdot \sum_{k=1}^{K_n}\left|f^{(L)}_{\bw^*,k,1}(X_i) - f^{(L)}_{(\bw^*)^{(0)},k,1}(X_i)\right|^2\right)\Bigg)^{1/2}\\
			&\leq\left(\sum_{k=1}^{K_n}\left|(\bw^*)_{1,1,k}^{(L)}\right|^2\cdot \sum_{k=1}^{K_n}\max_{i=1,\dots,n}\left|f^{(L)}_{\bw^*,k,1}(X_i) - f^{(L)}_{(\bw^*)^{(0)},k,1}(X_i)\right|^2\right)^{1/2}\\
			&\leq \|((\bw^*)^{(L)}_{1,1,k})_{k=1,\dots,K_n}\| \cdot \nconst \cdot (\log n)^{2L} \cdot \| (\bw^{(l)}_{i,j,k})_{i,j,k,l:l<L} - ((\bw^{(0)})_{i,j,k}^{(l)})_{i,j,k,l:l<L}\|.
		\end{align*}
		This yields
		\begin{align*}
			&|F_n((\bw^*)^{(t)})-F_n((\bw^*)^{(0)})|\\ 
			&\leq \nconst\Bigg(\frac{(K^2+1)^{d/2}}{N_n^{1/2}}\cdot (\log n)^{4L+3/2} + \frac{ (K^2+1)^d\cdot (\log n)^{2L}}{n} + (\log n)^{2L+3/2}\Bigg)\\
			&\hspace*{2cm} \cdot \|((\bw^*)^{(L)}_{1,1,k})_{k=1,\dots,K_n}\| \cdot\| (\bw^{(l)}_{i,j,k})_{i,j,k,l:l<L} - ((\bw^{(0)})_{i,j,k}^{(l)})_{i,j,k,l:l<L}\| .
		\end{align*}
		Thus (\ref{le1eq3}) is satisfied with 
		\begin{align*}
			D_n &= \const \cdot (\log n)^{2L} \cdot \Bigg(\frac{(K^2+1)^{d/2}}{N_n^{1/2}}\cdot (\log n)^{2L+3/2} + \frac{(K^2+1)^d}{n} + (\log n)^{3/2}\Bigg).
		\end{align*}
		For $n$ large we get
		\begin{align*}
			D_n \leq \nconst \cdot (\log n)^{4L+2}.
		\end{align*}
		
		In the {\it fifth step of the proof} we show
		\[
		\PROB(A_n^c) \leq \frac{\nconst}{(\log n)^3}.
		\]
		
		To show this, we first bound the probability that the weight vector $\bw^{(0)}$ does not satisfy the first condition of the event $A_n$. For this, we consider a sequential choice of weights in the $K_n$ fully connected neural networks. Each of these $K_n$ fully connected neural networks contains $(r+1)+(L-2)\cdot r \cdot (r+1) + r \cdot (d+1)$ weights. Therefore, the probability that all these weights never satisfy condition (\ref{le6eq3}) for $s=1$ is bounded from below by
		\[
		\left(
		\frac{\log n}{40 d \cdot (\log n)^2}
		\right)^{(r+1)+(L-2) \cdot r \cdot (r+1)}
		\cdot
		\left(
		\frac{ \log n}{16d\cdot (\log n)^2 \cdot n^{\tau}}
		\right)^{r \cdot (d+1)}.
		\]
		Consequently, the probability that this condition is never satisfied in the first\\ $n^{r(d+1)\tau+1}$ many fully connected neural networks for $j_1$ is for large $n$ bounded from above by
		\begin{align*}
			&\left(
			1 - \left(
			\frac{1}{40 d \cdot \log n}
			\right)^{(r+1)+(L-2) \cdot r \cdot (r+1)}
			\cdot
			\left(
			\frac{1}{16d \cdot \log n \cdot n^{\tau}}
			\right)^{r \cdot (d+1)}
			\right)^{n^{r(d+1)\tau+1}}\\
			& \hspace{2cm}\leq \left(1-n^{-r(d+1)\tau-0,5}\right)^{n^{r(d+1)\tau+1}}.
		\end{align*}
		
		Because of condition (\ref{th1eq2}) we have $K_n \geq N_n\cdot (K^2+1)^d \cdot n^{r(d+1)\tau+1}$ for large $n$.
		This implies that for large $n$ condition (\ref{le6eq3}) is satisfied outside of
		an event of probability
		\begin{align*}
			&N_n\cdot (K^2+1)^d \cdot
			\left(1-n^{-r(d+1)\tau-0.5}\right)^{n^{r(d+1)\tau+1}}\\
			& \leq N_n\cdot (K^2+1)^d \cdot  \left(\exp\left(-n^{-r(d+1)\tau-0.5}\right)\right)^{n^{r(d+1)\tau+1}}\\
			& \leq N_n\cdot (K^2+1)^d \cdot \exp\left(-n^{0.5}\right)\\
			&\leq n^\kappa \cdot \exp\left(-n^{0.5}\right)\\
			&\leq \frac{\nconst}{n}.
		\end{align*}
		
		Then we obtain for large $n$ by Markov's inequality 
		\begin{align*}
			\PROB(A^c) &\leq \frac{\const}{n} + \PROB\left\{\frac{1}{n}\sum_{i=1}^{n} Y_i^2 > \beta_n^3\right\}\\
			&\leq \frac{\const}{n} + \frac{\EXP\{\frac{1}{n}\sum_{i=1}^{n} Y_i^2 \}}{\beta_n^3}\\
			& \leq \frac{\const}{n} + \frac{\EXP\{Y^2 \}}{\beta_n^3}\\
			& \leq \frac{\nconst}{(\log n)^3}
		\end{align*}
		where the last inequality holds since $\EXP Y^2 < \infty$.
		
		In the {\it sixth step of the proof} we show
		\[
		\limsup_{n \rightarrow \infty} \EXP T_{2,n} \leq 0. 
		\]
		We have
		\begin{align*}
			\frac{1}{1+\epsilon}\cdot\EXP \left\{T_{2,n} \right\}
			&\leq \int_0^{4 \cdot \beta_n^2}\PROB \Bigg\{\Big(\EXP\{ |m_{n}(X)-T_{\beta_n} Y|^2 | \D_n\}\\
			&\hspace*{2cm} - \frac{1}{n} \sum_{i=1}^n |m_{n}(X_i)- T_{\beta_n} Y_i|^2\Big) \cdot 1_{A_n} >t\Bigg\} \, dt\\
			&\leq n^{\frac{-1}{4(d+2)}} + \int_{n^{\frac{-1}{4(d+2)}}}^{4 \cdot \beta_n^2} \PROB \Bigg\{\Big(\EXP\{ |m_{n}(X)-T_{\beta_n} Y|^2 | \D_n\}\\
			&\hspace*{2cm}-\frac{1}{n} \sum_{i=1}^n |m_{n}(X_i)- T_{\beta_n} Y_i|^2 \Big) \cdot 1_{A_n} >t \Bigg\} \, dt.
		\end{align*}
		
		W.l.o.g. we can assume that $A_n$ holds. Hence, by Lemma \ref{le1}, it follows that
		\begin{align*}
			\|((\bw^{(t_n)})^{(l)}_{i,j,k})_{i,j,k,l:l<L} - ((\bw^{(0)})_{i,j,k}^{(l)})_{i,j,k,l:l<L}\| \leq \nconst \cdot (\log n)^{3/2}
		\end{align*}
		and				
		\begin{align*}
			\|((\bw^{(t_n)})^{(L)}_{1,1,k})_{k=1,\dots,K_n}\| \leq \nconst \cdot (\log n)^{3/2}.
		\end{align*}
		
		hold. Consequently, we obtain
		\begin{align*}
			&\|((\bw^{(t_n)})^{(l)}_{i,j,k})_{i,j,k,l:1 \leq l<L}\|_\infty\\
			&\leq \|((\bw^{(t_n)})^{(l)}_{i,j,k})_{i,j,k,l:1\leq l<L}-((\bw^{(0)})^{(l)}_{i,j,k})_{i,j,k,l:1 \leq l<L}\|_\infty + \|((\bw^{(0)})^{(l)}_{i,j,k})_{i,j,k,l:1\leq l<L}\|_\infty\\
			&\leq \mmconst{1} \cdot (\log n)^{3/2} + \nconst \cdot (\log n)^2
		\end{align*}
		and
		\begin{align*}
			&\|((\bw^{(t_n)})^{(0)}_{i,j,k})_{i,j,k}\|_\infty\\
			&\leq \|((\bw^{(t_n)})^{(0)}_{i,j,k})_{i,j,k}-((\bw^{(0)})^{(0)}_{i,j,k})_{i,j,k}\|_\infty + \|((\bw^{(0)})^{(0)}_{i,j,k})_{i,j,k}\|_\infty\\
			&\leq \mconst \cdot (\log n)^{3/2} +  \nconst \cdot (\log n)^2 \cdot n^\tau.
		\end{align*}
		This implies that
		$m_{n}$ is contained in the function space
		\[
		\{ T_{\beta_n} f \, : \, f \in \F \}
		\]
		where 
		$\F$ is defined as in Lemma \ref{le4}
		with $C = \nconst \cdot K_n \cdot (\log n)^{3/2}$, $B=\nconst\cdot (\log n)^2$
		and $A = \nconst \cdot (\log n)^2 \cdot  n^{\tau}$. Thus, with Lemma \ref{le4} and standard bounds of empirical process theory (cf., Theorem 9.1 in \citet{GyKoKrWa02}), it follows 
		\begin{eqnarray*}
			&&
			\PROB \Bigg\{
			(\EXP\{ |m_{n}(X)-T_{\beta_n} Y|^2 (X)| \D_n\}
			-
			\frac{1}{n} \sum_{i=1}^n |m_{n}(X_i)- T_{\beta_n} Y_i|^2) \cdot 1_{A_n}
			>t
			\Bigg\}
			\\
			&&
			\leq
			8 \cdot \left(
			\nconst\cdot \frac{\beta_n}{t/8}
			\right)^{
				\nconst \cdot (\log n)^{\nconst} \cdot n^{\tau \cdot d} \cdot \left(
				\frac{\nconst \cdot K_n \cdot (\log n)^{3/2}}{t/8}
				\right)^{d/k} + \nconst
			}
			\cdot
			\exp \left(
			- \frac{n \cdot t^2}{128 \cdot \beta_n^4}
			\right).\\
		\end{eqnarray*}
		Using (\ref{th1eq1}) and $\tau=\frac{1}{d+1}$ we get for $k>(\kappa+1) \cdot d \cdot (d+1) \cdot (d+2)$ and $t>n^{\frac{-1}{4(d+2)}}$ that the left hand side above is for $n$ large enough less than or equal to
		\begin{align*}
			&8 \cdot \left(
			\mmconst{4} \cdot \frac{\beta_n}{t/8}
			\right)^{
				\mmconst{3} \cdot (\log n)^{\mmconst{2}} \cdot  n^{\tau \cdot d} \cdot \left(
				\frac{\mconst\cdot K_n\cdot (\log n)^{3/2}}{t/8}
				\right)^{d/k} + \const}\\
			&\hspace*{2cm}
			\cdot
			\exp \left(
			- \frac{n \cdot t^2}{256 \cdot \beta_n^4}
			\right)
			\cdot
			\exp \left(
			- \frac{n \cdot t^2}{256 \cdot \beta_n^4}
			\right)\\
			&\leq  \exp\left(\nconst \cdot (\log n)^{\mmconst{3}} \cdot n^{\tau \cdot d + (\kappa+1)\cdot \frac{d}{k}}  \cdot  \log\left( \mmconst{5} \cdot \frac{\beta_n}{t/8}\right)\right)\\
			&\hspace*{2cm}\cdot
			\exp \left(
			- \frac{n \cdot t^2}{256 \cdot \beta_n^4}
			\right)
			\cdot
			\exp \left(
			- \frac{n \cdot t^2}{256 \cdot \beta_n^4}
			\right)\\
			&\leq \exp\left( \nconst \cdot \left((\log n)^{\nconst} \cdot n^{\frac{d}{d+1}+(\kappa +1)\frac{d}{k}} - \frac{n^\frac{2d+3}{2(d+2)}}{(\log n)^4}\right)\right) 	\cdot
			\exp \left(
			- \frac{n \cdot t^2}{256 \cdot \beta_n^4}
			\right) \\
			&\leq \exp\left( \mconst \cdot \left((\log n)^{\const} \cdot n^{\frac{d+1}{d+2}} - \frac{n^\frac{2d+3}{2(d+2)}}{(\log n)^4}\right)\right)	\cdot
			\exp \left(
			- \frac{n \cdot t^2}{256 \cdot \beta_n^4}
			\right)\\
			&= \exp\left( \mconst \cdot n^{\frac{d+1}{d+2}} \cdot \left((\log n)^{\mconst} - \frac{n^\frac{1}{2(d+2)}}{(\log n)^4}\right)\right)	\cdot
			\exp \left(
			- \frac{n \cdot t^2}{256 \cdot \beta_n^4}
			\right)\\
			&\leq 
			\nconst
			\cdot
			\exp \left(
			- \frac{n \cdot t^2}{256 \cdot \beta_n^4}
			\right)\\
			&\leq
			\nconst
			\cdot
			\exp \left(
			- \frac{n^{\frac{2d+3}{2(d+2)}}}{256 \cdot \beta_n^4}
			\right)
		\end{align*}
		holds.
		Therefore, we obtain
		\[
		\EXP \left\{
		T_{2,n} 
		\right\}
		\leq
		(1+\epsilon) \cdot \left(n^{\frac{-1}{4(d+2)}}+ 4 \beta_n^2 \cdot
		\nconst
		\cdot
		\exp \left(
		-  \frac{n^{\frac{2d+3}{2(d+2)}}}{256 \cdot \beta_n^4}
		\right)
		\right)
		\rightarrow 0
		\quad (n \rightarrow \infty).
		\]
		
		In the {\it seventh step of the proof} we show
		\[
		\limsup_{n \rightarrow \infty} \EXP\{ T_{6,n} \} \leq 0.
		\]
		Due to the assertion of the fifth step together with the integrability of $m$ we get
		\begin{eqnarray*}
			\EXP\{ T_{6,n} \} &\leq&\left(2 \cdot \int |m_n(x)|^2 \ \PROB_X (dx)  + 2 \cdot \int |m(x)|^2 \ \PROB_X (dx)\right) \cdot \PROB({A_n^c})
			\\
			&\leq&
			2\cdot (\beta_n^2 + \nconst ) \cdot \frac{c_{36}}{(\log n)^3}.
		\end{eqnarray*}
		This implies the assertion of the seventh step.

		In the {\it eighth step of the proof} we bound
		\[
		\EXP T_{5,n}.
		\]
		
		If $A_n$ holds, then as shown in step four, we can apply Lemma \ref{le1} with  
		
		\begin{align*}
			D_n \leq c_{35} \cdot (\log n)^{4L+2}.
		\end{align*}
		This together with the definition of $\bw^*$ and $(\bw^*)^{(0)}$ yields
		\begin{eqnarray*}
			&&
			\frac{1}{n} \sum_{i=1}^n |f_{\bw^{(t_n)}}(X_i) - Y_i|^2
			=
			F_n\left(\bw^{(t_n)}\right)
			\\
			&&
			\leq 	F_n\left((\bw^*)^{(0)}\right)+ D_n\cdot\|((\bw^*)_{1,1,k}^{(L)})_{k=1,\dots,K_n}\| \cdot \sqrt{2 \cdot F_n\left(\bw^{(0)}\right)}\\
			&&\hspace*{1cm}
			+\frac{\|((\bw^*)_{1,1,k}^{(L)})_{k=1,\dots,K_n}-((\bw^{(0)})_{1,1,k}^{(L)})_{k=1,\dots,K_n}\|^2}{2}+ \frac{F_n\left(\bw^{(0)}\right)}{t_n}\\
			&&\leq 
			F_n\left((\bw^*)^{(0)}\right) + \nconst \cdot (\log n)^{4L+2} \cdot \frac{N_ n^{1/2}\cdot (K^2+1)^{d/2}}{N_n}\cdot (\log n)^{3/2}\\
			&&\hspace*{1cm} + \frac{N_n \cdot(K^2+1)^d}{2 \cdot N_n^2} + \frac{c_1^3 \cdot (\log n)^3}{t_n}\\
			&&\leq F_n\left((\bw^*)^{(0)}\right)+ \nconst \cdot \frac{(K^2+1)^{d}}{{N_n}^{1/2}} \cdot (\log n)^{4L+4}.
		\end{eqnarray*}
		Thus we obtain
		\begin{eqnarray*}
			&&
			\EXP \left\{ T_{5,n} \right\}\\
			&&	= (1+ \epsilon)^2 \cdot \EXP \left\{\left( F_n\left( \bw^{(t_n)}\right) - \EXP\{|m(X)-Y|^2\}  \right)\cdot 1_{A_n}\right\}\\
			&&\qquad + ((1+\epsilon)^2 - 1) \cdot
			\EXP\left\{\EXP\{ |m(X)-Y|^2\} \cdot 1_{A_n} \right\} \\
			&&\leq (1+\epsilon)^2 \cdot \bigg(\EXP\Big\{F_n\left((\bw^*)^{(0)}\right)\cdot 1_{A_n}  + \const \Bigg(\frac{(K^2+1)^{d}}{{N_n}^{1/2}} \cdot (\log n)^{4L+4}\Bigg)\cdot 1_{A_n}\Big\}\\
			&&\hspace*{2cm} - \EXP\{ |m(X)-Y|^2 \} \cdot \PROB(A_n)\bigg)
			+ ((1+\epsilon)^2 - 1) \cdot
			\EXP\{ |m(X)-Y|^2 \}.
		\end{eqnarray*}
		Let $\tilde{A}_n$ be the event where the weight vector $\bw^{(0)}$
		satisfies
		\[
		| (\bw^{(0)})_{j_s,k,i}^{(l)}-\bw_{s,k,i}^{(l)}| \leq \log n
		\quad \mbox{for all } l \in \{0, \dots, L-1\},
		s \in \{1, \dots, N_n \cdot (K^2+1)^d\}
		\]
		for some weight vector $\bw$ which satisfies conditions
		(\ref{le6eq4})--(\ref{le6eq12})
		of Lemma \ref{le6} for $\bar{m}$. Then we get from the fifth step of the proof
		\begin{align*}
			\PROB(\tilde{A}_n)-\PROB(A_n) \leq \PROB\left\{ \frac{1}{n} \sum_{i=1}^{n}Y_i^2 > \beta_n^3\right\} \leq \frac{\nconst}{(\log n)^3}.
		\end{align*}
		This together with the fact that $(X_1,Y_1),\dots,(X_n,Y_n)$ are independent of $\tilde{A}_n$ yields
		\begin{eqnarray*}
			&&
			\EXP\{ F_n((\bw^*)^{(0)}) \cdot 1_{A_n} \}
			-
			\EXP\{ |m(X)-Y|^2 \} \cdot \PROB(A_n)
			\\
			&&
			\leq
			\EXP\Big\{\frac{1}{n} \sum_{i=1}^{n}|f_{((\bw^*)^{(0)})}(X_i)-Y_i|^2 \cdot 1_{\tilde{A}_n}
			\Big\}
			-
			\EXP\{ |m(X)-Y|^2 \} \cdot \PROB(\tilde{A}_n)\\&&\hspace*{2cm} + \EXP\{ |m(X)-Y|^2 \} \cdot (\PROB(\tilde{A}_n)-\PROB(A_n))
			\\
			&&
			\leq
			\EXP\left\{\EXP\Big\{\frac{1}{n} \sum_{i=1}^{n}|f_{((\bw^*)^{(0)})}(X_i)-Y_i|^2 \cdot 1_{\tilde{A}_n}\big| (\bw^*)^{(0)}
			\Big\}
			-
			\EXP\{ |m(X)-Y|^2 \} \cdot 1_{\tilde{A}_n} \right\}\\
			&&\hspace*{2cm} +  \frac{\nconst}{(\log n)^3}
			\\
			&&=	\EXP\left\{\left(\EXP\Big\{\frac{1}{n} \sum_{i=1}^{n}|f_{((\bw^*)^{(0)})}(X_i)-Y_i|^2 \big| (\bw^*)^{(0)}
			\Big\}
			-
			\EXP\{ |m(X)-Y|^2 \} \right) \cdot 1_{\tilde{A}_n} \right\}\\
			&&\hspace*{2cm} + \frac{\const}{(\log n)^3}
			\\
			&&\leq \EXP\left\{ \int |f_{((\bw^*)^{(0)})}(x)-m(x)|^2\PROB_X(dx)\cdot 1_{\tilde{A}_n}\right\} +  \frac{\const}{(\log n)^3}.
		\end{eqnarray*}

		Because of the choice of $\bar{m}$ and Lemma \ref{le6} we get for $K$ such large that ${supp(X) \subseteq [-K,K]^d}$
		\begin{align*}
			&\EXP \Big\{ \int |f_{((\bw^*)^{(0)})}(x)-m(x)|^2 \PROB_X(dx) \cdot 1_{\tilde{A}_n}
			\Big\} \\
			&\leq 2 \cdot \EXP \Big\{ \int |f_{((\bw^*)^{(0)})}(x)-\bar{m}(x)|^2
			\PROB_X(dx) \cdot 1_{\tilde{A}_n}\Big\}  + 2\int |\bar{m}(x)-m(x)|^2 \PROB_X(dx)\\
			&\leq 	\nconst \cdot \left( \frac{1}{K} + \frac{N_n^2\cdot K^{4d}}{n^2}
			+\left(\frac{ K^{2d}}{n} +1\right)^2 \cdot \PROB_X(\mathbb{R}^d\setminus [-K,K]^d)\right)+2\epsilon\\
			& \leq  \nconst \cdot \left( \frac{1}{K} + \frac{N_n^{2}\cdot K^{4d}}{n^2} 
			\right)+ 2 \epsilon.
		\end{align*}
		Due to the definition of $N_n$ we obtain
		\begin{align*}
			\frac{(K^2+1)^{d}}{{N_n}^{1/2}} \cdot (\log n)^{4L+4}\rightarrow 0 \qquad (n \rightarrow \infty).
		\end{align*}
		Summarizing the above results yields
		\begin{align*}
			\limsup_{n \rightarrow \infty}
			\EXP \left\{
			T_{5,n} 
			\right\}
			&\leq
			\nconst \cdot (1+\epsilon)^2 \cdot\left( \frac{1}{K} + 2\epsilon \right) + ((1+\epsilon)^2-1)\cdot\EXP\{|m(X)-Y|^2\}.
		\end{align*}
		
		In the {\it ninth step of the proof} we finish the proof
		of Theorem \ref{th1}. The results of steps 1,2,3,6,7 and 8 imply
		for $K\rightarrow \infty$
		\begin{align*}
			&\limsup_{n \rightarrow \infty}
			\EXP     \int | m_n(x)-m(x)|^2 \PROB_X (dx)\\
			& \leq \nconst \cdot  \left((1+\epsilon)^2 \cdot 2 \epsilon  + ((1+\epsilon)^2-1)\cdot\EXP\{|m(X)-Y|^2\} \right).
		\end{align*}
		With $\epsilon \rightarrow 0$ we get the assertion.
	\end{proof}

	\subsection{Auxiliary results for the proof of Theorem \ref{th2}}
	
	The following theorem is crucial for proving Theorem \ref{th2} and Theorem \ref{th3}.
	It is needed to analyze the rate of convergence of the over-parametrized deep neural network estimate.
	
	\begin{theorem}
		\label{th4}
		Let $n \in \N$ with $n\geq2$, let $(X,Y),(X_1,Y_1),\dots,(X_n,Y_n)$ be independent and identically distributed $\R^d\times\R$-valued random variables such that $supp(X)$ is bounded, that
		\begin{eqnarray}\label{th4eq-1}
			\EXP\left\{e^{c_3\cdot Y^2}\right\} < \infty
		\end{eqnarray}
		holds and that the corresponding regression function $m(x)=\EXP\{Y|X=x\}$ is bounded.
		Let $\sigma(x) = 1/(1+e^{-x})$ be the logistic squasher, let $K_n,L,r,t_n \in \N$, $M_n \geq 1$ and $\lambda_n,\tau>0$. Let $\tilde{K}_n \in \{1, \dots, K_n\}$,
		\[
		w_{k,i,j}^{(l)} \in
		[-20d \cdot (\log n)^2, 20d \cdot (\log n)^2]
		\quad \mbox{for }l=1, \dots, L, \ k=1, \dots \tilde{K}_n
		\]
		and
		\[
		w_{k,i,j}^{(0)}
		\in [-8d \cdot (\log n)^2 \cdot n^\tau,  8d \cdot (\log n)^2 \cdot n^\tau]
		\quad \mbox{for }k=1, \dots, \tilde{K}_n.
		\]Assume that
			\begin{eqnarray} \label{th4eq0}
			\sqrt{\sum_{k=1}^{\tilde{K}_n} |w_{1,1,k}^{(L)}|^2} \leq M_n,
		\end{eqnarray}
	and that
		\begin{eqnarray} \label{th4eq1}
			\bigg|\sum_{k=1}^{\tilde{K}_n} w_{1,1,k}^{(L)} \cdot f_{\bar{\bw},k,1}^{(L)}(x) \bigg| \leq \beta_n
		\end{eqnarray}
		holds for $x \in supp(X)$ and for all $\bar{\bw}$ which satisfy 
		\[
		|\bar{w}_{i,j,k}^{(l)}-w_{i,j,k}^{(l)}|\leq \log n 	\quad \mbox{for } l=0,\dots,L-1.
		\]
		Assume furthermore
		\begin{eqnarray}\label{th4eq2}
			\frac{K_n}{n^\kappa} \rightarrow 0 \qquad (n \rightarrow \infty)
		\end{eqnarray}
		for some $\kappa > 0$,
		\begin{eqnarray}\label{th4eq3}
			\frac{K_n}{\tilde{K}_n \cdot n^{r\cdot(d+1)\cdot \tau +1}} \rightarrow \infty \qquad (n \rightarrow \infty)
		\end{eqnarray}
		Define the estimate $m_{n}$ as in Section \ref{se2} with 
	\begin{eqnarray}	\label{th4eq4}
	\lambda_n = \frac{1}{t_n} \quad  \text{and} \quad  	t_n = \lceil \nconst \cdot L_n \rceil
\end{eqnarray}
for some $\const\geq 1$ and for some $L_n > 0$ which satisfies
\begin{align*}
	L_n \geq K_n^{3/2} \cdot (\log n)^{6L+2},
\end{align*}
		and assume
		\begin{eqnarray}\label{th4eq5}
			c_1\cdot c_3 \geq 2.
		\end{eqnarray}
		Then we have for any $\epsilon > 0 $
		\begin{eqnarray*}
			&&\EXP \int |m_{n}(x)-m(x)|^2\mathbf{P}_X(dx) 
			\leq \nconst \cdot \bigg( \frac{n^{\tau\cdot d + \epsilon}}{n} +  M_n^2 \cdot (\log n)^{4L+3/2} \\
			&& \hspace*{1cm} +\sup_{\substack{(\bar{w}_{i,j,k}^{(l)})_{i,j,k,l}\\|\bar{w}_{i,j,k}^{(l)}-w_{i,j,k}^{(l)}|\leq \log n \ (l=0,\dots,L-1)}} \int |\sum_{k=1}^{\tilde{K}_n} w_{1,1,k}^{(L)} \cdot f_{\bar{\bw},k,1}^{(L)}(x) - m(x)|^2 \PROB_X(dx)
			\bigg).
		\end{eqnarray*}
	\end{theorem}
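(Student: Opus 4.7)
The proof follows the six-term decomposition used in the proof of Theorem \ref{th1}, tracking explicit rates rather than just consistency. Define the good event $A_n$ as the intersection of: (i) the random initialisation $\bw^{(0)}$ contains pairwise distinct indices $j_1,\dots,j_{\tilde K_n}$ with $|(\bw^{(0)})_{j_s,i,j}^{(l)}-w_{s,i,j}^{(l)}|\le \log n$ for all $l\in\{0,\dots,L-1\}$ and $s\in\{1,\dots,\tilde K_n\}$; and (ii) $\frac{1}{n}\sum_{i=1}^n Y_i^2\le \beta_n^2$. On $A_n$ build the comparison weight vector $\bw^*$ by taking $\bw^{(0)}$ and overwriting the outer weights at the selected indices $j_s$ by $w_{1,1,s}^{(L)}$, and define $(\bw^*)^{(0)}$ analogously. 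Split
\[
\int |m_n(x)-m(x)|^2\,\PROB_X(dx) = T_{1,n}+\dots+T_{6,n}
\]
via $(1+\epsilon)$-telescoping, comparing $f_{\bw^{(t_n)}}$, $f_{\bw^*}$ and $f_{(\bw^*)^{(0)}}$ exactly as in the proof of Theorem \ref{th1}.

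The terms $T_{1,n},T_{3,n},T_{4,n}$, arising from truncation of $Y$ at level $\beta_n=c_1\log n$, are $O(n^{-2})$ by an exponential-moment estimate based on (\ref{th4eq-1}) and $c_1 c_3 \ge 2$. For the generalisation term $T_{2,n}$, Lemma \ref{le1} first confines $\bw^{(t_n)}$ to a ball of radius $\sqrt{2 F_n(\bw^{(0)})}\le \sqrt{2}\,\beta_n$ around $\bw^{(0)}$, so the inner weights have $\ell^\infty$-norm of order $(\log n)^2$ and the outer weights have Euclidean norm of order $(\log n)^{3/2}$. Lemma \ref{le4} combined with Theorem 9.1 in \citet{GyKoKrWa02} then gives $\EXP T_{2,n}=O(n^{\tau d-1+\epsilon})$ once the smoothness parameter $k$ in Lemma \ref{le4} is chosen large; this supplies the first summand of the bound. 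For $T_{6,n}$, the sequential-choice argument of step~5 of the proof of Theorem~\ref{th1}, powered by (\ref{th4eq3}), yields $\PROB(A_n^c)=O((\log n)^{-3})$, making $T_{6,n}$ negligible.

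The heart of the argument, and the main obstacle, is $T_{5,n}$, the joint optimisation-plus-approximation term. Apply Lemma \ref{le1} with $u$ collecting the outer weights and $v$ collecting all remaining weights; convexity of $u\mapsto F_n(u,v)$ is automatic since $f_{\bw}$ is linear in $u$. The assumptions (\ref{le1eq1}) and (\ref{le1eq2}) follow from Lemmas \ref{le2} and \ref{le3} with $B_n=c\,(\log n)^2$ and $\gamma_n^*=c\,(\log n)^{3/2}$, as in step~4 of the proof of Theorem \ref{th1}. The assumption (\ref{le1eq3}) is verified by the computation of step~4 of Theorem \ref{th1}, using the Lipschitz bound (\ref{proofth1eq1}) on the neuron outputs together with the Cauchy--Schwarz inequality, yielding $D_n=O((\log n)^{4L+2})$. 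Substituting $\|u^*\|\le M_n$ from (\ref{th4eq0}) and $F_n(\bw^{(0)})\le \beta_n^2$ into the conclusion of Lemma~\ref{le1}, and absorbing the cross term $D_n\cdot M_n\cdot \sqrt{2 F_n(\bw^{(0)})}$ into the target via a Young-type inequality, gives
\[
F_n(\bw^{(t_n)})\le F_n\bigl((\bw^*)^{(0)}\bigr) + c\cdot M_n^2\cdot(\log n)^{4L+3/2} + o(n^{-1}),
\]
which supplies the second summand. The delicate part of the whole proof is precisely this Young-type balancing together with the polylogarithmic tracking needed to keep $D_n$ at order $(\log n)^{4L+2}$.

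Finally, conditioning on $\bw^{(0)}$ and exploiting the independence of $(X_1,Y_1),\dots,(X_n,Y_n)$ from the initialisation, the expected value of $F_n((\bw^*)^{(0)})\cdot 1_{A_n}$ minus $\EXP|m(X)-Y|^2\cdot\PROB(A_n)$ reduces to the expected integrated squared error of $f_{(\bw^*)^{(0)}}$ against $m$. Since on $A_n$ the inner-layer weights of $(\bw^*)^{(0)}$ lie in a $\log n$-neighbourhood of the prescribed weights $w$, this expectation is bounded by the supremum appearing as the third summand of the statement. Collecting the six contributions and letting the auxiliary $\epsilon$ be arbitrarily small completes the proof.
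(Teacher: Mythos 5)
Your proposal transplants the proof structure of Theorem~\ref{th1} wholesale, but Theorem~\ref{th4} requires a genuinely different decomposition and sharper event bounds; several of your estimates are too weak to produce the stated rate.

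Most seriously, you claim $\PROB(A_n^c)=O((\log n)^{-3})$ via the argument from step~5 of the proof of Theorem~\ref{th1}. That bound is fine for consistency but hopeless for a rate of $n^{-1/(1+d)}$: the contribution of $A_n^c$ to the expected $L_2$ error is of order $\beta_n^2\cdot\PROB(A_n^c)$, which would give a $(\log n)^{-1}$ term. The paper's proof instead replaces condition~(ii) of $A_n$ by $\max_{i=1,\dots,n}|Y_i|\le\sqrt{\beta_n}$, and the sub-Gaussian moment assumption~(\ref{th4eq-1}) together with $c_1 c_3\ge 2$ yields $\PROB(A_n^c)=O(1/n)$, which is what makes the bad-event term negligible. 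Relatedly, with the paper's event one gets $F_n(\bw^{(0)})\le\beta_n$ (not $\beta_n^2$), so the outer-weight perturbation is of order $(\log n)^{1/2}$, not $(\log n)^{3/2}$ as you state.

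Second, the six-term $(1+\epsilon)$-telescoping from Theorem~\ref{th1} is not what the paper uses here; the actual proof uses a five-term decomposition aligned with Lemma~1 of \citet{BaKo19} so that the generalization term $T_{2,n}$ can be handled via Theorem~11.4 (not 9.1) of \citet{GyKoKrWa02}, which gives the faster concentration needed for a genuine rate. With the $(1+\epsilon)$-telescoping, a residual multiple of $\EXP\{|m(X)-Y|^2\}$ survives, which would not vanish.

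Third, and this is the most delicate point: you set $D_n=O((\log n)^{4L+2})$ as in Theorem~\ref{th1} and then invoke a Young-type balancing of the cross term $D_n\cdot M_n\cdot\sqrt{2F_n(\bw^{(0)})}$ to manufacture $M_n^2(\log n)^{4L+3/2}$. That does not work: a plain Young inequality applied to $M_n\cdot(\log n)^{4L+3}$ cannot produce $M_n^2(\log n)^{4L+3/2}$ unless the polylog factors are split in a way that leaves a much larger residual. The paper's verification of~(\ref{le1eq3}) in the Theorem~\ref{th4} setting uses the assumption~(\ref{th4eq0}) directly inside the Cauchy--Schwarz computation, so the extra factor $M_n$ appears in $D_n$ itself, i.e.\ $D_n=c\,M_n\,(\log n)^{4L+1}$. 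Then $D_n\cdot\|u^*\|\cdot\sqrt{2F_n(\bw^{(0)})}\le c\,M_n^2(\log n)^{4L+3/2}$ falls out immediately with no balancing. In short, the improvement over Theorem~\ref{th1} comes from re-deriving $D_n$, not from a new inequality downstream.

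The remaining ideas in your proposal (conditioning on $\bw^{(0)}$ for the approximation term, verifying~(\ref{le1eq1}) and~(\ref{le1eq2}) via Lemmas~\ref{le2} and~\ref{le3}, using~(\ref{th4eq3}) to guarantee the existence of the good subnetworks) are correctly identified.
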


	\begin{proof}
		\label{se4sub2}
		Let $A_n$ be the event that firstly the weight vector $\bw^{(0)}$
		satisfies
		\[
		| (\bw^{(0)})_{j_s,k,i}^{(l)}-\bw_{s,k,i}^{(l)}| \leq \log n
		\quad \mbox{for all } l \in \{0, \dots, L-1\},
		s \in \{1, \dots, \tilde{K}_n \}
		\]
		for some pairwise distinct $j_1,...,j_{\tilde{K}_n} \in \{1,...,K_n\}$ and that secondly
		\[
		\max_{i=1, \dots, n} |Y_i| \leq \sqrt{\beta_n}
		\]
		holds.
		Then we define the weight vectors
		$\bw^*$ for given $\tilde{\bw}$ by
		\[
		(\bw^*)_{k,i,j}^{(l)} = \tilde{\bw}_{k,i,j}^{(l)} \quad
		\mbox{for all } l=0,\dots, L-1,
		\]
		\[
		(\bw^*)_{1,1,j_k}^{(L)} = \tilde{\bw}_{1,1,k}^{(L)}  \quad \mbox{for all } k=1,\dots, \tilde{K}_n, 
		\]
		\[
		(\bw^*)_{1,1,k}^{(L)} = 0 \quad \mbox{for all } k \notin \{j_1,\dots,
		j_{\tilde{K}_n}\}
		\]
		and $(\bw^*)^{(0)}$ by
		\[
		((\bw^*)^{(0)})_{k,i,j}^{(l)} = (\bw_{k,i,j}^{(0)})^{(l)} \quad
		\mbox{for all } l=0,\dots, L-1,
		\]
		\[
		(\bw^*)^{(0)})_{1,1,j_k}^{(L)} = (\bw_{1,1,k}^{(0)})^{(L)} \quad \mbox{for all } k=1,\dots, \tilde{K}_n, 
		\]
		\[
		((\bw^*)^{(0)})_{1,1,k}^{(L)} = 0 \quad \mbox{for all } k \notin \{j_1,\dots,
		j_{\tilde{K}_n}\}.
		\]
		
		In the following we set
		\[
		m_{\beta_n}(x)=\EXP\{ T_{\beta_n} Y | X=x \}.
		\]
		Furthermore, we assume w.l.o.g. that $n$ is sufficiently large and that $\|m\|_\infty \leq \beta_n$ holds. 
		
		In the {\it first step of the proof} we decompose the  $L_2$ error of $m_n$ in a sum of several terms. We have
		\begin{eqnarray*}
			&&
			\int | m_{n}(x)-m(x)|^2 \PROB_X (dx)
			\\
			&&
			=
			\left(
			\EXP \left\{ |m_{n}(X)-Y|^2 | \D_n \right\}
			-
			\EXP \{ |m(X)-Y|^2\}
			\right)
			\cdot 1_{A_n}\\
			&& \hspace*{2cm}+
			\int | m_{n}(x)-m(x)|^2 \PROB_X (dx)
			\cdot 1_{A_n^c}
			\\
			&&
			=
			\Big[
			\EXP \left\{ |m_{n}(X)-Y|^2 | \D_n \right\}
			-
			\EXP \{ |m(X)-Y|^2\}
			\\
			&&
			\hspace*{2cm}
			- \left(
			\EXP \left\{ |m_{n}(X)-T_{\beta_n} Y|^2 | \D_n \right\}
			-
			\EXP \{ |m_{\beta_n}(X)- T_{\beta_n} Y|^2\}
			\right)
			\Big] \cdot 1_{A_n}
			\\
			&&
			\quad +
			\Big[
			\EXP \left\{ |m_{n}(X)-T_{\beta_n} Y|^2| \D_n \right\}
			-
			\EXP \{ |m_{\beta_n}(X)- T_{\beta_n} Y|^2\}
			\\
			&&
			\hspace*{2cm}
			-
			2 \cdot \frac{1}{n} \sum_{i=1}^n
			\left(
			|m_{n}(X_i)-T_{\beta_n} Y_i|^2
			-
			|m_{\beta_n}(X_i)- T_{\beta_n} Y_i|^2
			\right)
			\Big] \cdot 1_{A_n}
			\\
			&&
			\quad
			+\Big[
			2 \cdot \frac{1}{n} \sum_{i=1}^n
			|m_{n}(X_i)-T_{\beta_n} Y_i|^2
			-
			2 \cdot \frac{1}{n} \sum_{i=1}^n
			|m_{\beta_n}(X_i)- T_{\beta_n} Y_i|^2
			\\
			&&
			\hspace*{2cm}
			- \left(
			2 \cdot \frac{1}{n} \sum_{i=1}^n
			|m_{n}(X_i)-Y_i|^2
			-
			2 \cdot \frac{1}{n} \sum_{i=1}^n
			|m(X_i)- Y_i|^2
			\right)
			\Big] \cdot 1_{A_n}
			\\
			&&
			\quad
			+
			\Big[
			2 \cdot \frac{1}{n} \sum_{i=1}^n
			|m_{n}(X_i)-Y_i|^2
			-
			2 \cdot \frac{1}{n} \sum_{i=1}^n
			|m(X_i)- Y_i|^2
			\Big] \cdot 1_{A_n}
			\\
			&&
			\quad
			+
			\int | m_{n}(x)-m(x)|^2 \PROB_X (dx)
			\cdot 1_{A_n^c}
			\\
			&&
			=: \sum_{j=1}^5 T_{j,n}.
		\end{eqnarray*}

		In the {\it second step of the proof} we show
		\[
		\EXP T_{1,n} \leq \nconst \cdot \frac{\log n}{n} \quad \mbox{and} \quad \EXP T_{3,n} \leq \nconst \cdot \frac{\log n}{n}.
		\]
		This follows as in the proof of Lemma 1 in \citet{BaKo19}.
		
		In the {\it third step of the proof} we show
		\[
		\EXP T_{5,n} \leq \nconst\cdot \frac{(\log n)^2}{n}.
		\]
		Due to the definition of $m_{n}$ and the assumption that $\|m\|_\infty \leq \beta_n$, it holds
		\[\int |m_{n}(x)-m(x)|^2 \PROB_X (dx) \leq
		4 \cdot {c_1}^2 \cdot (\log
		n)^2.\]
		Thus it suffices to show
		\begin{equation}
			\label{pth1eq1}
			\PROB(A_n^c) \leq \frac{\nconst}{n}.
		\end{equation}
		To do this, we first bound the probability that the weights in the first of the $K_n$ fully connected neural networks differ by at most $\log n$ from ($\bw^{(l)}_{1,i,j})_{i,j,l:l<L}$ in all components. For large $n$, this probability is bounded from below by
		\begin{eqnarray*}
			\left( \frac{\log n}{40d\cdot (\log n)^2}
			\right)^{r \cdot (r+1) \cdot (L-2)}
			\cdot
			\left(
			\frac{\log n}{16d \cdot (\log n)^2 \cdot n^\tau}
			\right)^{r \cdot (d+1)}
			&\geq&
			n^{- r \cdot (d+1) \cdot \tau-0.5}.
		\end{eqnarray*}
		Then the probability that none of the first $n^{r \cdot (d+1) \cdot
			\tau +1}$ neural networks satisfies this condition is
		bounded above by
		\begin{eqnarray*}
			(1 -  n^{- r \cdot (d+1) \cdot \tau-0.5}) ^{n^{r \cdot (d+1) \cdot
					\tau +1}}
			&\leq& \left(\exp \left(-  n^{- r \cdot (d+1) \cdot \tau-0.5}\right)\right) ^{n^{r \cdot (d+1) \cdot\tau +1}}\\
			&=&\exp( -  n^{0.5}).
		\end{eqnarray*}
		Assumption (\ref{th4eq3}) implies $K_n \geq n^{r \cdot (d+1) \cdot
			\tau +1} \cdot \tilde{K}_n$ for large $n$. Thus we can apply this construction successively for all $\tilde{K}_n$ weights $((\bw^{(0)})^{(l)}_{k,i,j})_{i,j,l:l<L}$. The probability that there exists $k \in \{1, \dots, \tilde{K}_n\}$ such that none of the $K_n$ weight vectors of the fully connected neural network differs from $(\bw_{k,i,j}^{(l)})_{i,j,l:l<L}$ by at most $\log n$ is then for large $n$ bounded from above by
		\begin{eqnarray*}
			&&
			\tilde{K}_n \cdot \exp( -  n^{0.5})
			\leq  n^\kappa \cdot \exp( - n^{0.5})
			\leq \frac{\nconst}{n}.
		\end{eqnarray*}
		Hence, for large $n$ it is
		\begin{eqnarray*}
			\PROB(A_n^c)
			&\leq&
			\frac{\const}{n}
			+
			\PROB\{ \max_{i=1, \dots, n} |Y_i| > \sqrt{\beta_n}
			\}
			\leq
			\frac{\const}{n} + n \cdot\PROB\{  |Y| > \sqrt{\beta_n}
			\}
			\\
			& \leq &
			\frac{\const}{n} + n \cdot
			\frac{\EXP\{ \exp(c_3 \cdot Y^2)\}}{ \exp( c_3 \cdot \beta_n)}
			\leq
			\frac{\nconst}{n},
		\end{eqnarray*}
		where the last inequality holds because of (\ref{th4eq-1}) and (\ref{th4eq5}).
		
		In the {\it fourth step of the proof} we show that the assumptions (\ref{le1eq1}) - (\ref{le1eq3}) of Lemma \ref{le1} are satisfied which means that
		\begin{equation*}
			\| (\nabla_{\bw} F)(\bw) \| \leq L_n
		\end{equation*}
		for all $\bw \in S:=\left\{
		\bv  \, : \,
		\|\bv - \bw^{(0)}\| \leq
		2 \cdot \sqrt{F(\bw^{(0)})} + 1
		\right\}$,
		\begin{equation*}
			\| (\nabla_{\bw} F)(\bw) - (\nabla_{\bw} F)(\bar{\bw}) \| \leq
			L_n \cdot \|\bw-\bar{\bw}\|
		\end{equation*}
		for all $\bw, \bar{\bw} \in S$ 
		and
		\begin{align*}
			&|F(\bw^*)-F((\bw^*)^{(0)})|\\
			& \leq D_n \cdot \|((\bw^*)^{(L)}_{1,1,k})_{k=1,\dots,K_n}\| \cdot \|(\bw_{i,j,k}^{(l)})_{i,j,k,l:l<L}-((\bw^{(0)})_{i,j,k}^{(l)})_{i,j,k,l:l<L}\|
		\end{align*}
		for all 
		\begin{align*}
			&(\bw_{i,j,k}^{(l)})_{i,j,k,l:l<L} \in \tilde{S}\\
			&\hspace*{1cm}:= \left\{(\bar{\bw}_{i,j,k}^{(l)})_{i,j,k,l:l<L} \, : \, \|(\bar{\bw}_{i,j,k}^{(l)})_{i,j,k,l:l<L}-((\bw^{(0)})_{i,j,k}^{(l)})_{i,j,k,l:l<L}\| \leq \sqrt{2 \cdot F(\bw^{(0)})} \right\}
		\end{align*} hold, 
		if $A_n$ holds.
		
		If $A_n$ holds, then we have
		\[
		F_n(\bw^{(0)})
		=
		\frac{1}{n} \sum_{i=1}^n Y_i^2 \leq \beta_n.
		\]
		
		Let $\bw \in S$. Then we have
		\begin{align*}
			\|(\bw_{i,j,k}^{(l)})_{i,j,k,l:1\leq l<L}\|_\infty &\leq \|\bw-\bw^{(0)}\| +  \|((\bw^{(0)})_{i,j,k}^{(l)})_{i,j,k,l:1\leq l<L}\|_\infty\\
			&\leq2 \cdot  \sqrt{ F_n(\bw^{(0)})} + 1+  \nconst \cdot (\log n)^2\\
			&\leq \nconst \cdot (\log n)^2
		\end{align*}
		and 
		\begin{align*}
			\|(\bw^{(L)}_{1,1,k})_{k=1,\dots,K_n}\|_\infty &\leq \|\bw-\bw^{(0)}\| +  \|((\bw^{(0)})^{(L)}_{1,1,k})_{k=1,\dots,K_n}\|_\infty\\
			&\leq 2 \cdot  \sqrt{ F_n(\bw^{(0)})} + 1\\
			&\leq \nconst \cdot (\log n)^{1/2}.
		\end{align*}
		Hence (\ref{le2eq1})-(\ref{le3eq3}) are satisfied for $B_n=\mmconst{1} \cdot (\log n)^2$ and $\gamma_n^*=\const \cdot (\log n)^{1/2}$. By Lemma \ref{le2} and Lemma \ref{le3} we get that  (\ref{le1eq1}) and (\ref{le1eq2}) are satisfied provided that $L_n\geq K_n^{3/2} \cdot (\log n)^{6L+2}$.
		
		Furthermore, let $\tilde{\bw}$ such that
		$(\tilde{\bw}_{i,j,k}^{(l)})_{i,j,k,l:l<L} \in \tilde{S}.$
		Then we obtain as in the proof of Theorem \ref{th1} 
		\begin{align*}
			&|F_n(\bw^*)-F_n((\bw^*)^{(0)})|\\
			& \leq \Bigg(\frac{2}{n} \sum_{i=1}^{n}  \left(f_{\bw^*}(X_i)+f_{(\bw^*)^{(0)}}(X_i)\right)^2 + \frac{8}{n} \sum_{i=1}^{n} Y_i^2\Bigg)^{1/2}\\
			& \hspace*{2cm} \Bigg(\frac{1}{n} \sum_{i=1}^{n}	\left(\sum_{k=1}^{K_n}\left|(\bw^*)_{1,1,k}\right|^2\cdot \sum_{k=1}^{K_n}\left|f_{\bw^*,{k,1}}^{(L)}(X_i) - f_{(\bw^*)^{(0)},{k,1}}^{(L)}(X_i)\right|^2\right)\Bigg)^{1/2}.
		\end{align*}
		With
		\begin{align*}
			&\Bigg(\frac{1}{n} \sum_{i=1}^{n}	\left(\sum_{k=1}^{K_n}\left|(\bw^*)_{1,1,k}\right|^2\cdot \sum_{k=1}^{K_n}\left|f_{\bw^*,k,1}^{(L)}(X_i) - f_{(\bw^*)^{(0)},{k,1}}^{(L)}(X_i)\right|^2\right)\Bigg)^{1/2}\\
			&\leq \|((\bw^*)^{(L)}_{1,1,k})_{k=1,\dots,K_n}\| \cdot \nconst \cdot (\log n)^{2L} \cdot \| (\tilde{\bw}^{(l)}_{i,j,k})_{i,j,k,l:l<L} - ((\tilde{\bw}^{(0)})_{i,j,k}^{(l)})_{i,j,k,l:l<L}\|
		\end{align*}
		and since $(\tilde{\bw}_{i,j,k}^{(l)})_{i,j,k,l:l<L} \in \tilde{S}$ we get
		\begin{align*}
			&\Bigg(\frac{2}{n} \sum_{i=1}^{n}  \left(f_{\bw^*}(X_i)+f_{(\bw^*)^{(0)}}(X_i)\right)^2 + \frac{8}{n} \sum_{i=1}^{n} Y_i^2\Bigg)^{1/2}\\
			&\leq \Bigg( 4  \cdot \sum_{k=1}^{K_n}\left|(\bw^*)_{1,1,k}^{(L)}\right|^2\cdot \max_{i=1,\dots,n} \sum_{k=1}^{K_n}\left|f^{(L)}_{\bw^*,k,1}(X_i) - f^{(L)}_{(\bw^*)^{(0)},k,1}(X_i)\right|^2\\
			&\hspace*{2cm}+ 16 \cdot \frac{1}{n} \sum_{i=1}^{n}  f_{(\bw^*)^{(0)}}(X_i)^2 + \frac{8}{n} \sum_{i=1}^{n} Y_i^2\Bigg)^{1/2}\\
			&\leq \Bigg(4 \cdot \sum_{k=1}^{K_n}\left|(\bw^*)_{1,1,k}^{(L)}\right|^2\cdot \nconst \cdot (\log n)^{4L} \cdot  \| ((\bw^*)^{(l)}_{i,j,k})_{i,j,k,l:l<L} - (((\bw^*)^{(0)})_{i,j,k}^{(l)})_{i,j,k,l:l<L}\|^2\\
			&\hspace*{1cm} +16\cdot\sum_{i=1}^{n} f_{(\bw^*)^{(0)}}(X_i)^2 + \frac{8}{n} \sum_{i=1}^{n} Y_i^2\Bigg)^{1/2}\\
			&\leq \Bigg(\nconst \cdot M_n^2 \cdot (\log n)^{4L+1}+16 \cdot \beta_n^2  + 8\cdot\beta_n\Bigg)^{1/2}\\
			& \leq \nconst \cdot M_n\cdot (\log n)^{2L+1} .
		\end{align*}
		
		Summarizing these steps yields
		\begin{align*}
			&|F_n(\bw^*)-F_n((\bw^*)^{(0)})|\\ 
			&\leq \nconst \cdot (\log n)^{4L+1} \cdot M_n \cdot \|((\bw^*)^{(L)}_{1,1,k})_{k=1,\dots,K_n}\| \cdot \| (\tilde{\bw}^{(l)}_{i,j,k})_{i,j,k,l:l<L} - ((\tilde{\bw}^{(0)})_{i,j,k}^{(l)})_{i,j,k,l:l<L}\|
		\end{align*}
		Thus (\ref{le1eq3}) is satisfied with 
		\[D_n =\const \cdot M_n \cdot  (\log n)^{4L+1}.
		\]
		
		Let $\epsilon >0$ be arbitrary.
		In the {\it fifth step of the proof} we show
		\[
		\EXP T_{2,n} \leq
		\nconst \cdot
		\frac{ n^{\tau \cdot d + \epsilon}}{n}
		.
		\]
		Let $\W_n$ be the set of all weight vectors
		$(w_{i,j,k}^{(l)})_{i,j,k,l}$ which satisfy
		\[
		| w_{1,1,k}^{(L)}| \leq \nconst\cdot (\log n)^2 \quad (k=1, \dots, K_n),
		\]
		\[
		|w_{i,j,k}^{(l)}| \leq (20d+1) \cdot (\log n)^2 \quad (l=1, \dots, L-1)
		\]
		and
		\[
		|w_{i,j,k}^{(0)}| \leq (8d+1) \cdot (\log n)^2\cdot n^\tau.
		\]
		
		From Lemma \ref{le1} we know for $n$ large that 
		\begin{align*}
			\|((\bw^{(t_n)})^{(L)}_{1,1,k})_{k=1,\dots,K_n}-((\bw^{(0)})^{(L)}_{1,1,k})_{k=1,\dots,K_n}\| \leq \sqrt{2 F_n(\bw^{(0)})} \leq (\log n)^2
		\end{align*}
		and 
		\begin{align*}
			\|((\bw^{(t_n)})^{(l)}_{i,j,k})_{i,j,k,l:l<L} - ((\bw^{(0)})_{i,j,k}^{(l)})_{i,j,k,l:l<L}\| \leq \sqrt{2 F_n(\bw^{(0)})} \leq (\log n)^2.
		\end{align*}
		This and the initial choice of $\bw^{(0)}$ imply that we have on $A_n$ for $n$ large
		\[
		\bw^{(t)} \in \W_n \quad (t=0, \dots, t_n).
		\]
		
		Thus, for any $u>0$ we get
		\begin{eqnarray*}
			&&
			\PROB \{ T_{2,n} > u \}\\
			&&\leq\PROB \Bigg\{ \exists f \in \F_n : \EXP \left(\left|\frac{f(X)}{\beta_n} - \frac{T_{\beta_n}Y}{\beta_n}\right|^2\right)-\EXP \left(\left|\frac{m_{\beta_n}(X)}{\beta_n} - \frac{T_{\beta_n}Y}{\beta_n}\right|^2\right)\\
			&&\hspace*{3cm}-\frac{1}{n} \sum_{i=1}^n\left(\left|\frac{f(X_i)}{\beta_n} - \frac{T_{\beta_n}Y_i}{\beta_n}\right|^2-\left|\frac{m_{\beta_n}(X_i)}{\beta_n} - \frac{T_{\beta_n}Y_i}{\beta_n}\right|^2\right)\Bigg\}\\
			&&\hspace*{2cm}> \frac{1}{2} \cdot\left(\frac{u}{\beta_n^2}+\EXP \left(\left|\frac{f(X)}{\beta_n} - \frac{T_{\beta_n}Y}{\beta_n}\right|^2\right)-\EXP \left(\left|\frac{m_{\beta_n}(X)}{\beta_n} - \frac{T_{\beta_n}Y}{\beta_n}\right|^2\right)\right),
		\end{eqnarray*}
		where
		\[
		\F_n = \left\{ T_{\beta_n} f_\bw:\bw \in \W_n \right\}.
		\]
		Application of Lemma \ref{le4} yields for $x_1^n\in supp(X)$ and $\alpha=\nconst$
		\begin{eqnarray*}
			&&\Nu_1 \left(\delta , \left\{\frac{1}{\beta_n} \cdot f : f \in \F_n\right\}, x_1^n\right)\leq\Nu_1 \left(\delta \cdot \beta_n , \F_n, x_1^n\right)\\
			&&\leq\left(\frac{\nconst \cdot \beta_n}{\delta\cdot \beta_n}\right)^{\nconst \cdot (\log n)^{2d} \cdot n^{\tau \cdot d} \cdot (\log n)^{2 \cdot (L-1) \cdot d}\cdot\left(\frac{K_n \cdot (\log n)^2}{\delta \cdot \beta_n}\right)^{d/k} + \nconst}.
		\end{eqnarray*}
		For $\delta>1/n$ and $k$ large enough, we obtain
		\[
		\Nu_1 \left(
		\delta , \left\{
		\frac{1}{\beta_n} \cdot f : f \in \F_n
		\right\}
		, x_1^n
		\right)
		\leq
		\nconst \cdot n^{ \nconst \cdot n^{\tau \cdot d + \epsilon/2}}.
		\]
		This together with Theorem 11.4 in \citet{GyKoKrWa02} leads for $u
		\geq 1/n$ to
		\[
		\PROB\{T_{2,n}>u\}
		\leq
		14 \cdot
		\mconst \cdot n^{ \const \cdot n^{\tau \cdot d +  \epsilon/2}}
		\cdot
		\exp \left(
		- \frac{n}{5136 \cdot \beta_n^2} \cdot u
		\right).
		\]
		For $\epsilon_n \geq 1/n$ we can conclude
		\begin{eqnarray*}
			\EXP \{ T_{2,n} \}
			& \leq &
			\epsilon_n + \int_{\epsilon_n}^\infty \PROB\{ T_{2,n}>u \} \, du
			\\
			& \leq &
			\epsilon_n
			+
			14 \cdot
			\mconst \cdot n^{\const \cdot n^{\tau \cdot d +
					\epsilon/2}}
			\cdot
			\exp \left(
			- \frac{n}{5136 \cdot \beta_n^2} \cdot \epsilon_n
			\right)
			\cdot
			\frac{5136 \cdot \beta_n^2}{n}.
		\end{eqnarray*}
		Setting
		\[
		\epsilon_n = \frac{5136 \cdot \beta_n^2}{n}
		\cdot
		\const
		\cdot
		n^{\tau \cdot d +
			\epsilon/2}
		\cdot \log n
		\]
		yields the assertion of the fifth step of the proof.

		In the {\it sixth step of the proof} we show
		\begin{eqnarray*}
			&&
			\EXP \{ T_{4,n} \} \\
			&&\leq
			\nconst \cdot \Bigg(
			\sup_{
				(\bar{w}_{i,j,k}^{(l)})_{i,j,k,l} :
				\atop
				|\bar{w}_{i,j,k}^{(l)}-w_{i,j,k}^{(l)}| \leq \log n
				\, (l=0, \dots, L-1)
			}
			\int
			|
			\sum_{k=1}^{\tilde{K}_n}
			w_{1,1,k}^{(L)} \cdot f_{\bar{\bw},k,1}^{(L)}(x)-m(x)|^2 \PROB_X (dx)
			\\
			&&
			\hspace*{2cm}+
			\nconst  \cdot \frac{\log n}{n}
			+
			\nconst  \cdot
			\frac{ n^{\tau \cdot d + \epsilon}}{n} + \nconst \cdot  M_n^2 \cdot (\log n)^{4L+3/2}.
		\end{eqnarray*}
		Since
		\[
		|T_{\beta_n} z - y| \leq |z-y|
		\quad \mbox{for } |y| \leq \beta_n
		\]
		we obtain
		\begin{eqnarray*}
			&&
			T_{4,n}/2\\
			&&=\Big[ \frac{1}{n} \sum_{i=1}^n|m_{n}(X_i)-Y_i|^2-\frac{1}{n} \sum_{i=1}^n|m(X_i)- Y_i|^2\Big] \cdot 1_{A_n}\\
			&&\leq\Big[\frac{1}{n} \sum_{i=1}^n|f_{\bw^{(t_n)}}(X_i)-Y_i|^2-\frac{1}{n} \sum_{i=1}^n |m(X_i)- Y_i|^2 \Big] \cdot 1_{A_n}\\
			&&=\big[ F_n(\bw^{(t_n)})-\frac{1}{n} \sum_{i=1}^n|m(X_i)- Y_i|^2 \Big] \cdot 1_{A_n}.
		\end{eqnarray*}
		
		The application of Lemma \ref{le1} implies
		\begin{eqnarray*}
			&&\EXP\{ T_{4,n}/2 \}
			\\
			&&
			\leq \EXP\Bigg\{\Bigg[F_n\left((\bw^*)^{(0)}\right)+ D_n\cdot\|(((\bw^*)_{1,1,k}^{(L)})_{k=1,\dots,K_n}\| \cdot \sqrt{2 \cdot F_n\left(\bw^{(0)}\right)}\\
			&&\hspace*{0.4cm}
			+\frac{\|((\bw^*)_{1,1,k}^{(L)})_{k=1,\dots,K_n}-(((\bw^*)^{(0)})_{1,1,k}^{(L)})_{k=1,\dots,K_n}\|^2}{2}+ \frac{F_n\left(\bw^{(0)}\right)}{t_n}\ \\
			&& \hspace*{0.4cm} -\frac{1}{n} \sum_{i=1}^n
			|m(X_i)- Y_i|^2
			\Bigg] \cdot 1_{A_n}\Bigg\} \\
			&&
			\leq 2 \cdot
			\Bigg(
			\sup_{
				(\bar{w}_{i,j,k}^{(l)})_{i,j,k,l} :
				\atop
				|\bar{w}_{i,j,k}^{(l)}-w_{i,j,k}^{(l)}| \leq \log n
				\, (l=0, \dots, L-1)
			}
			\int
			|
			\sum_{k=1}^{\tilde{K}_n}
			w_{1,1,k}^{(L)}  \cdot f_{\bar{\bw},k,1}^{(L)}(x)-m(x)|^2 \PROB_X (dx)
			\Bigg)
			\\
			&&
			\hspace*{0.4cm}
			+\EXP \Bigg\{
			\Bigg(
			F_n\left((\bw^*)^{(0)}\right)+ D_n\cdot\|(((\bw^*)_{1,1,k}^{(L)})_{k=1,\dots,K_n}\| \cdot \sqrt{2 \cdot F_n\left(\bw^{(0)}\right)}\\
			&&\hspace*{0.4cm}
			+\frac{\|((\bw^*)_{1,1,k}^{(L)})_{k=1,\dots,K_n}-(((\bw^*)^{(0)})_{1,1,k}^{(L)})_{k=1,\dots,K_n}\|^2}{2}+ \frac{F_n\left(\bw^{(0)}\right)}{t_n} \\
			&&\hspace*{0.4cm}-\frac{1}{n} \sum_{i=1}^n
			|m(X_i)- Y_i|^2 
			\\
			&&
			\hspace*{0.4cm}
			- 2 \cdot  \left(
			\EXP\{ |\sum_{k=1}^{{K}_n}
			(\bw^*)_{1,1,k}^{(L)} \cdot f_{(\bw^*),j,1}^{(L)}(X)-Y|^2 | \D_n,\bw^{(0)}\}
			-\EXP\{|m(X)-Y|^2\} \right)\Bigg)
			1_{A_n} \Bigg\}.
		\end{eqnarray*}
		Due to step 4 we have 
		\begin{align*}
			D_n = \mmconst{12} \cdot M_n \cdot (\log n)^{4L+ 1} .
		\end{align*}
		
		Using the same arguments as in step 2 and 5 of the proof we obtain
		\begin{eqnarray*}
			&&
			\EXP \Bigg\{
			\Bigg(
			F_n((\bw^*)^{(0)})-\frac{1}{n} \sum_{i=1}^n
			|m(X_i)- Y_i|^2 \\
			&& \hspace*{0.4cm} +
			\nconst \cdot M_n \cdot (\log n)^{4L+1} \cdot \|(((\bw^*)_{1,1,k}^{(L)})_{k=1,\dots,K_n}\| \cdot \sqrt{2 \cdot F_n\left(\bw^{(0)}\right)}\\
			&&\hspace*{0.4cm}
			+\frac{\|((\bw^*)_{1,1,k}^{(L)})_{k=1,\dots,K_n}-(((\bw^*)^{(0)})_{1,1,k}^{(L)})_{k=1,\dots,K_n}\|^2}{2}+ \frac{F_n\left(\bw^{(0)}\right)}{t_n} 
			\\
			&&
			\hspace*{0.4cm}
			- 2 \cdot  \left(
			\EXP\{ |\sum_{k=1}^{{K}_n}
			(\bw^*)_{1,1,k}^{(L)} \cdot f_{(\bw^*),j,1}^{(L)}(X)-Y|^2 | \D_n,\bw^{(0)}\}
			-\EXP\{|m(X)-Y|^2\} \right) \Bigg) 
			1_{A_n} \Bigg\}\\
			&&	\leq
			\nconst \cdot \frac{\log n}{n}
			+ \const \cdot  M_n^2 \cdot (\log n)^{4L+3/2} + \frac{M_n^2}{2} + \frac{c_1 \cdot \log n}{t_n} +
			\nconst \cdot
			\frac{ n^{\tau \cdot d + \epsilon}}{n} .
		\end{eqnarray*}
		
		This implies
		\begin{eqnarray*}
			&&
			\EXP\{ T_{4,n}/2 \} \\
			&&
			\leq
			2 \cdot
			\Bigg(
			\sup_{
				(\bar{w}_{i,j,k}^{(l)})_{i,j,k,l} :
				\atop
				|\bar{w}_{i,j,k}^{(l)}-w_{i,j,k}^{(l)}| \leq \log n
				\, (l=0, \dots, L-1)
			}
			\int
			|
			\sum_{k=1}^{\tilde{K}_n}
			w_{1,1,k}^{(L)} \cdot f_{\bar{\bw},k,1}^{(L)}(x)-m(x)|^2
			\PROB_X (dx)
			\Bigg)
			\\
			&&
			\hspace*{2cm}
			+
			\mmconst{3} \cdot \frac{\log n}{n}
			+
			\mmconst{2} \cdot
			\frac{n^{\tau \cdot d + \epsilon}}{n} +\nconst \cdot  M_n^2 \cdot (\log n)^{4L+3/2} .
		\end{eqnarray*}
		Summarizing the above results we get the assertion.

			\end{proof}

To prove Theorem 2, we use the following lemma, which provides another bound on the approximation error. Furthermore, it ensures that the outer weights are sufficiently small.

\begin{lemma}
	\label{le7}
	Let $1/2 \leq p \leq 1$, $C>0$,
	let $f:\Rd \rightarrow \R$ be a $(p,C)$--smooth function and let
	$X$ be a $\Rd$-valued random variable with $supp(X) \subseteq
	[0,1]^d$.
	Let $l \in \N$, $0<\delta<1/2$ with
	\begin{equation}
		\label{le7eq1}
		\nconst \cdot \delta \leq \frac{1}{2^l} \leq \nconst \cdot \delta
	\end{equation}
	and let $L,r,s \in \N$ with $L \geq 2$ and $r \geq 2d$. Furthermore, let
	\[
	\tilde{K}_n \geq \left( l \cdot (2^l+1)^{2d}+1 \right)^3.
	\]
	Then there exist
	\[
	w_{k,i,j}^{(l)} \in
	[-20d \cdot (\log n)^2, 20d \cdot (\log n)^2]
	\quad \text{for } l=1, \dots, L, k=1, \dots \tilde{K}_n
	\]
	and
	\[
	w_{k,i,j}^{(0)}
	\in \left[-\frac{8 \cdot d \cdot (\log n)^2}{\delta}, \frac{8 \cdot d \cdot
		(\log n)^2}{\delta} \right]
	\quad  \text{for } k=1, \dots, \tilde{K}_n
	\]
	such that
	for all $\bar{\bw}$ satisfying
	$|\bar{w}_{i,j,k}^{(l)}-w_{i,j,k}^{(l)}| \leq \log n$
	$(l=0, \dots, L-1)$ we have for $n$ sufficiently large
	\begin{eqnarray}
		\label{le8eq3}
		&&
		\int
		|
		\sum_{k=1}^{\tilde{K}_n}
		w_{1,1,k}^{(L)} \cdot f_{\bar{\bw},k,1}^{(L)}(x)-f(x)|^2 \PROB_X (dx)
		\nonumber \\
		&&
		\hspace*{2cm}
		\leq
		\nconst \cdot \left(
		l^2 \cdot \delta + \delta^{2p}
		+
		\frac{l \cdot (2^l+1)^{2d}}{n^s}
		\right)
		,
	\end{eqnarray}
	\begin{equation}
		\label{le8eq1}
		|
		\sum_{k=1}^{\tilde{K}_n}
		w_{1,1,k}^{(L)} \cdot f_{\bar{\bw},k,1}^{(L)}(x)| \leq
		\nconst
		\cdot
		\left( 1 +
		\frac{(2^l+1)^{2d}}{n^s}
		\right) \qquad (x \in [0,1]^d)
	\end{equation}
	and
	\begin{equation}
		\label{le8eq4}
		\sum_{k=1}^{\tilde{K}_n} |w_{1,1,k}^{(L)}|^2 \leq \frac{\nconst}{2^{2    \cdot d
				\cdot l}} .
	\end{equation}
\end{lemma}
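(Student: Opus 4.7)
The plan is to realize the conclusion by combining the sigmoidal-indicator construction of Lemma~\ref{le6} with a piecewise-constant interpolant of the H\"older-smooth target $f$ on a grid of mesh size $\delta$, rescaling the first-layer weights so that they fit inside the tighter range $[-8d(\log n)^2/\delta, 8d(\log n)^2/\delta]$ demanded in the statement, and multiplying the number of replicas of each grid cell until the outer weights become small enough to satisfy (\ref{le8eq4}).

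First I would tile a slight enlargement of $[0,1]^d$ by $(2^l+1)^d$ axis-parallel cubes of side $\asymp \delta$; on each cube $C_i$ pick a representative $x_i$ so that the H\"older inequality $|f(y)-f(x_i)|\leq C\delta^p$ holds for all $y \in C_i$, and hence the piecewise-constant interpolant $\tilde f := \sum_i f(x_i)\cdot 1_{C_i}$ satisfies $\|\tilde f - f\|_\infty \leq C\delta^p$ inside the tiling. I would then allocate $N_n \cdot (2^l+1)^d$ of the available $\tilde K_n$ neurons to $N_n := \lceil (2^l+1)^{2d} \rceil$ replicas of the sigmoidal indicator of each cube, adopting inside each replica the Lemma~\ref{le6} weight assignment (\ref{le6eq4})--(\ref{le6eq12}) but with the first-layer constants rescaled from $4dK^2(\log n)^2$ to $4d(\log n)^2/\delta$ (so they lie in the prescribed range), the later-layer constants left in $[-20d(\log n)^2, 20d(\log n)^2]$, and the outer weight $w_{1,1,\,\cdot}^{(L)}=f(x_i)/N_n$. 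The required inner-weight ranges, the uniform bound (\ref{le8eq1}) (inherited from the $3^d+(2^l+1)^d/n^s$ estimate of Lemma~\ref{le6}), and the outer-weight bound
\[
\sum_k |w_{1,1,k}^{(L)}|^2 \leq N_n(2^l+1)^d\cdot \frac{\|f\|_\infty^2}{N_n^2} = \frac{\|f\|_\infty^2 (2^l+1)^d}{N_n} \leq \frac{c}{2^{2dl}}
\]
then follow by direct substitution; the assumption $\tilde K_n \geq (l(2^l+1)^{2d}+1)^3$ leaves ample room to host these $N_n(2^l+1)^d$ neurons.

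For (\ref{le8eq3}) I would essentially replay the shifted-grid averaging argument of Lemma~\ref{le6}: on $[0,1]^d\setminus A$, with $A$ the union of the thin sigmoidal transition strips near the cube faces, the approximation error is dominated by $\|\tilde f-f\|_\infty^2\leq C^2\delta^{2p}$ together with the aggregated sigmoidal-indicator error of order $l\cdot (2^l+1)^{2d}/n^s$ (the exponent $s$ is tuned by the $(\log n)^2$-prefactor hidden in the first-layer weights); on $A$ the error is bounded by $\|f\|_\infty^2 \cdot \PROB_X(A)$, which the $d$-coordinate shift trick from the proof of Lemma~\ref{le6} brings down to $O(l\delta)$ after averaging over $l$ shifted replications of the grid, yielding the $l^2\delta$ term once the outer-weight replication factor is taken into account and squared errors are summed. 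Robustness under $|\bar w - w|\leq \log n$ is automatic because the first-layer weights are of order $(\log n)^2/\delta\gg \log n$, so the perturbation does not widen the transition strips; the later layers follow Lemma~\ref{le6} verbatim, since $(\log n)^2 \gg \log n$ there as well.

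The main obstacle is the triple bookkeeping between (\ref{le8eq3}), (\ref{le8eq4}) and (\ref{le8eq1}): keeping the replication factor $N_n$ small enough to suppress the sigmoidal-indicator contribution $N_n^2K^{4d}/n^2$ below $l(2^l+1)^{2d}/n^s$ is in tension with needing $N_n$ large enough to force $\sum|w_{1,1,k}^{(L)}|^2 \leq c/2^{2dl}$; the balance is possible only because $\tilde K_n \geq (l(2^l+1)^{2d}+1)^3$ supplies a cubic reservoir of neurons and because $s$ can be taken arbitrarily large at the modest cost of a larger $(\log n)^2$ prefactor. Verifying this balance---and in particular checking that the boundary-strip contribution really collapses to $l^2\delta$ rather than to something worse once both the replication and the grid shifting are combined---is the most delicate part of the argument.
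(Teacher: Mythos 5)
The paper does not prove Lemma~\ref{le7} itself; it defers entirely to Lemma~7 of \citet{KoKr22a}, so the only check available is whether your construction is internally consistent. Two of your steps fail.

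\textbf{The outer-weight bound (\ref{le8eq4}) is off by a polynomial factor.} With your choice $N_n=(2^l+1)^{2d}$ you get
\[
\sum_k |w_{1,1,k}^{(L)}|^2 \;\le\; \frac{\|f\|_\infty^2(2^l+1)^d}{N_n} \;=\; \frac{\|f\|_\infty^2}{(2^l+1)^d}\;\asymp\; 2^{-dl},
\]
which is \emph{not} $O(2^{-2dl})$. Since $2^{dl}\asymp n^{d/(d+1)}$ in the application to Theorem~\ref{th2}, this misses the required $M_n^2\asymp n^{-2d/(d+1)}$ by a full factor $n^{d/(d+1)}$ and ruins the rate. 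One must instead take $N_n\gtrsim(2^l+1)^{3d}$; the budget $\tilde K_n\ge(l(2^l+1)^{2d}+1)^3$ still has room for the resulting $N_n(2^l+1)^d\asymp(2^l+1)^{4d}$ neurons, but your subsequent error bookkeeping is not redone for this larger $N_n$.

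\textbf{The shift trick cannot produce the $l^2\delta$ term in (\ref{le8eq3}).} The first-layer weights are capped at $8d(\log n)^2/\delta$, so to drive the sigmoidal indicator error down to $n^{-s}$ the argument of $\sigma$ must exceed $\approx s\log n$, which forces the transition strips around each cube face to have width at least of order $\delta/\log n$. Consequently at most $O(\log n)$ pairwise disjoint shifted versions of the grid are available, and the shift trick guarantees only $\PROB_X(A)\lesssim 1/\log n$ for the best shift. This is in fact sharp: for $X$ uniform on $[0,1]^d$ the measure of the transition set $A$ is of order $d\cdot 2^l\cdot(\delta/\log n)\asymp d/\log n$ for \emph{every} shift. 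The resulting contribution $\|f\|_\infty^2\cdot\PROB_X(A)\gtrsim 1/\log n$ dwarfs $l^2\delta\asymp(\log n)^2\,n^{-1/(d+1)}$, and no tuning of $s$, $N_n$, or the number of shifts can change this order, because it is forced by the ceiling on $|w^{(0)}_{k,i,j}|$. Reaching the stated bound requires a mechanism by which the error \emph{on} the transition strips is governed by the local oscillation of $f$ (of size $C\delta^p$) rather than by $\|f\|_\infty$ --- for example a construction whose cube indicators form an approximate partition of unity. The hard indicators of Lemma~\ref{le6} do not have this property (near a cube face all indicators drop toward $0$, so the approximant vanishes while $f$ need not), and your proposal does not supply the missing idea.
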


\begin{proof}
	
	The proof follows from the proof of Lemma 7 in \citet{KoKr22a}.
\end{proof}

\subsection{Proof of Theorem \ref{th2}}

Let $l=\lfloor \frac{1}{1+d} \log_2 n \rfloor$. Then condition (\ref{le7eq1}) holds for $\delta= n^{-1/(1+d)}$. Set 
\begin{align*}
	\tilde{K}_n= (l \cdot(2^l+1)^{2d}+1)^3 \approx \nconst \cdot (\log n)^3 \cdot n^{\frac{6d}{1+d}}, \qquad N_n=n^{\nconst}
		\end{align*}
 and define the weight vector $\bw$ as in Lemma \ref{le7}.
Then we obtain by Lemma \ref{le7} that assumption (\ref{th4eq0}) is satisfied for $M_n=\frac{\nconst}{n^{d/(d+1)}}$. Assumption (\ref{th4eq1}) follows directly from (\ref{le8eq1}) of  Lemma \ref{le7} for $s$ sufficiently large.

By Theorem \ref{th4}, $\tau=\frac{1}{1+d}$ and Lemma \ref{le7} where $s$ is sufficiently large we get for large $n$
\begin{eqnarray*}
	&&
	\EXP \int | m_{n}(x)-m(x)|^2 \PROB_X (dx)
	\leq
	\nconst \cdot
	\Bigg(
	\frac{ n^{\frac{1}{1+d} \cdot d + \epsilon}}{n} + \nconst \cdot  \frac{(\log n)^{4L+3/2}}{n^{\frac{2d}{d+1}}}
	\\
	&&
	\hspace*{2cm}
	+
	\sup_{
		(\bar{w}_{i,j,k}^{(l)})_{i,j,k,l} :
		\atop
		|\bar{w}_{i,j,k}^{(l)}-w_{i,j,k}^{(l)}| \leq \log n
		\, (l=0, \dots, L-1)
	}
	\int
	|
	\sum_{k=1}^{\tilde{K}_n}
	w_{1,1,k}^{(L)} \cdot f_{\bar{\bw},k,1}^{(L)}(x)-m(x)|^2 \PROB_X (dx)	\Bigg)\\	
	&&
	\leq
	\nconst \cdot
	\left(
	\frac{ n^{\frac{1}{1+d} \cdot d + \epsilon}}{n} + \frac{(\log n)^{4L+3/2}}{ n^{\frac{2d}{1+d}}} + \frac{(\log
	n)^2} {n^{\frac{1}{1+d}}} + \frac{1}{n^{\frac{2p}{1+d}}} + \frac{\log n\cdot n^{\frac{2d}{1+d}}}{n^s} 
	\right)
	\\
	&&
	\leq
	\nconst \cdot
	n^{
		- \frac{1}{1+d} + \epsilon
	}
\end{eqnarray*}
for $s$ sufficiently large.
\hfill $\Box$

\subsection{Auxiliary results for the proof of Theorem \ref{th3}}

In order to prove Theorem 3, we use the following lemma, which controls the complexity of a set of over-parametrized deep neural networks for interaction models.

\begin{lemma}
	\label{le8}
	Let $\alpha \geq 1$, $\beta>0$ and let $A,B,C \geq 1$.
	Let $\sigma:\R \rightarrow \R$ be $k$-times differentiable
	such that all derivatives up to order $k$ are bounded on $\R$.
	Let $\F$
	be the set of all functions
	\[
	f_{\bw}(x) = \sum_{I \subseteq \{1, \dots, d\} \, : \, |I|=d^* } f_{\bw_I} (x_I)
	\]
	where $f_{\bw_I}$ is defined by (\ref{se2eq1})--(\ref{se2eq3}) with
	$d$ replaced by $d^*$ and weight vector $\bw_I$,
	\[
	\bw= \left( \bw_I \right)_{I \subseteq \{1, \dots, d\} \, : \, |I|=d^* },
	\]
	and where for any
	$I \subseteq \{1, \dots, d\}$ with $|I|=d^*$
	the weight vector $\bw_I$ satisfies
	\begin{equation}
		\label{le9eq1}
		\sum_{j=1}^{K_n} |(\bw_I)_{1,1,j}^{(L)}| \leq C,
	\end{equation}
	\begin{equation}
		\label{le9eq2}
		|(\bw_I)_{k,i,j}^{(l)}| \leq B \quad (k \in \{1, \dots, K_n\},
		i,j \in \{1, \dots, r\}, l \in \{1, \dots, L-1\})
	\end{equation}
	and
	\begin{equation}
		\label{le9eq3}
		|(\bw_I)_{k,i,j}^{(0)}| \leq A \quad (k \in \{1, \dots, K_n\},
		i \in \{1, \dots, r\}, j \in \{1, \dots,d\}).
	\end{equation}
	Then we have for any $1 \leq p < \infty$, $0 < \epsilon < \beta$ and
	$x_1^n \in [-\alpha,\alpha]^d$
	\begin{eqnarray*}
		&&
		\Nu_p \left(
		\epsilon, \{ T_\beta f  \, : \, f \in \F \}, x_1^n
		\right)
		\\
		&&
		\leq
		\left(
		\nconst \cdot \frac{\beta^p }{\epsilon^p}
		\right)^{
			\nconst \cdot \alpha^{d^*} \cdot A^{d^*} \cdot B^{(L-1) \cdot d^*} \left(\frac{C}{\epsilon}\right)^{d^*/k} + \nconst
		}.\\
	\end{eqnarray*}
\end{lemma}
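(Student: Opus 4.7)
The plan is to exploit the additive structure $f = \sum_{I : |I| = d^*} f_{\bw_I}$, where each summand depends only on the $d^*$ coordinates indexed by $I$, and reduce to Lemma \ref{le4} applied in dimension $d^*$ to each summand, combined via the triangle inequality. Set $N := \binom{d}{d^*}$; since $N$ depends only on $d$ and $d^*$, any factor of $N$ will be absorbed into the constants $c_i$ in the final bound.

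For each $I$ with $|I|=d^*$, let $\F_I := \{f_{\bw_I}\}$ denote the class of individual summands as functions of $x_I \in [-\alpha,\alpha]^{d^*}$; the projected sample $((x_i)_I)_{i=1}^n$ lies in $[-\alpha,\alpha]^{d^*}$. By (\ref{le9eq1}) and $|\sigma| \leq 1$, each $f \in \F_I$ is pointwise bounded by $C$, so any truncation at a level $\geq C$ is vacuous on $\F_I$. I would invoke Lemma \ref{le4} with $d$ replaced by $d^*$, sample $((x_i)_I)_{i=1}^n$, truncation level $\geq C$, and target radius $\epsilon/N$ to obtain an $L_p$-$(\epsilon/N)$-cover $\G_I = \{g_{I,1},\dots,g_{I,M_I}\}$ of $\F_I$ itself, of size $M_I$ bounded by Lemma \ref{le4}. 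I then form the product construction $\tilde g_{(j_I)}(x) := T_\beta\bigl(\sum_I g_{I,j_I}(x_I)\bigr)$, indexed by tuples $(j_I)_I \in \prod_I \{1,\dots,M_I\}$: given $f = \sum_I f_{\bw_I} \in \F$, picking $j_I$ so that $g_{I,j_I}$ is within $\epsilon/N$ of $f_{\bw_I}$ in $L_p$ on the projected sample, the triangle inequality yields $\|\sum_I g_{I,j_I}(\cdot_I) - f\|_{L_p(x_1^n)} \leq \epsilon$, and the $1$-Lipschitz property of $T_\beta$ transfers this to $\|T_\beta f - \tilde g_{(j_I)}\|_{L_p(x_1^n)} \leq \epsilon$. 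The total cover size $\prod_I M_I = M_I^N$ simply raises $M_I$ to the constant power $N$, which absorbs into the constants in the exponent to yield the claimed bound.

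The main obstacle is matching the base $\beta^p/\epsilon^p$ stated in the lemma with the base that one naturally obtains from Lemma \ref{le4} when applied at a truncation level $\geq C$ on $\F_I$ (which produces base $\max\{\beta,C\}^p/\epsilon^p$). This is benign because the exponent already contains the factor $(C/\epsilon)^{d^*/k}$: any correction of order $\log\max\{1,C/\beta\}$ in the logarithm of the cover size can be absorbed by a mild inflation of the constants (and, if needed, of $k$). Modulo this bookkeeping, the argument is a direct combination of Lemma \ref{le4}, the triangle inequality, and the $1$-Lipschitz property of $T_\beta$.
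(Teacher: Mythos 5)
The paper's proof of this lemma is a bare citation to Lemma~8 in \citet{KoKr22a}, so there is no in-text argument to compare against. Judged on its own terms, your product-cover construction is the natural reduction to Lemma~\ref{le4} and the triangle/Lipschitz step at the end is fine, but the issue you flag at the end is not merely bookkeeping: it is a genuine gap. To cover $\F_I$ itself (untruncated) via Lemma~\ref{le4} you must take the truncation level at least $C\cdot\|\sigma\|_\infty$ (not $C$, since $\sigma$ is only assumed bounded, not bounded by $1$), so the base you get from Lemma~\ref{le4} is $\bigl(\max\{\beta,C\|\sigma\|_\infty\}\bigr)^p/(\epsilon/N)^p$. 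Taking logarithms of the cover size, the discrepancy relative to the stated bound is the multiplicative factor
\[
\frac{\log\bigl(c\,N\,\max\{\beta,C\|\sigma\|_\infty\}/\epsilon\bigr)}{\log\bigl(c_1\beta/\epsilon\bigr)},
\]
and this ratio is not bounded by a constant uniformly in the parameters of the lemma: it blows up when $C/\beta\to\infty$ while simultaneously $\epsilon\uparrow\beta$ (so that the denominator stays bounded). Hence an ``inflation of the constants $c_i$'' cannot absorb it. The only way to absorb the excess $\log(C/\epsilon)$ is to dominate it by $(C/\epsilon)^{\gamma}$ for some $\gamma>0$ and fold it into the polynomial factor, which turns $(C/\epsilon)^{d^*/k}$ into $(C/\epsilon)^{d^*/k+\gamma}$ --- i.e., you prove the lemma with a strictly smaller effective smoothness order $k'<k$. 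That is not the statement you were asked to prove, and ``inflating $k$'' is not available here since $k$ is the number of bounded derivatives of $\sigma$ you are handed, not a tunable parameter. (For the specific downstream use in Theorem~\ref{th3}, where $\sigma$ is the logistic squasher and $k$ is chosen as large as desired, the weaker bound would suffice; but as a proof of Lemma~\ref{le8} as stated, there is a hole.) The clean fix, and presumably what \citet{KoKr22a} does, is to adapt the proof of Lemma~\ref{le4} directly to the additive class $\{T_\beta\sum_I f_{\bw_I}\}$ so that the truncation at level $\beta$ is applied once to the whole sum and the base $\beta^p/\epsilon^p$ comes out intact, rather than covering the untruncated summands.
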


\begin{proof}
	See Lemma 8 in \citet{KoKr22a}.
\end{proof}

\subsection{Proof of Theorem \ref{th3}}

Let $l=\lfloor \frac{1}{1+d^*} \log_2 n \rfloor$ and $N_n=n^{\nconst}$. Furthermore, let 
\begin{align*}
	\tilde{K}_n= \binom{d}{d^*}(l \cdot
(2^l+1)^{2d^*}+1)^3 \approx \nconst \cdot (\log n)^3 \cdot n^{\frac{6d^*}{1+d^*}}. 
\end{align*}
 Then (\ref{le7eq1}) holds for $\delta= n^{-1/(1+d^*)}$.
Define the weight vector $\bw$ such that the components are chosen according to Lemma 6 so that they approximate $m_I$.

Assumption (\ref{th4eq0}) and (\ref{th4eq1}) of Theorem \ref{th4} are satisfied for $\bw$ since
\begin{align*} 
\sum_{I \subseteq \{1, \dots, d\} \, : \, |I|=d^* } \sum_{k=1}^{\tilde{K}_n} |(\bw_I)_{1,1,k}^{(L)}|^2 \leq \nconst\cdot  n^{-\frac{2d^*}{d^*+1}}
\end{align*}
and
\begin{align*}
\sum_{I \subseteq \{1, \dots, d\} \, : \, |I|=d^* }	\left|  \sum_{k=1}^{\tilde{K}_n} (\bw_I)_{1,1,k}^{(L)} \cdot f_{(\bar{\bw}_I)_{k,1}}^{(L)}(x) \right| \leq \nconst \cdot \left(1+\frac{n^{2d^*/(1+d^*)}}{n^s} \right)
\end{align*}

hold.

Then we obtain by application of Lemma \ref{le8} the assertion of Theorem \ref{th4} for interaction models. Therefore the proof of Theorem \ref{th3} follows similarly to the proof of Theorem \ref{th2}. By applying the assertions of Theorem \ref{th4} and Lemma \ref{le7} we get for $\tau=\frac{1}{1+d^*}$ and $s$ sufficiently large
\begin{eqnarray*}
	&&
	\EXP \int | m_{n}(x)-m(x)|^2 \PROB_X (dx)
	\leq
	\nconst \cdot
	\Bigg(
	\frac{ n^{\frac{1}{1+d^*} \cdot d^* + \epsilon}}{n} + \mmconst{2} \cdot n^{-\frac{2d^*}{1+d^*}}\cdot (\log n)^{4L+3/2}
	\\
	&&
	\hspace*{2cm}
	+
	\sup_{
		(\bar{w}_{i,j,k}^{(l)})_{i,j,k,l} :
		\atop
		|\bar{w}_{i,j,k}^{(l)}-w_{i,j,k}^{(l)}| \leq \log n
		\, (l=0, \dots, L-1)
	}
	\int
	|
	\sum_{k=1}^{\tilde{K}_n}
	w_{1,1,k}^{(L)} \cdot f_{\bar{\bw},k,1}^{(L)}(x)-m(x)|^2 \PROB_X (dx)
	\Bigg)
	\\
	&&
	\leq
	\nconst \cdot
	\Bigg(
	\frac{ n^{\frac{1}{1+d^*} \cdot d^* + \epsilon}}{n} +  \frac{(\log n)^{4L+3/2}}{n^{\frac{2d^*}{1+d^*}}} + \frac{(\log n)^2}{n^{\frac{1}{1+d^*}} } + \frac{1}{n^{\frac{2p}{1+d^*}}} + \frac{\log n \cdot n^{\frac{2d^*}{1+d^*}}}{n^s} 
	\Bigg)
	\\
	&&
	\leq
	\nconst \cdot
	n^{
		- \frac{1}{1+d^*} + \epsilon
	}.
\end{eqnarray*}
\hfill $\Box$
	
	\printbibliography
	
	\appendix

	\section{Proof of Lemma \ref{le2}}
	\begin{proof}
		We have
		\begin{align*}
			\| \nabla_{\bw} F_n (\bw) \|^2
			&
		 =
		\sum_{k,i,j,l}
		\Bigg(
		\frac{2}{n}
		\sum_{s=1}^n
		(Y_s - f_\bw (X_s))
		\cdot
		\frac{\partial f_\bw}{\partial w_{k,i,j}^{(l)}}(X_s)
		\Bigg)^2
		\\
		&
		\leq
		4 \cdot
		\sum_{k,i,j,l}
		\frac{1}{n}
		\sum_{s=1}^n
		(Y_s - f_\bw (X_s))^2 \cdot \left( \frac{\partial f_\bw}{\partial w_{k,i,j}^{(l)}}(X_s) \right)^2
		\\
		&
		\leq
		\nconst \cdot K_n \cdot L \cdot r^2 \cdot d \cdot
		\max_{k,i,j,l,s}       \left(
		\frac{\partial f_\bw}{\partial w_{k,i,j}^{(l)}}(X_s)
		\right)^2
		\cdot
		\frac{1}{n}
		\sum_{s=1}^n
		(Y_s - f_\bw (X_s))^2.
		\end{align*}
		The partial derivative of $f$ is given by
		\begin{align}
		\frac{\partial f_\bw}{\partial w_{k,i,j}^{(l)}}(x)
		=&
		\sum_{s_{l+2}=1}^{r} \dots \sum_{s_{L-1}=1}^{r}
		f_{k,j}^{(l)}(x)
		\cdot
		\sigma^\prime \left(\sum_{t=1}^{r} w_{k,i,t}^{(l)} \cdot f_{k,t}^{(l)}(x) + w_{k,i,0}^{(l)} \right)
		\nonumber \\
		& \quad
		\cdot
		w_{k,s_{l+2},i}^{(l+1)} \cdot
		\sigma^\prime \left(\sum_{t=1}^{r} w_{k,s_{l+2},t}^{(l+1)} \cdot f_{k,t}^{(l+1)}(x) + w_{k,s_{l+2},0}^{(l+1)} \right)
		\cdot
		w_{k,s_{l+3},s_{l+2}}^{(l+2)}
		\nonumber \\
		& \quad
		\cdot
		\sigma^\prime \left(\sum_{t=1}^{r} w_{k,s_{l+3},t}^{(l+2)} \cdot f_{k,t}^{(l+2)}(x) + w_{k,s_{l+3},0}^{(l+2)} \right)
		\cdots
		w_{k,s_{L-1},s_{L-2}}^{(L-2)}
		\nonumber \\
		& \quad
		\cdot
		\sigma^\prime \left(\sum_{t=1}^{r} w_{k,s_{L-1},t}^{(L-2)} \cdot f_{k,t}^{(L-2)}(x) + w_{k,s_{L-1},0}^{(L-2)} \right)
		\cdot
		w_{k,1,s_{L-1}}^{(L-1)}
		\nonumber \\
		& \quad
		\cdot
		\sigma^\prime \left(\sum_{t=1}^{r} w_{k,1,t}^{(L-1)} \cdot f_{k,t}^{(L-1)}(x) + w_{k,1,0}^{(L-1)} \right)
		\cdot
		w_{1,1,k}^{(L)},
		\label{le2proofeq1}
		\end{align}
	where we have used the abbreviations
	\[
	f_{k,j}^{(0)}(x)
	=
	\left\{
	\begin{array}{ll}
		x^{(j)} & \mbox{if } j \in \{1,\dots,d\} \\
		1 & \mbox{if } j=0
	\end{array}
	\right.
	\]
	and
	\[
	f_{k,0}^{(l)}(x)=1 \quad (l=1, \dots, L-1).
	\]
	Together with (\ref{le2eq1}) and (\ref{le2eq2}) we obtain
	\[
	\max_{k,i,j,l,s}       \left(
	\frac{\partial f_\bw}{\partial w_{k,i,j}^{(l)}}(X_s)
	\right)^2
	\leq
	\nconst \cdot r^{2L} \cdot
	\max\{ \|\sigma^\prime\|_\infty^{2L},1\} \cdot B_n^{2L} \cdot (\gamma_n^*)^2 \cdot \alpha_n^2.
	\]

	In the next step of the proof we want to show, that
\[
|f_\bw(x)-f_\bv(x)| \leq 2 \cdot K_n \cdot \max\{\|\sigma'\|_\infty^L,1\} \cdot \gamma_n^* \cdot (2r+1)^L \cdot B_n^L \cdot \alpha_n \cdot \|\bw-\bv\|_\infty \cdot \max\{\|\sigma\|_\infty, 1\}.
\]
Let $\bar{f}_{k,i}^{(l)}$ be defined by
\begin{equation*}
	\bar{f}_{k,i}^{(l)}(x) = \sigma\left(\sum_{j=1}^{r} v_{k,i,j}^{(l-1)} \cdot \bar{f}_{k,j}^{(l-1)}(x) + v_{k,i,0}^{(l-1)} \right)
\end{equation*}
for $l=2, \dots, L$
and
\begin{equation*}
	\bar{f}_{k,i}^{(1)}(x) = \sigma \left(\sum_{j=1}^d v_{k,i,j}^{(0)} \cdot x^{(j)} + v_{k,i,0}^{(0)} \right).
\end{equation*}
First, we show by induction that
\begin{align}\label{le2proofeq2}
	|f_{i,j}^{(l)}(x) - \bar{f}_{i,j}^{(l)}(x)| 
	&\leq \max\{\|\sigma'\|_\infty^l,1\} \cdot (2r+1)^l \cdot B_n^l \cdot \alpha_n   \nonumber\\
	&\qquad \cdot \max_{i,j,s:s<L}|w_{k,i,j}^{(s)}-v_{k,i,j}^{(s)}| \cdot \max\{\|\sigma\|_\infty, 1\}
\end{align}
holds for $l = 1,\dots, L$ and $x \in [-\alpha_n,\alpha_n]^d$.

The function $\sigma$ is differentiable and its derivative is bounded, hence $\sigma$ is Lipschitz continuous with Lipschitz constant $\|\sigma'\|_\infty$. This implies
\begin{align*}
	|f_{i,j}^{(1)}(x) - \bar{f}_{i,j}^{(1)}(x)|&\leq \|\sigma'\|_\infty \cdot \left(\sum_{j=1}^{d}|w_{k,i,j}^{(0)} - v_{k,i,j}^{(0)}|\cdot |x^{(j)}| + |w_{k,i,0}^{(0)} - v_{k,i,0}^{(0)}| \right)\\
	& \leq \|\sigma'\|_\infty \cdot (2r+1) \cdot \alpha_n \cdot \max_{i,j,s:s<L}|w_{k,i,j}^{(s)}-v_{k,i,j}^{(s)}|.
\end{align*}
Assume (\ref{le2proofeq2}) holds for some $l-1$ with $l=2,\dots,L-1$. Then we have
\begin{align*}
	&\left|f_{i,j}^{(l)}(x) - \bar{f}_{i,j}^{(l)}(x)\right|\\
	&\leq \|\sigma'\|_\infty \cdot \Bigg(\sum_{j=1}^{r} \left|w_{k,i,j}^{(l-1)}\right| \cdot \left|f_{k,j}^{(l-1)}(x) - \bar{f}_{k,j}^{(l-1)}(x)\right|\\
	&\hspace{4cm} + \sum_{j=1}^{r}\left|w_{k,i,j}^{(l-1)} - v_{k,i,j}^{(l-1)}\right|\cdot \left|\bar{f}_{k,j}^{(l-1)}(x)\right| + \left|w_{k,i,0}^{(l)} - v_{k,i,0}^{(l-1)}\right|\Bigg) \\
	& \leq \|\sigma'\|_\infty \cdot \Bigg( r \cdot B_n \cdot \max_{j=1,\dots,r} \left|f_{k,j}^{(l-1)}(x) - \bar{f}_{k,j}^{(l-1)}(x)\right|\\
	& \hspace{4cm}+ (r+1) \cdot \max_{i,j,s:s<L}\left|w_{k,i,j}^{(s)} - v_{k,i,j}^{(s)}\right| \cdot \max\{\|\sigma\|_\infty,1\}\Bigg)\\
	& \leq \max\{\|\sigma'\|_\infty^l,1\} \cdot (2r+1)^{l} \cdot B_n^l \cdot\alpha_n \cdot \max_{i,j,s:s<L}\left|w_{k,i,j}^{(s)} - v_{k,i,j}^{(s)}\right|  \cdot \max\{\|\sigma\|_\infty,1\}.
\end{align*}
This implies
\begin{align*}
	&\left|f_{\bw}(x) - f_{\bv}(x)\right|\\
	&= \left|\sum_{j=1}^{K_n} w_{1,1,j}^{(L)} \cdot f_{j,1}^{(L)}(x) - \sum_{j=1}^{K_n} v_{1,1,j}^{(L)} \cdot \bar{f}_{j,1}^{(L)}(x) \right|\\
	&\leq \left|\sum_{j=1}^{K_n} w_{1,1,j}^{(L)}\left(f_{j,1}^{(L)}(x) - \bar{f}_{j,1}^{(L)}(x)\right)\right| + \left|\sum_{j=1}^{K_n} \left(w_{1,1,j}^{(L)}-v_{1,1,j}^{(L)}\right) \cdot \bar{f}_{j,1}^{(L)}(x) \right|\\
	&\leq K_n \cdot \max_{j} \left|w_{1,1,j}^{(L)}\right| \cdot \max_j \left|f_{j,1}^{(L)}(x) - \bar{f}_{j,1}^{(L)}(x)\right|\\
	&\hspace{1cm} + K_n \cdot \max_j \left|w_{1,1,j}^{(L)}-v_{1,1,j}^{(L)}\right| \cdot \max\{\|\sigma\|_\infty, 1\}\\
	&\leq 2 \cdot  K_n  \cdot \gamma_n^* \cdot \max\{\|\sigma'\|_\infty^L,1\} \cdot (2r+1)^L \cdot B_n^L \cdot \alpha_n \cdot \|\bw-\bv\|_\infty \cdot \max\{\|\sigma\|_\infty, 1\}.
\end{align*}
		Together with assumption (\ref{le2eq3}) we can conclude
		\begin{eqnarray*}
			&&
			\frac{1}{n}
			\sum_{s=1}^n
			(Y_s - f_\bw (X_s))^2 
			\\
			&&
			\leq
			2 \cdot F_n(\bv) +       \frac{2}{n}
			\sum_{s=1}^n
			(f_\bv(X_s) - f_\bw (X_s))^2 
			\\
			&&
			\leq 2 \cdot F_n(\bv) + 8 \cdot K_n^2 \cdot (\gamma_n^*)^2 \cdot  \max\{\|\sigma'\|_\infty^{2L},1\} \cdot (2r+1)^{2L} \cdot B_n^{2L} \cdot \alpha_n^2 
			\\
			&&
			\hspace*{2cm}
			\cdot \max\{\|\sigma\|_\infty,1\}^2\cdot \frac{8 t_n}{L_n} \cdot \max\{F_n(\bv) ,1\}.
		\end{eqnarray*}

		By summarizing the above results we obtain the assertion.
	\end{proof}
	
	\section{Proof of Lemma \ref{le3}}
	
	\begin{proof}
		
		We have
		\begin{eqnarray*}
			 &&
			\| \nabla_\bw F_n (\bw_1) -  \nabla_\bw F_n (\bw_2) \|^2
			\\
			&&
			=
			\sum_{k,i,j,l}
			\Bigg(
			\frac{2}{n}
			\sum_{s=1}^n
			(Y_s - f_{\bw_1} (X_s))
			\cdot
			\frac{\partial f_{\bw_1}}{\partial w_{k,i,j}^{(l)}}(X_s)
			\\
			&&
			\quad
			-
			\Bigg(
			\frac{2}{n}
			\sum_{s=1}^n
			(Y_s - f_{\bw_2} (X_s))
			\cdot
			\frac{\partial f_{\bw_2}}{\partial w_{k,i,j}^{(l)}}(X_s)     
			\Bigg)\Bigg)^2
			\\
			&&
			\leq
			8 \cdot
			\sum_{k,i,j,l}
			\left(
			\frac{1}{n}
			\sum_{s=1}^n
			(f_{\bw_2} (X_s) - f_{\bw_1} (X_s))
			\cdot
			\frac{\partial f_{\bw_1}}{\partial w_{k,i,j}^{(l)}}(X_s)
			\right)^2
			\\
			&&
			\quad
			+
			8 \cdot
			\sum_{k,i,j,l}
			\left(
			\frac{1}{n}
			\sum_{s=1}^n
			(Y_s-f_{\bw_2} (X_s) )
			\cdot
			\left(
			\frac{\partial f_{\bw_1}}{\partial w_{k,i,j}^{(l)}}(X_s)
			-
			\frac{\partial f_{\bw_2}}{\partial w_{k,i,j}^{(l)}}(X_s)
			\right)
			\right)^2
			\\
			&&
			\leq
			8 \cdot
			\sum_{k,i,j,l}
			\max_{s=1,\dots,n}
			\left(
			\frac{\partial f_{\bw_1}}{\partial w_{k,i,j}^{(l)}}(X_s)
			\right)^2
			\cdot
			\frac{1}{n}
			\sum_{s=1}^n
			(f_{\bw_2} (X_s) - f_{\bw_1} (X_s))^2\\
			&&
			\quad
			+
			8  \cdot
			\frac{1}{n}
			\sum_{s=1}^n
			(Y_s-f_{\bw_2} (X_s) )^2
			\cdot
			\sum_{k,i,j,l}
			\max_{s=1,\dots,n}
			\left(
			\frac{\partial f_{\bw_1}}{\partial w_{k,i,j}^{(l)}}(X_s)
			-
			\frac{\partial f_{\bw_2}}{\partial w_{k,i,j}^{(l)}}(X_s)
			\right)^2
			.
		\end{eqnarray*}
		From the proof of Lemma \ref{le2} we can conclude
	 \begin{eqnarray*}
		&&
		\sum_{k,i,j,l}
		\max_{s=1,\dots,n}
		\left(
		\frac{\partial f_{\bw_1}}{\partial w_{k,i,j}^{(l)}}(X_s)
		\right)^2\\
		&&
		\leq
		\nconst \cdot K_n \cdot L \cdot r^2 \cdot d \cdot
		r^{2L} \cdot
		\max\{ \|\sigma^\prime\|_\infty^{2L},1 \} \cdot B_n^{2L} \cdot (\gamma_n^*)^2 \cdot \alpha_n^2,      
	\end{eqnarray*}
	\begin{eqnarray*}
		&&
		\frac{1}{n}
		\sum_{s=1}^n
		(f_{\bw_2} (X_s) - f_{\bw_1} (X_s))^2
		\\
		&&
		\leq
		4 \cdot K_n^2 \cdot
		(\gamma_n^*)^2  \cdot \max\{\|\sigma'\|_\infty^{2L},1\}  \cdot (2r+1)^{2L} 
		\cdot B_n^{2L} \cdot \alpha_n^2 \cdot \max\{\|\sigma\|_\infty,1\}^2\|\bw_1-\bw_2\|^2,
	\end{eqnarray*}
		and
		\begin{eqnarray*}
			&&
			\frac{1}{n}
			\sum_{s=1}^n
			(Y_s-f_{\bw_2} (X_s) )^2
			\\
			&&
			\leq
			2 \cdot F_n(\bv)
			+ 8 \cdot  K_n^2 \cdot (\gamma_n^*)^2  \cdot 
			\max\{\|\sigma'\|_\infty^{2L},1\} \cdot (2r+1)^{2L}
			\cdot B_n^{2L} \cdot \alpha_n^2 \cdot \max\{\|\sigma\|_\infty,1\}^2
			\\
			&&
			\quad
			 \frac{8t_n}{L_n} \cdot \max\{F_n(\bv),1\}.
		\end{eqnarray*}
 So it remains to bound
	\[
	\sum_{k,i,j,l}
	\max_{s=1,\dots,n}
	\left(
	\frac{\partial f_{\bw_1}}{\partial w_{k,i,j}^{(l)}}(X_s)
	-
	\frac{\partial f_{\bw_2}}{\partial w_{k,i,j}^{(l)}}(X_s)
	\right)^2.
	\]
	By (\ref{le2proofeq1}) we know that
	\[
	\frac{\partial f_{\bw}}{\partial w_{k,i,j}^{(l)}}(x)
	\]
	 for fixed $x \in [-\alpha_n,\alpha_n]^d$  is a sum of at most $r^{L-2}$ products where each product contains at most $2L+1$ factors. Each of these products contains at most $L$ factors, that are bounded in absolute value by $B_n$ except the last one, which is bounded in absolute value by $\gamma_n^*$.  Considered as a function in $\bw$, these products are Lipschitz continuous with a Lipschitz constant bounded by $1$.

 According to the proof of Lemma \ref{le2} we know that $f_{k,j}^{(l)}(x)$, which is either bounded by $\|\sigma\|_\infty$ or $\alpha_n$, is Lipschitz continuous with a Lipschitz constant bounded by $\max\{\|\sigma'\|_\infty^{l},1\} \cdot \max\{\|\sigma\|_\infty,1\} \cdot (2r+1)^l \cdot B_n^{l} \cdot \alpha_n$. The remaining at most $L$ factors are bounded by $\max\{\|\sigma^\prime\|_\infty,1\}$ with a Lipschitz constant bounded by $\nconst \cdot (2r+1)^L  \cdot B_n^L \cdot \alpha_n \cdot \max\{\|\sigma\|_\infty,1\}$.

The assertion follows from the following result:
If $g_1,\dots, g_s: \mathbb{R} \rightarrow \mathbb{R}$ are Lipschitz continuous functions with Lipschitz constants $C_{Lip,g_1},\dots,C_{Lip,g_s}$, then
\[
\prod_{l=1}^{s} g_l \text{ and } \sum_{l=1}^{s} g_l
\]
are Lipschitz continuous functions with Lipschitz constant bounded by

\[
\sum_{l=1}^{s} C_{Lip,g_l} \cdot \prod_{k \in \{1,\dots,s\}\setminus\{l\}}\|g_k\|_\infty \leq s \cdot \max_l C_{Lip,g_l} \cdot \prod_{k \in \{1,\dots,s\}\setminus\{l\}} \|g_k\|_\infty
\]
and by
\[
\sum_{l=1}^{s} C_{lip,g_l} \leq s \cdot \max_L C_{Lip,g_l}
\]
respectively.

This together with the fact that $\sigma$ and $\sigma'$ are bounded yields
\begin{align*}
	&\sum_{k,i,j,l}
	\max_{s=1,\dots,n}
	\left(
	\frac{\partial f_{\bw_1}}{\partial w_{k,i,j}^{(l)}}(X_s)
	-
	\frac{\partial f_{\bw_2}}{\partial w_{k,i,j}^{(l)}}(X_s)
	\right)^2
 \leq \nconst \cdot  K_n \cdot B_n^{4L} \cdot \alpha_n^4 \cdot (\gamma_n^*)^2  \cdot \|\bw_1-\bw_2\|^2.
\end{align*}

		Summarizing the above results we get the assertion of Lemma \ref{le3}.
		
	\end{proof}
	
\end{document}